\numberwithin{equation}{section}
\theoremstyle{plain}
\newtheorem{theorem}{Theorem}
\newtheorem{mydef}{Definition}
\newtheorem{corollary}{Corollary}
\newtheorem{lemme}{Lemma}
\newcommand{\Dn}{\mathscr{D}_{n}}
\newcommand{\bX}{\textbf{X}}
\newcommand{\bestcut}{\mathcal{C}}
\newcommand{\bx}{\textbf{x}}
\renewcommand{\P}{\mathds{P}}
\newcommand{\R}{\mathds{R}}
\newcommand{\E}{\mathbb{E}}
\renewcommand{\path}{\mathscr{P}}
\newcommand{\pathset}{\hat{\path}}
\newcommand{\B}{\bfseries}
\newcommand{\setpstar}{\mathcal{U}^{\star}}
\newcommand{\setpemp}{\mathcal{U}_n}
\begin{document}
\emergencystretch 3em

\title{\textbf{SIRUS: Stable and Interpretable RUle Set for Classification}}
\author{Cl\'ement B\'enard\thanks{Safran Tech, Sorbonne Universit\'e} \hspace*{3mm}
G\'erard Biau \thanks{Sorbonne Universit\'e} \hspace*{3mm}
S\'ebastien Da Veiga \thanks{Safran Tech} \hspace*{3mm}
Erwan Scornet\thanks{Ecole Polytechnique}}
\date{}

\maketitle

\begin{abstract}
State-of-the-art learning algorithms, such as random forests or neural networks, are often qualified as ``black-boxes'' because of the high number and complexity of operations involved in their prediction mechanism. This lack of interpretability is a strong limitation for applications involving critical decisions, typically the analysis of production processes in the manufacturing industry. In such critical contexts, models have to be interpretable, i.e., simple, stable, and predictive. 
To address this issue, we design SIRUS (Stable and Interpretable RUle Set), a new classification algorithm based on random forests, which takes the form of a short list of rules.
While simple models are usually unstable with respect to data perturbation, SIRUS achieves a remarkable stability improvement over cutting-edge methods. Furthermore, SIRUS inherits a predictive accuracy close to random forests, combined with the simplicity of decision trees. These properties are assessed both from a theoretical and empirical point of view, through extensive numerical experiments based on our \texttt{R/C++} software implementation \texttt{sirus} available from \texttt{CRAN}. \vspace*{1mm} \\
\textbf{Keywords:} classification, interpretability, rules, stability, random forests.
\end{abstract}

\section{Introduction} \label{sec_intro}

State-of-the-art learning algorithms, typically tree ensembles or neural networks, are well-known for their remarkable predictive performance. However, this high accuracy comes at the price of complex prediction mechanisms: a large number of operations are computed for a given prediction. Because of this complexity, learning algorithms are often considered as black-boxes. This lack of interpretability is a serious limitation for many applications involving critical decisions, such as healthcare, criminal justice, or industrial process optimization. This latter example is interesting to illustrate how interpretability can be essential.
Indeed, in the manufacturing industry, production processes involve complex physical and chemical phenomena, whose control and efficiency are of critical importance. In practice, data is collected along the manufacturing line, describing both the production environment and its conformity.
The retrieved information enables to infer a link between the manufacturing conditions and the resulting quality at the end of the line, and then to increase the process efficiency.
Since the quality of the produced entities is often characterized by a pass or fail output, the problem is in fact a classification task, and state-of-the-art learning algorithms can successfully catch patterns of these complex and nonlinear physical phenomena.
However, any decision impacting the production process has long-term and heavy consequences, and therefore cannot simply rely on a blind stochastic modelling.
As a matter of fact, a deep physical understanding of the forces in action is required, and this makes black-box algorithms inappropriate. In a word, models have to be interpretable, i.e., provide an understanding of the internal mechanisms that build a relation between inputs and outputs, to provide insights to guide the physical analysis.
This is for example typically the case in the aeronautics industry, where the manufacturing of engine parts involves sensitive casting and forging processes.
Interpretable models allow us to gain knowledge on the behavior of such production processes, which can lead, for instance, to identify or fine-tune critical parameters, improve measurement and control, optimize maintenance, or deepen understanding of physical phenomena. 
In the following paragraphs, we deepen the discussion about the definition of interpretability to highlight the limitations of the most popular interpretable nonlinear models: decision trees and rule algorithms \citep{guidotti2018survey}. Despite their high predictivity and simple structure, these methods are unstable, which is a strong operational limitation. The goal of this article is to introduce \textbf{SIRUS} (\textbf{S}table and \textbf{I}nterpretable \textbf{RU}le \textbf{S}et),
an interpretable rule classification algorithm which considerably improves stability over state-of-the-art methods, while preserving their simple structure, accuracy, and computational complexity.

As stated in \cite{ruping2006learning}, \cite{lipton2016mythos}, \cite{doshi2017towards}, or \cite{murdoch2019interpretable}, to date, there is no agreement in statistics and machine learning communities about a rigorous definition of interpretability. There are multiple concepts behind it, many different types of methods, and a strong dependence on the area of application and the audience. Here, we focus on models intrinsically interpretable, which directly provide insights on how inputs and outputs are related, as opposed to the post-processing of black-box models. In that case, we argue that it is possible to define minimum requirements for interpretability through the triptych ``simplicity, stability, and predictivity'',
in line with the framework recently proposed by \cite{yu2019three}.
Indeed, in order to grasp how inputs and outputs are related, the structure of the model has to be simple. The notion of simplicity is implied whenever
interpretability is invoked \citep[e.g.,][]{ruping2006learning, freitas2014comprehensible, letham2015statistical, letham2015interpretable, lipton2016mythos, ribeiro2016should, murdoch2019interpretable}
and essentially refers to the model size, complexity, or the number of operations performed in the prediction mechanism.
\cite{yu2013stability} defines stability as another fundamental requirement for interpretability: conclusions of a statistical analysis have to be robust to small data perturbations to be meaningful. Indeed, a specific analysis is likely to be run multiple times, eventually adding a small new batch of data, and an interpretable algorithm should be insensitive to such modifications. Otherwise, unstable models provide us with a partial and arbitrary analysis of the underlying phenomena, and arouses distrust of the domain experts.
Finally, if the predictive accuracy of an interpretable model is significantly lower than the one of a state-of-the-art black-box algorithm,
it clearly misses strong patterns in the data and will therefore be useless, as explained in \cite{breiman2001statistical}. For example, the trivial model that outputs the empirical mean of the observations for any input is simple, stable, but brings in most cases no useful information. Thus, we add a good predictivity as an essential requirement for interpretability.

Decision trees are a class of supervised learning algorithms that recursively partition the input space and make local decisions in the cells of the resulting partition. Trees can model highly nonlinear patterns while having a simple structure, and are therefore good candidates when interpretability is required. 
However, trees are unstable to small data perturbations \citep{oates1997ects, guidotti2019stability}. More precisely, as explained in \cite{breiman2001statistical}: by randomly removing only $2-3\%$ of the training data, the tree structure can be quite different, which is a strong limitation to their practical use. 
Another class of supervised learning methods that can model nonlinear patterns while retaining a simple structure are the so-called rule models. As such, a rule is defined as a conjunction of
constraints on input variables, which form a hyperrectangle in the input space where the estimated output is constant. A collection of rules is combined to form a model. Here, the term ``rule'' does not stand for ``classification rule'' but, as is traditional in the rule learning literature, to a piecewise constant estimate that simply reads ``if \textit{conditions on $\bx$}, then \textit{response}, else \textit{default response}''. Despite their simplicity and excellent predictive skills, rule algorithms are unstable and, from this point of view, share the same limitation as decision trees \citep{letham2015interpretable, murdoch2019interpretable}. 

In line with the above, we design SIRUS in the present paper, a new rule classification algorithm which inherits an accuracy close to random forests and the simplicity of decision trees, while having a stable structure. The core aggregation principle of random forests is kept, but instead of aggregating predictions, SIRUS focuses on the probability that a given hyperrectangle (i.e., a node) is contained in a randomized tree. The nodes with the highest probability are robust to data perturbation and represent strong patterns. They are therefore selected to form a stable rule ensemble model.
Here, we provide a first illustration of SIRUS with a simple and real case: the Titanic dataset \citep{titanicdata}. The survival status of $887$ passengers are recorded, as well as various personal characteristics: age, sex, class, number of siblings and parents aboard, and the paid fare. SIRUS outputs the following simple set of $7$ rules, which enables to grasp at a glance the main patterns to explain passenger survival:
\begin{center}
\setlength{\fboxrule}{1pt}
\fbox{\begin{minipage}{0.71\textwidth}
\textbf{Average survival rate} $p_s = 39 \%$. \\[1em]
\begin{tabular}{c c c c c c}
\textbf{if} & \texttt{sex} is male & \textbf{then} & $p_s = 19\%$ & \textbf{else} & $p_s = 74\%$ \\[0.7em]
\textbf{if} & $1^{st}$ or $2^{nd}$ \texttt{class} & \textbf{then} & $p_s = 56\%$ & \textbf{else} & $p_s = 24\%$ \\[0.7em]
\textbf{if} & \begin{tabular}{c} $1^{st}$ or $2^{nd}$ \texttt{class} \\[-0.3em] \textbf{\&} \texttt{sex} is female \end{tabular} & \textbf{then} & $p_s = 95\%$ & \textbf{else} & $p_s = 25\%$ \\[0.7em]
\textbf{if} & \texttt{fare} $< 10.5$£  & \textbf{then} & $p_s = 20\%$ & \textbf{else} & $p_s = 50\%$ \\[0.7em]
\textbf{if} &  \begin{tabular}{c} no \texttt{parents or} \\[-0.3em] \texttt{children aboard} \end{tabular} & \textbf{then} & $p_s = 35\%$ & \textbf{else} & $p_s = 51\%$ \\[0.7em]
\textbf{if} & \begin{tabular}{c} $2^{st}$ or $3^{nd}$ \texttt{class} \\[-0.3em] \textbf{\&} \texttt{sex} is male \end{tabular} & \textbf{then} & $p_s = 14\%$ & \textbf{else} & $p_s = 64\%$ \\[0.7em]
\textbf{if} & \begin{tabular}{c}  \texttt{sex} is male \\[-0.3em] \textbf{\&} \texttt{age} $\geq 15$ \end{tabular} & \textbf{then} & $p_s = 16\%$ & \textbf{else} & $p_s = 72\%$ 
\end{tabular}
\end{minipage}}
\end{center}
To generate the prediction for a new query point $\bx$, SIRUS checks for each rule whether the conditions are satisfied to assign one of the two possible $p_s$ output values. Let us say for example that $x^{(sex)}$ is female, then $\bx$ satisfies the condition of the first rule, which returns $p_s = 74\%$. Next, the $7$ rule outputs are averaged to provide the predicted probability of survival for $\bx$. 
The model is stable: when a $10$-fold cross-validation is run to simulate data perturbation, $5$ to $6$ rules are consistent across two folds in average. The model error (1-AUC) is $0.17$, close to the $0.13$ of random forests, whereas simplicity is drastically increased: $7$ rules versus about $10^4$ operations for a forest prediction.

First, we review the main rule algorithms and present their mechanism principles in Section \ref{sec_literature}.
Next, Section \ref{sec_algo} is devoted to the detailed description of SIRUS. One of the main contributions of this work is the development of a software implementation, via the \texttt{R/C++} package \texttt{sirus} \citep{Rsirus} available from \texttt{CRAN}, based on \texttt{ranger}, a high-performance random forest implementation \citep{wright2017ranger}. In Section \ref{sec_theory}, we show that the good empirical behavior of SIRUS is theoretically understood by proving its asymptotic stability. Then, in Section \ref{sec_xp}, we illustrate the efficiency of our algorithm through numerical experiments on real datasets. Finally, Section \ref{sec_conclusion} summarizes the main contributions of the article and provides directions for future research.

\section{Related Work} \label{sec_literature}

As stated in the introduction, SIRUS has two types of competitors: decision trees and rule algorithms. More precisely, the latter can further be split into three different kinds: classical rule algorithms based on greedy heuristics, those built on top of frequent pattern mining algorithms, and those extracted from tree ensembles.

Decision trees may be the most popular competitors of SIRUS because of their simple structure. The main algorithms are CART \citep{leo1984classification} and C5.0 \citep{quinlan1992c4}. However, trees are unstable as we have already highlighted.
A widespread method to stabilize decision trees is bagging \citep{breiman1996bagging}, in which multiple trees are grown on perturbed data and aggregated together. Random forests is an algorithm developped by \cite{breiman2001random} that improves over bagging by randomizing the tree construction. Predictions are stable, accuracy is increased, but the final model is unfortunately a black box. Thus, simplicity of trees is lost, and some post-treatment mechanisms are needed to understand how random forests make their decisions. Nonetheless, even if they are useful, such treatments only provide partial information and can be difficult to operationalize for critical decisions \citep{rudin2018please}. For example, variable importance \citep{breiman2001random, breiman2003atechnical} identifies variables that have a strong impact on the output, but not which inputs values are associated to output values of interest. Similarly, local approximation methods such as LIME \citep{ribeiro2016should} or \citet{tolomei2017interpretable} do not provide insights on the global relation.

Rule learning originates from the influential AQ system of \cite{michalski1969quasi}.
Many algorithms based on greedy heuristics were subsequently developped in the 1980's and 1990's, including 
Decision List \citep{rivest1987learning}, CN2 \citep{clark1989cn2}, FOIL \citep[First-Order Inductive Learner,][]{quinlan1990learning,quinlan1995induction}, IREP \citep[Incremental Reduced Error Pruning,][]{furnkranz1994incremental}, RIPPER \citep[Repeated Incremental Pruning to Produce Error Reduction,][]{cohen1995fast}, 
PART \cite[Partial Decision Trees,][]{frank1998generating}, SLIPPER \cite[Simple Learner with Iterative Pruning to Produce Error Reduction,][]{cohen1999simple}, LRI \cite[Leightweight Rule Induction,][]{weiss2000lightweight}, and ENDER \citep[Ensemble of Decision Rules,][]{dembczynski2010ender}.
Since these methods are based on greedy heuristics, they are computationally fast, but similarly to decision trees, they are unstable and their accuracy is often limited.

At the end of the 1990's a new type of rule algorithms based on frequent pattern mining is introduced with CBA \citep[Classification Based on Association Rules,][]{liu1998integrating}, then extended with CPAR \citep[Classification based on Predictive Association Rules,][]{yin2003cpar}. Frequent pattern mining is originally used to identify frequent occurrences in database mining. Since the output $Y \in \{0,1\}$ is discrete and the input data can be discretized, we can generate candidate rules for classification by identifying frequent patterns associated with each output label. This exhaustive search for association rules is computationally costly (exponential with the input dimension), and efficient heuristics are used, essentially Apriori \citep{agrawal1993mining} and Eclat \citep{zaki1997parallel}.
The rule aggregation mechanism is specific to each algorithm.
More recently, BRL \citep[Bayesian Rule List,][]{letham2015interpretable} uses a more sophisticated Bayesian framework for the rule aggregation than the simple approach of CBA and CPAR, while IDS \citep[Interpretable Decision Sets]{lakkaraju2016interpretable} uses a multi-objective optimization to select interpretable rules. Finally, CORELS \citep[Certifiably Optimal RulE ListS]{angelino2017learning} generates optimal rule lists for categorical data.
Interestingly, these methods exhibit quite good stability properties as we will see, higher than decision trees, but on the other hand, their predictive accuracy is worse.

The last decade has seen a resurgence of rule models through powerful algorithms based on rule extraction from tree ensembles, especially RuleFit \citep{friedman2008predictive} and Node harvest \citep{meinshausen2010node}. Notice that SIRUS is also based on this principle. More specifically, RuleFit extracts all the rules of a boosted tree ensemble \citep{friedman2003importance}, while Node harvest is based on random forests. Then, the extracted rules are linearly combined in a sparse linear model, respectively a logistic regression with a Lasso penalty \citep{tibshirani1996regression} for RuleFit, and a constraint quadratic linear program for Node harvest. These two methods have a computational complexity comparable to random forests and SIRUS, since the main step of all these algorithms is to grow a tree ensemble with a large number of trees. However, both algorithms are unstable, and both output quite complex and long lists of rules. Even running RuleFit or Node harvest multiple times on the same dataset produces quite different rule lists because of the randomness in the tree ensembles---see Appendix \ref{appendix_stability}. On the other hand, SIRUS is built to have its structure converged for the given dataset, as explained later in Section \ref{sec_algo}.

To the best of our knowledge, the signed iterative random forest method  \citep[s-iRF,][]{kumbier2018refining} is the only procedure that tackles both rule learning and stability. Using random forests, s-IRF manages to extract stable signed interactions, i.e., feature interactions enriched with a thresholding behavior for each variable, lower or higher, but without specific thresholding values.
Therefore, s-IRF can be difficult to operationalize since it does not provide any specific input thresholds, and thus no precise information about the influence of input variables. On the other hand, an explicit rule model identifies specific regions of interest in the input space.

\section{SIRUS Algorithm} \label{sec_algo}
Within the general framework of supervised (binary) classification, we assume to be given an i.i.d.~sample $\mathscr{D}_{n} = \{(\bX_{i},Y_{i}), i=1, \hdots, n\}$. Each $(\bX_i,Y_i)$ is distributed as the generic pair $(\bX,Y)$ independent of $\mathscr{D}_{n}$, where $\bX=(X^{(1)},\hdots,X^{(p)})$ is a random vector taking values in $\mathds R^p$ and $Y \in \{0,1\}$ is a binary response. Throughout the document, the distribution of $(\bX,Y)$ is assumed to be unknown and is denoted by $\mathds P_{\bX,Y}$. For $\bx \in \mathds R^p$, our goal is to accurately estimate the conditional probability $\eta(\bx) = \mathds P (Y = 1 |\bX = \bx)$ with few simple and stable rules. 

To tackle this problem, SIRUS first builds a (slightly modified) random forest. Next, each hyperrectangle of each tree of the forest is turned into a simple decision rule, and the collection of these elementary rules is ranked based on their frequency of appearance in the forest. Finally, the most significant rules are retained and are averaged together to form an ensemble model.
We describe the four steps of SIRUS algorithm in the following paragraphs: the rule generation, rule selection, rule post-treatment, and the rule aggregation. This section ends with a discussion of SIRUS stability.

\paragraph{Rule generation.}
SIRUS uses at its core the random forest method \citep{breiman2001random}, slightly modified for our purpose. As in the original procedure, each single tree in the forest is grown with a greedy heuristic that recursively partitions the input space using a random variable $\Theta$.
The essential difference between our approach and Breiman's one is that, prior to all tree constructions, the empirical $q$-quantiles of the marginal distributions over the whole dataset are computed: in each node of each tree, the best split can be selected among these empirical quantiles only. 
This constraint is critical to stabilize the forest structure and keeps almost intact the predictive accuracy, provided $q$ is not too small (typically of the order of 10---see the experimental Subsection \ref{subsec_parameters}). Apart from this difference, the tree growing is similar to Breiman's original procedure. The tree randomization $\Theta$ is independent of the sample and has two independent components, denoted by $\Theta^{(S)}$ and $\Theta^{(V)}$, which are respectively used
for the subsampling mechanism and randomization of the split direction. 
Throughout the manuscript, we let $\hat{q}_{n,r}^{(j)}$ be the empirical $r$-th $q$-quantile of $\{X_1^{(j)}, \hdots, X_n^{(j)}\}$, with typically $q = 10$.
The construction of the individual trees is summarized in Algorithm \ref{alg:split-cell} below.
\begin{algorithm}[htbp]
\caption{Tree construction }
\label{alg:split-cell}
\begin{algorithmic}[1]
  \STATE \textbf{Parameters:} Number of quantiles $q$, number of subsampled observations $a_n$, number of eligible directions for splitting $\texttt{mtry}$.
  \STATE Compute the empirical $q$-quantiles for each marginal distribution over the whole dataset. 
  \STATE Subsample with replacement $a_n$ observations, indexed by $\Theta^{(S)}$. Only these observations are used to build the tree.
  \STATE Initialize the cell $H$ as the root of the tree.
  \STATE Draw uniformly at random a subset $\Theta^{(V)} \subset \{1, \hdots, p\}$ of cardinality \texttt{mtry}.
  \STATE For all $j \in  \Theta^{(V)}$, compute the CART-splitting criterion at all empirical $q$-quantiles of $X^{(j)}$ that split the cell $H$ into two non-empty cells.
  \STATE Choose the split that maximizes the CART-splitting criterion.
  \STATE Recursively repeat \textbf{lines $5-7$} for the two resulting children cells $H_L$ and $H_R$.
\end{algorithmic}
\end{algorithm}

The main step of SIRUS is to extract rules from the modified random forest. The cornerstone of this extraction mechanism is the notion of path in a decision tree. Indeed, a path describes the sequence of splits to go from the root of the tree to a specific (inner or terminal) node. Since a hyperrectangle is associated to each node, a rule can be defined as a piecewise constant estimate with this hyperrectangle as support.
Therefore, to rigorously define the rule extraction, we introduce the symbolic representation of a path in a tree. We insist that such definition is valid for both terminal leaves and inner nodes, which are all used by SIRUS.
To begin, we follow the example shown in Figure \ref{fig_paths} with a tree of depth $2$ partitioning the input space $\R^2$. 
For instance, let us consider the node $\path_6$ defined by the sequence of two splits $\smash{X_i^{(2)} \geq \hat{q}_{n,4}^{(2)}}$
and $\smash{X_i^{(1)} \geq \hat{q}_{n,7}^{(1)}}$. The first split is symbolized by the triplet $(2,4,R)$, whose components respectively stand for the variable index $2$, the quantile index $4$, and the right side $R$ of the split. Similarly, for the second split we cut coordinate 1 at quantile index 7, and pass to the right. Thus, the path to the considered node is defined by $\path_6 =\{ (2,4,R), (1,7,R)\}$. Also notice that the first split already defines the path $\path_2 = \{(2,4,R)\}$, associated to the right inner node at the first level of the tree.
Of course, this generalizes to each path $\mathscr P$ of length $d$ under the symbolic compact form
\[
\mathscr{P} =\{ (j_{k},r_{k},s_{k}), \, k = 1,\hdots,d\},
\]
where, for $k \in \{1,\hdots,d\}$, the triplet $(j_{k},r_{k},s_{k})$ describes how to move from level $(k-1)$ to level $k$, with
a split using the coordinate $j_k\in \{1,\hdots,p\}$, the index $r_{k}\in \{1,\hdots,q - 1\}$ of the corresponding quantile, and
a side $s_{k} = L$ if we go the the left and $s_{k} = R$ if we go to the right.
\begin{figure}
\centering
\begin{tikzpicture}

\draw[->] (-0.7,0) -- (5.5,0);
\draw (5.5,0) node[right] {$x^{(1)}$};
\draw [->] (0,-0.7) -- (0,5);
\draw (0,5) node[above] {$x^{(2)}$};
\draw [color = blue, line width = 0.3 mm] (3.5,2) -- (3.5,5);
\draw [dashed] (3.5,2) -- (3.5,-0.7);
\draw [color = blue, line width = 0.3 mm] (2.75,2) -- (2.75,-0.7);
\draw (4,0) node[below] {$\hat{q}_{n,7}^{(1)}$};
\draw (2,0) node[below] {$\hat{q}_{n,5}^{(1)}$};
\draw [color = blue, line width = 0.3 mm] (5.5,2) -- (-0.7,2);
\draw (0,2) node[below left] {$\hat{q}_{n,4}^{(2)}$};
\draw (0,3.5) node[right, color = blue] {\begin{tabular}{c} $\path_5 =\{ (2,4,R),$ \\ $(1,7,L)\}$ \end{tabular}};
\draw (3.3,3.5) node[right, color = blue] {\begin{tabular}{c} $\path_6 =\{ (2,4,R),$ \\ $(1,7,R)\}$ \end{tabular}};
\draw (-0.35,0.7) node[right, color = blue] {\begin{tabular}{c} $\path_3 =\{ (2,4,L),$ \\ $(1,5,L)\}$ \end{tabular}};
\draw (2.5,1) node[right, color = blue] {\begin{tabular}{c} $\path_4 =\{ (2,4,L),$ \\ $(1,5,R)\}$ \end{tabular}};

\draw (7.5,2.5) --(9,5) --(10.5,2.5);
\node[below left] at (8.5,4.7) {$X_i^{(2)} < \hat{q}_{n,4}^{(2)}$};
\node[below right] at (9.5,4.7) {$X_i^{(2)} \geq \hat{q}_{n,4}^{(2)}$};
\node[above left, color = blue] at (7.6,2.4) {$\path_1$};
\node[above right, color = blue] at (10.4,2.4) {$\path_2$};
\draw (9.5,0) --(10.5,2.5) --(11.5,0);
\node[left] at (10.9,1.2) {$X_i^{(1)} < \hat{q}_{n,7}^{(1)}$};
\node[right] at (10.1,0.6) {$X_i^{(1)} \geq \hat{q}_{n,7}^{(1)}$};
\node[below, color = blue] at (9.5,0) {$\path_5$};
\node[below, color = blue] at (11.5,0) {$\path_6$};
\draw (6.5,0) --(7.5,2.5) --(8.5,0);
\node[left] at (7.8,1.2) {$X_i^{(1)} < \hat{q}_{n,5}^{(1)}$};
\node[right] at (7.3,0.6) {$X_i^{(1)} \geq \hat{q}_{n,5}^{(1)}$};
\node[below, color = blue] at (6.5,0) {$\path_3$};
\node[below, color = blue] at (8.5,0) {$\path_4$};

\end{tikzpicture}
\caption{\small{Example of a root node $\R^2$ partitionned by a randomized tree of depth 2: the tree on the right side, the associated paths and hyperrectangles of length $d=2$ on the left side.}}
\label{fig_paths}
\end{figure}
The set of all possible such paths is denoted by $\Pi$. It is important to note that $\Pi$ is in fact a deterministic (that is, non random) quantity, which only depends upon the dimension $p$ and the order $q$ of the quantiles.
Of course, given a path $\mathscr{P} \in \Pi$ one can recover the hyperrectangle (i.e., the tree node) $\hat{H}_{n}(\path)$ associated with $\mathscr{P}$ and the entire dataset $\Dn$ via the correspondence
\begin{align} \label{eq_emp_H}
\hat{H}_{n}(\path)=\left\{ \bx \in \mathds{R}^{p}:
\begin{cases}
\bx^{(j_{k})}<\hat{q}_{n,r_{k}}^{(j_{k})} & \textrm{if }s_{k}=L\\
\bx^{(j_{k})}\geq\hat{q}_{n,r_{k}}^{(j_{k})} & \textrm{if }s_{k}=R
\end{cases}\;,\, k=1,\hdots,d \right\}.
\end{align}
Finally, an elementary rule $\hat{g}_{n,\path}$ can be defined from $\hat{H}_{n}(\path)$ as a piecewise constant estimate: $\hat{g}_{n,\path}(\bx)$ returns the empirical probability that the output $Y$ is of class $1$ conditional on whether the query point $\bx$ belongs to $\hat{H}_{n}(\path)$ or not. Thus, the rule $\hat{g}_{n,\path}$ associated to the path $\path \in \Pi$ is formally defined by 
\begin{align*}
\forall \bx \in \R^{p}, \quad \hat{g}_{n,\path} (\bx) = 
\begin{cases}
\frac{1} { N_n (\hat{H}_{n}(\path) )}
\sum_{i = 1}^{n} Y_{i} \mathds{1}_{\bX_{i} \in \hat{H}_{n}(\path)} &\textrm{if}~\bx \in \hat{H}_{n}(\path)\\
\frac{1} { n - N_n (\hat{H}_{n}(\path) )}
\sum_{i = 1}^{n} Y_{i} \mathds{1}_{\bX_{i} \notin \hat{H}_{n}(\path)}&\textrm{otherwise}
\end{cases},
\end{align*}
using the convention $0/0=0$, and where $N_n(\hat{H}_n(\path))$ is the number of observations in the node associated with $\path$.
This formal definition can be illustrated with the Titanic dataset presented in the introduction. For the fourth rule, \texttt{fare} is the $6$th variable and since $\hat{q}_{n,4}^{(6)} = 10.5$, the corresponding path is $\path = \{(6,4,L)\}$, and the associated rule is thus
\begin{align*}
\hat{g}_{n,\path} (\bx) = 
\begin{cases}
0.20 &\textrm{if}~x^{(6)} < 10.5\\
0.50 &\textrm{if}~x^{(6)} \geq 10.5
\end{cases}.
\end{align*}
Finally, a $\Theta$-random tree generates a collection of paths in $\Pi$, one for each internal and terminal nodes. In the sequel, we let $T(\Theta,\mathscr {D}_{n})$ be the list of such extracted paths, a random subset of $\Pi$.

\paragraph{Rule selection.} Using our modified random forest algorithm, we are able to generate a large number $M$ of trees, randomized by $\Theta_1, \hdots, \Theta_M$, i.i.d.~copies of the generic variable $\Theta$, and then to extract a large collection of rules.
Since we are interested in selecting the most important rules, i.e., those which  represent strong patterns between the inputs and the output, we select rules that are shared by a large portion of trees. Such occurrence frequency is formally defined by
\[
\hat{p}_{M,n}(\path)=\frac{1}{M}\sum_{\ell=1}^{M}\mathds{1}_{\path\in T(\Theta_{\ell},\mathscr{D}_{n})},
\]
which is the Monte-Carlo estimate of the probability that a path $\path$ belongs to a $\Theta$-random tree, that is
\[
p_n(\mathscr{P})=\mathbb{P}(\mathscr{P}\in T(\Theta,\mathscr{D}_{n})|\mathscr{D}_{n}).
\]
As a general strategy, once the modified random forest has been built, we draw the list of all paths that appear in the forest and only retain those that occur with a frequency larger than the threshold $p_0 \in (0,1)$, the only influential parameter of SIRUS---see Subsection \ref{subsec_parameters} for its tuning procedure. We are thus interested in the set of the extracted paths
\begin{align} \label{eq_emp_pathset}
\pathset_{M,n,p_{0}}=\{ \path \in \Pi:\hat{p}_{M,n}(\path)>p_{0}\}. 
\end{align}
An important feature of SIRUS algorithm is to stop the growing of the forest with an appropriate number of trees $M$. 
Although the right order of magnitude for $M$ is required, no fine tuning is necessary.
Indeed, the uncertainty of the importance estimate $\hat{p}_{M,n}(\path)$ of each rule decreases with $M$, whereas the computational cost linearly increases with $M$. Thus, to obtain a robust rule extraction, $M$ needs to be high enough to make the uncertainty of $\hat{p}_{M,n}(\path)$ negligible. More precisely, $M$ is set to get the same list of selected rules $\pathset_{M,n,p_{0}}$ when SIRUS is run multiple times on the same dataset $\Dn$. On the other hand, $M$ should be small enough to avoid useless computations. Therefore, the growing of the forest is automatically stopped when $95\%$ of the selected rules would be shared by a new run of SIRUS on $\Dn$ in average, as it is possible to derive a simple stopping criterion based on the properties of the estimates $\hat{p}_{M,n}(\path)$---all the technical details are provided in Subsection \ref{subsec_parameters}.
A random forest is usually built with around $500$ trees, as the predictive accuracy cannot be significantly increased by adding more trees. SIRUS typically grows $10$ times more trees to obtain a robust rule extraction.

Besides, we insist that the quantile discretization is critical for the rule selection. The expected value of the rule importance is
\[
\E[\hat{p}_{M,n}(\path)] = \P(\path \in T(\Theta,\mathscr{D}_{n})),
\]
but without the discretization, the list of extracted paths from a random tree $T(\Theta, \Dn)$ takes values in an uncountable space when at least one component of $\bX$ is a continuous random variable, and therefore the above quantity is null, making the path selection procedure unstable with respect to data perturbation.

\paragraph{Rule post-treatment.}
By construction, there is some redundancy in the list of rules generated by the set of distinct paths $\pathset_{M,n,p_{0}}$.
The hyperrectangles associated with the paths extracted from a $\Theta$-random tree overlap, and so the corresponding rules are linearly dependent. Therefore a post-treatment to filter $\pathset_{M,n,p_{0}}$ is needed to remove redundancy and obtain a compact rule model. The general idea is straightforward: if the rule associated with the path $\path \in \pathset_{M,n,p_{0}}$ is a linear combination of rules associated with paths with a higher frequency in the forest, then $\path$ is removed from $\pathset_{M,n,p_{0}}$.

To illustrate the post-treatment, let the tree of Figure \ref{fig_paths} be the $\Theta_1$-random tree grown in the forest.
Since the paths of the first level of the tree, $\path_1$ and $\path_2$, always occur in the same trees, we have $\hat{p}_{M,n}(\path_1) = \hat{p}_{M,n}(\path_2)$. If we assume these quantities to be greater than $p_0$, then $\path_1$ and $\path_2$ both belong to $\pathset_{M,n,p_0}$. However, by construction, $\path_1$ and $\path_2$ are associated with the same rule, and we therefore enforce SIRUS to keep only $\path_1$ in $\pathset_{M,n,p_{0}}$. 
Each of the paths of the second level of the tree, $\path_3$, $\path_4$, $\path_5$, and $\path_6$, can occur in many different trees, and their associated $\hat{p}_{M,n}$ are distinct (except in very specific cases). Assume for example that $\hat{p}_{M,n}(\path_1) > \hat{p}_{M,n}(\path_4) > \hat{p}_{M,n}(\path_5) > \hat{p}_{M,n}(\path_3) > \hat{p}_{M,n}(\path_6) > p_0$. Since $\hat{g}_{n, \path_3}$ is a linear combination of $\hat{g}_{n, \path_4}$ and $\hat{g}_{n, \path_1}$, $\path_3$ is removed. Similarly $\path_6$ is redundant with $\path_1$ and $\path_5$, and it is therefore removed.
Finally, among the six paths of the tree, only $\path_1$, $\path_4$, and $\path_5$ are kept in the list $\pathset_{M,n,p_{0}}$.

\paragraph{Rule aggregation.} 
Now, the resulting small set of rules $\pathset_{M,n,p_{0}}$ is combined to form a simple, compact, and stable rule classification model.
We simply average the set of elementary rules 
$\{ \hat{g}_{n,\path}: \path \in \hat{\mathscr{P}}_{M,n,p_{0}} \}$
that have been selected in the first steps of SIRUS. The aggregated estimate $\hat{\eta}_{M,n,p_{0}}(\bx)$ 
of $\eta(\bx)$ is thus defined by
\begin{align} \label{eq_eta}
\hat{\eta}_{M,n,p_{0}}(\bx)
= \frac{1}{| \hat{\mathscr{P}}_{M,n,p_{0}}|} \sum_{\path \in \pathset_{M,n,p_{0}}} 
 \hat{g}_{n, \path}( \bx).
\end{align}
Finally, the classification procedure assigns class $1$ to an input $\bx$ if the aggregated estimate $\hat{\eta}_{M,n,p_{0}}(\bx)$ is above a given threshold, and class $0$ otherwise.
In the introduction, we presented an example of a list of $7$ rules for the Titanic dataset. In this case, for a new input $\bx$, $\hat{\eta}_{M,n,p_{0}}(\bx)$ is simply the average of the output probability of survival $p_s$ over the $7$ selected rules.

In past works on rule ensemble models, such as RuleFit \citep{friedman2008predictive} and Node harvest \citep{meinshausen2010node}, rules are also extracted from a tree ensemble and then combined together through a regularized linear model. 
In our case, it happens that the parameter $p_0$ alone is enough to control sparsity. Indeed, in our experiments, we observe that adding such linear model in the aggregation method hardly increases the accuracy and hardly reduces the size of the final rule set, while it can significantly reduce stability, add a set of coefficients that makes the model less straightforward to interpret, and requires more intensive computations.
We refer to the experiments in Appendix \ref{simu_mean_logit} for a comparison between $\hat{\eta}_{M,n,p_{0}}$ defined a as simple average (\ref{eq_eta}) versus a definition with a logistic regression.

\paragraph{Categorical and numerical discrete variables.}
For the sake of clarity, the description of SIRUS algorithm is limited to the case of numerical continuous variables. However, SIRUS can obviously handle numerical discrete and categorical data, as it is the case for random forests. On one hand, numerical discrete variables are left untouched since the number of possible split points is already finite, and the rule definition introduced for continuous variables also applies.
On the other hand, we naturally extend the rule definition for categorical variables to ``if $X^{(1)}$ is \textit{category 1 or 2} then \textit{response} else \textit{default response}''---see the Titanic dataset example in the introduction. 
Originally, categorical variables are efficiently handled in trees by transformation in ordered variables. Such ordering of categories is done with respect to the output mean for each category---see \citet{leo1984classification, friedman2001elements}, and we follow \texttt{ranger} implementation. Notice that trees are likely to often cut on categorical variables with a high number of categories, as highlighted in \citet{strobl2006bias}. Consequently, SIRUS is likely to output irrelevant rules associated to such categorical variables. Thus, it is best to discard categorical variables with a high number of categories, or transform them by regrouping categories or using one-hot-encoding before running SIRUS.
Finally, note that ordinal variables (e.g. $X^{(1)} \in \{$small, medium, big$\}$) are treated like categorical variables.

\paragraph{Stability.} \label{subsec_stability}
The three main properties to assess the interpretability of SIRUS are simplicity, stability, and predictivity, as already stated. On one hand, a measure of simplicity is naturally provided by the number of rules, and predictivity is given by the missclassification rate or the AUC. On the other hand, stability requires a more thorough discussion.
In the statistical learning theory, stability refers to the stability of predictions \citep[e.g.,][]{vapnik1998statistical}. In particular, \citet{rogers1978finite}, \citet{devroye1979distribution}, \citet{bousquet2002stability}, and \citet{poggio2004general} show that stability and predictive accuracy are closely connected. In our case, we are more concerned by the stability of the internal structure of the model, and, to our knowledge, no general definition exists. So, we state the following tentative definition:
a rule learning algorithm is stable if two independent estimations based on two independent samples result in two similar lists of rules.
Thus, given a new sample $\mathscr{D}_{n}'$ independent of $\mathscr{D}_{n}$, we define $\hat{p}'_{M,n}(\path)$ and the corresponding set of paths $\smash{\pathset_{M,n,p_{0}}'}$ 
based on a modified random forest drawn with a parameter $\Theta'$ independent of $\Theta$. 
Then, we measure the stability of SIRUS by the proportion of rules shared by the two sets $\smash{\pathset_{M,n,p_{0}}}$ and $\smash{\pathset_{M,n,p_{0}}'}$, selected over these two runs of SIRUS on independent samples. We take advantage of a dissimilarity measure between two sets, the so-called Dice-Sorensen index, often used to assess the stability of variable selection methods \citep{chao2006abundance,zucknick2008comparing,boulesteix2009stability,he2010stable,alelyani2011dilemma}. This index is defined by
\begin{align}
\hat{S}_{M,n,p_{0}}=\frac{2\big|\pathset_{M,n,p_{0}}\cap\pathset_{M,n,p_{0}}'\big|}{\big|\pathset_{M,n,p_{0}}\big|+\big|\pathset_{M,n,p_{0}}'\big|} 
\label{stability_emp_version}
\end{align}
with the convention $0/0 = 1$.
This is a measure of stability taking values between $0$ and $1$: if the intersection between $\smash{\pathset_{M,n,p_{0}}}$ and
$\smash{\pathset_{M,n,p_{0}}'}$ is empty, then $\smash{\hat{S}_{M,n,p_{0}} = 0}$, while if $\smash{\pathset_{M,n,p_{0}} = \pathset_{M,n,p_{0}}'}$, then $\smash{\hat{S}_{M,n,p_{0}} = 1}$. 
Notice that it is possible to use other metrics to assess the distance between two finite sets \citep{zucknick2008comparing}: the Jaccard Index is another popular example.
Although the stability values slightly vary with a different definition, both the asymptotic stability of SIRUS---see Section \ref{sec_theory}---and the empirical stability comparisons between algorithms---see Section \ref{sec_xp}---are insensitive to the stability metric choice.

\section{Theoretical Analysis of Stability} \label{sec_theory}
Among the three minimum requirements for interpretability defined in Section \ref{sec_intro}, simplicity and predictivity are quite easily met for rule models \citep{cohen1999simple, meinshausen2010node,  letham2015interpretable}. 
On the other hand, as \cite{letham2015interpretable} recall, building a stable rule ensemble is challenging. Therefore the main goal of this section is to prove the asymptotic stability of SIRUS, i.e., provided that the sample size is large enough, SIRUS systematically outputs the same list of rules when run multiple times with independent samples. On the other hand, we also argue that existing tree-based rule algorithms are unstable by design.

In order to show the asymptotic stability of SIRUS, we first need to introduce formal definitions of the mathematical elements involved in the empirical algorithm. We additionally define the theoretical counterpart of SIRUS, an abstract procedure which is not based on the sample $\Dn$, but only on the unknown distribution $\P_{\bX,Y}$. Next, we will prove the stochastic convergence of SIRUS towards its theoretical counterpart. This means that the list of selected rules does not depend on the training data $\Dn$, but only on $\P_{\bX,Y}$, provided that the sample size is large enough. Therefore, the same list of rules is output when SIRUS is run multiple times on independent samples. This mathematical analysis highlights that the remarkable stable behavior of SIRUS in practice has theoretical groundings, and that the discretization of the cut values with the quantiles, as well as using random forests, are the cornerstones to stabilize rule models extracted from tree ensembles.

\paragraph{Empirical algorithm.}
First, we define the empirical CART-splitting criterion used to find the optimal split at each node of each tree of the forest. In our context of binary classification where the output $Y \in \{0, 1\}$, maximizing the so-called empirical CART-splitting criterion is equivalent to maximizing the criterion based on Gini impurity \citep[see, e.g.,][]{biau2016random}. More precisely, at node $H$ and for a cut performed along the $j$-th coordinate at the empirical $r$-th $q$-quantile $\hat{q}_{n,r}^{(j)}$, this criterion reads
\begin{align} \label{eq_emp_criterion}
\begin{split}
L_{n}(H, \hat{q}_{n,r}^{(j)}) & \stackrel{\tiny{\rm def}}{=}\frac{1}{N_{n}(H)}\sum_{i=1}^{n}(Y_{i}-\overline{Y}_{H})^{2}\mathds{1}_{\bX_{i}\in H}\\
& \quad -\frac{1}{N_{n}(H)}\sum_{i=1}^{n}\big(Y_{i}-\overline{Y}_{H_{L}}\mathds{1}_{X_{i}^{(j)}<\hat{q}_{n,r}^{(j)}}
-\overline{Y}_{H_{R}}\mathds{1}_{X_{i}^{(j)}\geq \hat{q}_{n,r}^{(j)}}\big)^{2}\mathds{1}_{\bX_{i}\in H},
\end{split}
\end{align}
where $\overline{Y}_{H}$ is the average of the $Y_{i}$'s such that $\bX_i \in H$, $N_{n}(H)$
is the number of data points $\bX_i$ falling into $H$, 
\[
H_{L}\stackrel{\tiny{\rm def}}{=}\{ \bx \in H: \bx^{(j)}<\hat{q}_{n,r}^{(j)}\}, \quad H_{R}\stackrel{\tiny{\rm def}}{=}\{ \bx \in H: \bx^{(j)}\geq
\hat{q}_{n,r}^{(j)}\},
\]
and for $r \in \{1,\hdots,q-1\}$ the empirical $r$-th $q$-quantile of $\{X_1^{(j)}, \hdots, X_n^{(j)}\}$ is defined by
\begin{align} \label{eq_emp_quantile}
\hat{q}_{n,r}^{(j)}=\inf\big\{ x\in\mathds{R}:\frac{1}{n}\stackrel[i=1]{n}{\sum}\mathds{1}_{X_{i}^{(j)}\leq x}\geq\frac{r}{q}\big\}.
\end{align}
Note that, for the ease of reading,  (\ref{eq_emp_criterion}) is defined for a tree built with the entire dataset $\Dn$ without resampling. As it is often the case in the theoretical analysis of random forests, we assume throughout this section that the subsampling of $a_n$ observations to build each tree is done without replacement to alleviate the mathematical analysis.

Recall that the rule selection is based on the probability $p_n(\mathscr{P})$ that a $\Theta$-random tree of the forest
contains a particular path $\mathscr{P} \in \Pi$, that is,
\[
p_n(\mathscr{P})=\mathbb{P}(\mathscr{P}\in T(\Theta,\mathscr{D}_{n})|\mathscr{D}_{n}),
\]
and that the Monte-Carlo estimate $\hat{p}_{M,n}(\mathscr P)$ of $p_n(\mathscr{P})$ is directly computed using the random forest, and takes the form
\[
\hat{p}_{M,n}(\path)=\frac{1}{M}\sum_{\ell=1}^{M}\mathds{1}_{\path\in T(\Theta_{\ell},\mathscr{D}_{n})}.
\]
Clearly, $\hat{p}_{M,n}(\mathscr{P})$ is a good estimate of 
$p_n(\mathscr{P})$ when $M$ is large since, by the law of large numbers, conditional on $\mathscr{D}_{n}$, 
\[
\lim\limits_{M \to \infty} \hat{p}_{M,n}(\mathscr{P}) =p_n(\mathscr{P}) \quad \textrm{a.s.}
\]
We also see that $\hat{p}_{M,n}(\mathscr{P})$ is unbiased since $\mathds{E}[\hat{p}_{M,n}(\path)| \mathscr{D}_{n}] = p_n(\mathscr{P}).$

\paragraph{Theoretical algorithm.}
Next, we define all theoretical counterparts of the empirical quantities involved in SIRUS, which do not depend on $\mathscr{D}_{n}$ but only on the unknown distribution  $\P_{\bX,Y}$ of $(\bX,Y)$. For a given integer $q \geq 2$ and $r\in \{1, \hdots, q - 1\}$, the theoretical $q$-quantiles are defined by 
\[
q_{r}^{\star(j)}=\textrm{inf}\big\{ x\in\mathds{R}:\mathds{P}(X^{(j)}\leq x)\geq\frac{r}{q}\big\},
\]
i.e., the population version of $\hat q_{n,r}^{(j)}$ defined in (\ref{eq_emp_quantile}). Similarly, for a given hyperrectangle $H\subseteq\mathds{R}^{p}$, we let the theoretical CART-splitting criterion be
\begin{align*}
L^{\star}(H,q_{r}^{\star(j)})& =\mathbb{V}[Y|\bX\in H]\\ & \quad -\mathds{P}(X^{(j)}<q_{r}^{\star(j)}|\bX\in H)\times
 \mathbb{V}[Y|X^{(j)}<q_{r}^{\star(j)},\bX\in H]\\ & \quad -\mathds{P}(X^{(j)}\geq q_{r}^{\star(j)}|\bX\in H)\times
 \mathbb{V}[Y|X^{(j)}\geq q_{r}^{\star(j)},\bX\in H].
\end{align*}
Based on this criterion, we denote by $T^{\star}(\Theta)$ the list of all paths contained in the theoretical tree built with randomness $\Theta$, 
where splits are chosen to maximize the theoretical criterion $L^{\star}$ instead of the empirical one $L_{n}$, defined in (\ref{eq_emp_criterion}). We stress again that the list $T^{\star}(\Theta)$ does not depend upon $\mathscr{D}_{n}$ but only upon the unknown distribution of $(\bX,Y)$.
Next, we let $p^{\star}(\path)$ be the theoretical counterpart of $p_n(\path)$, that is 
\[
p^{\star}(\path)=\mathds{P}(\path\in T^{\star}(\Theta)),
\]
and finally define the theoretical set of selected paths $\mathscr{P}^{\star}_{p_0}$ by 
$\{ \path \in \Pi : p^{\star}(\path) > p_0\}$ (with the same post-treatment as for the empirical procedure---see Section \ref{sec_algo}).
Notice that, in the case where multiple splits have the same value of the theoretical CART-splitting criterion, one is randomly selected. 

\paragraph{Consistency of the path selection.}
The construction of the rule ensemble model essentially relies on the path selection and on the estimates $\hat{p}_{M,n}(\path)$,
$\path \in \Pi$. Therefore, our theoretical analysis first focuses on the asymptotic properties of those estimates in Theorem \ref{theorem_consistency}.
Our consistency results hold under conditions on the subsampling rate $a_{n}$ and the number of trees $M_n$, together with some assumptions on the distribution of the random vector $\bX$. They are given below.
\begin{enumerate}
\item[(A1)] The subsampling rate $a_n$ satisfies $\lim\limits_{n \to \infty} a_{n}=\infty$ and $\lim\limits_{n \to \infty} \frac{a_{n}}{n}=0$.
\item[(A2)] The number of trees $M_n$ satisfies $\lim\limits_{n \to \infty} M_n = \infty$.
\item[(A3)] $\bX$ has a strictly positive density $f$ with respect to the Lebesgue measure. Furthermore, for all $j\in \{1,\hdots,p\}$,
 the marginal density $f^{(j)}$ of $X^{(j)}$ is continuous, bounded, and strictly positive.
\end{enumerate}
We can now state the consistency of the occurrence frequency of each possible path $\path \in \Pi$ in the modified random forest.
\begin{theorem}
\label{theorem_consistency}
If Assumptions (A1)-(A3) are satisfied, then, for all $\path \in \Pi$, we have
\[
\lim\limits_{n \to \infty} \hat{p}_{M_{n},n}(\path) = p^{\star}(\path) \quad \textrm{in probability.}
\]
\end{theorem}

\paragraph{Stability.}
The only source of randomness in the selection of the rules lies in the estimates $\hat{p}_{M_{n},n}(\path)$.
Since Theorem \ref{theorem_consistency} states the consistency of such an estimation, the path selection consistency follows, for all  threshold values $p_0$ that do not belong to the finite set $\setpstar = \{ p^{\star}(\path): \path \in \Pi \}$ of all theoretical probabilities of appearance for each path $\path$. Indeed, if $p_0 =  p^{\star}(\path)$ for some $\path \in \Pi$, 
then $\P(\hat{p}_{M_n,n}(\path)>p_{0})$ does not necessarily converge to $0$ and the path selection can be inconsistent.
Then, we can deduce that SIRUS is asymptotically stable in the following Corollary \ref{corollary_stability}.

\begin{corollary}
\label{corollary_stability}
\hspace{0.2cm}Assume that Assumptions (A1)-(A3) are satisfied. Then, provided $p_{0}\in[0,1] \setminus \mathcal{U}^{\star}$, we have
\[
\lim \limits_{n \to \infty} \P ( \hat{\mathscr{P}}_{M_n,n,p_{0}} = \mathscr{P}^{\star}_{p_0} ) = 1,
\]
and then
\[
\lim \limits_{n \to \infty}  \hat{S}_{M_{n},n,p_{0}} = 1 \quad  \textrm{in probability.}
\]
\end{corollary}

\paragraph{Competitors.}
As we will discuss further in the experimental Section \ref{sec_xp}, CART, C5.0, RuleFit, and Node harvest are top competitors of SIRUS, which are also based on rule extraction from trees. However, these algorithms do not include a pre-processing step of discretization, which makes them unstable by design. To see this, we first adapt the definition of an extracted path without discretization as $\mathscr{P} =\{ (j_{k},z_{k},s_{k}), \, k = 1,\hdots,d\}$, where $z_k \in \R$ is now the cutting value of the $k$-th split.
For any rule algorithm, we also define $\hat{S}_{M,n}$ as the proportion of rules shared between the output rule lists over two runs with two independent samples. Note that $M=1$ for CART and C5.0, and as already mentioned, it is possible to define a rule algorithm from CART, by extracting its nodes, as in C5.0. Thus, we obtain that for any tree-based rule algorithm, $\hat{S}_{M,n} = 0$ almost surely.
Indeed, since the input $\bX$ takes continuous values (Assumption (A3)) and decision trees can cut at the middle of two observations in all directions, the probability that a cutting value from the tree built with $\Dn$ and one from the tree built with $\Dn'$ are equal is null.

However, recall that in the experiments, we include a pre-processing discretization step to stabilize competitors and enable fair comparisons. With this modification, they reach a value of $\hat{S}_{M,n} > 0$, but still not in par with SIRUS. This shows that the high stability improvement of SIRUS does not only come from the discretization, but mainly from the rule selection procedure, based on the probability of the rule occurrence in a random tree.

\paragraph{Proofs.}
The proof of Theorem \ref{theorem_consistency} is to be found in Appendix \ref{sec_theorem_consistency}. It is however interesting to give a sketch of the proof here. Corollary \ref{corollary_stability} is a direct consequence of Theorem \ref{theorem_consistency}, the full proof follows.

\begin{proof}[Sketch of proof of Theorem \ref{theorem_consistency}]
The consistency is obtained by showing that $\hat{p}_{M_{n},n}(\path)$ is asymptotically unbiased with a null variance. 
The result for the variance is quite straightforward since the variance of $\hat{p}_{M_{n},n}(\path)$ can be broken into two terms:
the variance generated by the Monte-Carlo randomization, which goes to $0$ as the number of trees increases (Assumption (A2)), and the variance of $p_n(\path)$. Following \cite{mentchquantifying2016}, since $p_n(\path)$ is a bagged estimate it can be seen as an 
infinite-order U-statistic, and a classic bound on the variance of U-statistics gives that $\mathbb{V}[p_n(\path)]$ converges to $0$ if $\lim_{n \to \infty} \frac{a_n}{n} = 0$, which is true by Assumption (A1). Next, proving that $\hat{p}_{M_{n},n}(\path)$ is asymptotically unbiased requires to dive into the internal mechanisms of the random forest algorithm. To do this, we have to show that the CART-splitting criterion is consistent (Lemma 3) and asymptotically normal (Lemma 4) when cuts are limited to empirical quantiles (estimated on the same dataset) and the number of trees grows with $n$.
When cuts are performed on the theoretical quantiles, the law of large numbers and the central limit theorem can be directly applied, so that the proof of Lemmas 3 and 4 boils down to showing that the difference between  the empirical CART-splitting criterion evaluated at empirical and theoretical quantiles converges to $0$ in probability fast enough. This is done in Lemma 2 thanks to Assumption (A3).
\end{proof}

\begin{proof}[Proof of Corollary \ref{corollary_stability}]
The first result is a consequence of Theorem \ref{theorem_consistency} since
\begin{align*}
\P \big( & \hat{\mathscr{P}}_{M_n,n,p_{0}} \neq \mathscr{P}^{\star}_{p_0} \big) \leq \sum_{\path \in \Pi}
\P ( \hat{p}_{M_{n},n}(\path) > p_0) \mathds{1}_{p^{\star}(\path) \leq p_0}
+ \P ( \hat{p}_{M_{n},n}(\path) \leq p_0) \mathds{1}_{p^{\star}(\path) > p_0}.
\end{align*}
Next, we have
\[
\hat{S}_{M_{n},n,p_{0}}=\frac{2\underset{\path \in \Pi}{\sum}\mathds{1}_{\hat{p}_{M_{n},n}(\path)>p_{0}
\cap\hat{p}_{M_{n},n}'(\path)>p_{0}}}{\underset{\path\in\Pi}{\sum}
\mathds{1}_{\hat{p}_{M_{n},n}(\path)>p_{0}}+\mathds{1}_{\hat{p}_{M_{n},n}'(\path)>p_{0}}}.
\]
Since $p_{0} \notin \mathcal{U}^{\star}$, we deduce from Theorem \ref{theorem_consistency} and the continuous mapping theorem that,
for all $\path \in \Pi$,
\[
\lim \limits_{n \to \infty} \mathds{1}_{\hat{p}_{M_{n},n}(\path)>p_{0}}
= \mathds{1}_{p^{\star}(\path)>p_{0}} \quad \textrm{in probability}.
\]
Therefore, $\lim \limits_{n \to \infty} \hat{S}_{M_{n},n,p_{0}} = 1$ in probability.
\end{proof}

\section{Experiments} \label{sec_xp}

We begin this section by providing overall experimental settings. Next, we focus on a case study to illustrate SIRUS with an industrial process example: the semi-conductor manufacturing process SECOM data \citep{Dua:2019}. In particular, it shows the excellent performance of SIRUS on real data in a noisy and high-dimensional setting. 
In Subsection \ref{subsec_competitors}, we use $19$ UCI datasets \citep{Dua:2019} to perform extensive comparisons between SIRUS and its main competitors. We show that SIRUS produces much more stable rule lists, while preserving a predictive accuracy and computational complexity comparable to the top competitors.
Finally, in Subsection \ref{subsec_parameters}, we detail the tuning procedure of the single hyperparameter $p_0$, along with a thorough discussion on the design of SIRUS. In particular, the cut limitations to the quantiles and the number of constraints in the selected rules are analyzed, and we also provide the stopping criterion for the number of trees.

\subsection{Experiment Description}

\paragraph{Performance metrics.}
We first introduce relevant metrics to assess the three interpretability properties in the experiments.
By definition, the size (i.e., the simplicity) of the rule ensemble is the number of selected rules, i.e., $|\pathset_{M,n,p_{0}}|$.
To measure the error, 1-AUC is used and estimated by $10$-fold cross-validation (repeated $10$ times for robustness and standard deviation estimates).
With respect to stability, an independent dataset is not available for real data to compute $\hat{S}_{M,n,p_{0}}$ as defined in (\ref{stability_emp_version}) in the Section \ref{sec_algo}. Nonetheless, we can take advantage of the cross-validation process to compute a stability metric: the proportion of rules shared by two models built during the cross-validation, averaged over all possible pairs \citep{guidotti2019stability}.

\paragraph{Datasets.}
We have conducted experiments on the SECOM data, as well as $19$ diverse public datasets from the UCI repository (\citealp[][]{Dua:2019}; data is described in Table \ref{table_datasets}).
\begin{table}
\setlength{\tabcolsep}{3pt}
\centering
\begin{tabular}{|c | c | c | c|}
  \hline \hline
  \textbf{Dataset} & \textbf{Sample size} & \begin{tabular}{c}\textbf{Total number} \\ \textbf{of variables} \end{tabular} &
 \begin{tabular}{c}\textbf{Number of} \\ \textbf{categorical variables} \end{tabular} \\
  \hline
    Authentification & 1372 & 4 & 0 \\
    Breast Wisconsin & 699 & 9 & 9 \\
    Credit Approval	& 690 & 15 & 9 \\
    Credit German & 1000 & 20 & 13 \\
    Diabetes & 768 & 8 & 0 \\	
    Haberman & 306 & 3 & 0 \\	
    Heart C2 & 303 & 13 & 7 \\	
    Heart H2 & 294 & 13  & 7 \\	
    Heart Statlog & 270 & 13 & 3 \\	
    Hepatitis & 155 & 19 & 0 \\
    Ionosphere & 351 & 33 & 0 \\
    Kr vs Kp & 3196 & 36 & 36 \\
    Liver Disorders & 345 & 6 & 0 \\
    Mushrooms & 8124 & 21 & 21 \\
    SECOM & 1567 & 590 & 0 \\
    Sonar & 208 & 60 & 0 \\
    Spambase & 4601 & 57 & 0 \\
    Titanic & 887 & 6 & 1 \\
    Vote & 435 & 16 & 16 \\
    Wilt & 4339 & 5 & 0 \\
  \hline \hline
\end{tabular}
\vspace*{1.5mm}
\caption{\small{Description of UCI datasets}}
\label{table_datasets}
\end{table}
These experiments aim at illustrating the good behavior of SIRUS over its competitors in various settings. 
To compare stability of the different methods, data is discretized using the 10-empirical quantiles for each continuous variable and the same stability metric is used for all algorithm comparisons. For simplicity and predictivity metrics, we do not apply this pre-processing step of discretization, unless the algorithm only handles categorical data.

\paragraph{Competitors.}
For decision trees, we run both CART and C5.0, and trees are pruned to maximize their performance. Notice that, to enable simplicity and stability comparisons for CART, a list of rules is extracted from its nodes, as it is originally possible for C5.0.
For rule algorithms based on greedy heuristics, we evalute RIPPER, PART, and FOIL. 
Next, for rule algorithms based on tree ensembles, we evaluate RuleFit and Node harvest. Note that categorical features are transformed in multiple binary variables as it is required by the two software implementations, and RuleFit is limited to rule predictors. 
For RuleFit, the lasso penalty is tuned by cross-validation as defined in \citet{friedman2008predictive}. As advertised in \citet{meinshausen2010node}, Node harvest does not require parameter tuning by default, but it is also possible to add a regularization term to reduce the model size. We use the same tuning procedure as for SIRUS to maximize accuracy with the smallest possible model---see Subsection \ref{subsec_parameters}.
Finally, for rule algorithms based on frequent pattern mining, we run the experiments for CBA and BRL. Note that we use default settings for BRL, since modifying its parameters does not significantly improve accuracy and can hurt stability.
We use available R implementations: \texttt{rpart} \citep[CART]{therneau2018package}, \texttt{C50} \citep[C5.0]{kuhn2014c50}, \texttt{RWeka} \citep[RIPPER, PART]{hornik2020package}, \texttt{arulesCBA} \citep[FOIL, CBA]{johnson1arulescba}, \texttt{pre} \citep[RuleFit]{fokkema2017pre}, \texttt{nodeHarvest} \citep[Node harvest]{meinshausen2015package}, and \texttt{sbrl} \citep[BRL]{yang2017scalable}.
We also use our \texttt{R}/\texttt{C++} software implementation \texttt{sirus} \citep{Rsirus} (available from \texttt{CRAN}), adapted from \texttt{ranger}, a fast random forest implementation \citep{wright2017ranger}.
Besides, notice that for SIRUS experiments, we use the default settings of random forests well known for their excellent behavior, in particular $\texttt{mtry} = \lfloor \frac{p}{3} \rfloor$. We set $q = 10$ quantiles and tune $p_0$ as specified in Subsection \ref{subsec_parameters}.

\subsection{Case Study: Manufacturing Process Data}
SIRUS is run on a real manufacturing process of semi-conductors, the SECOM dataset \citep{Dua:2019}. Data is collected from sensors and process measurement points to monitor the production line, resulting in $590$ numeric variables.
Each of the $1567$ data points represents a single production entity associated with a pass or fail output ($0/1$) for in-house line testing.
As it is often the case for a production process, the dataset is unbalanced and contains $104$ fails, i.e., a failure rate $p_f$ of $6.6$\%.
We proceed to a simple pre-processing of the data: missing values (about $5$\% of the total) are replaced by the median.

Figure \ref{figure_semicon_auc} shows predictivity versus the number of rules when $p_0$ varies, with the optimal $p_0$ displayed. Notice that the relation between $p_0$ and the number of rules is monotone by construction, but also highly nonlinear. Therefore, we use the number of rules for the x-axis of Figure \ref{figure_semicon_auc} to improve readability.
The 1-AUC value is $0.30$ for SIRUS (for the optimal $p_0=0.04$), $0.29$ for Breiman's random forests, and $0.48$ for a pruned CART tree. 
Thus, in that case, CART tree predicts no better than the random classifier, whereas SIRUS has a similar accuracy to random forests.
The final model has $6$ rules and a stability of $0.72$, i.e., in average $4$ to $5$ rules are shared by $2$ models built in a 10-fold cross-validation process, simulating data perturbation. By comparison, Node harvest outputs $36$ rules with a value of $0.32$ for 1-AUC.
\begin{figure}
\begin{center}
\includegraphics[height=10cm,width=10cm]{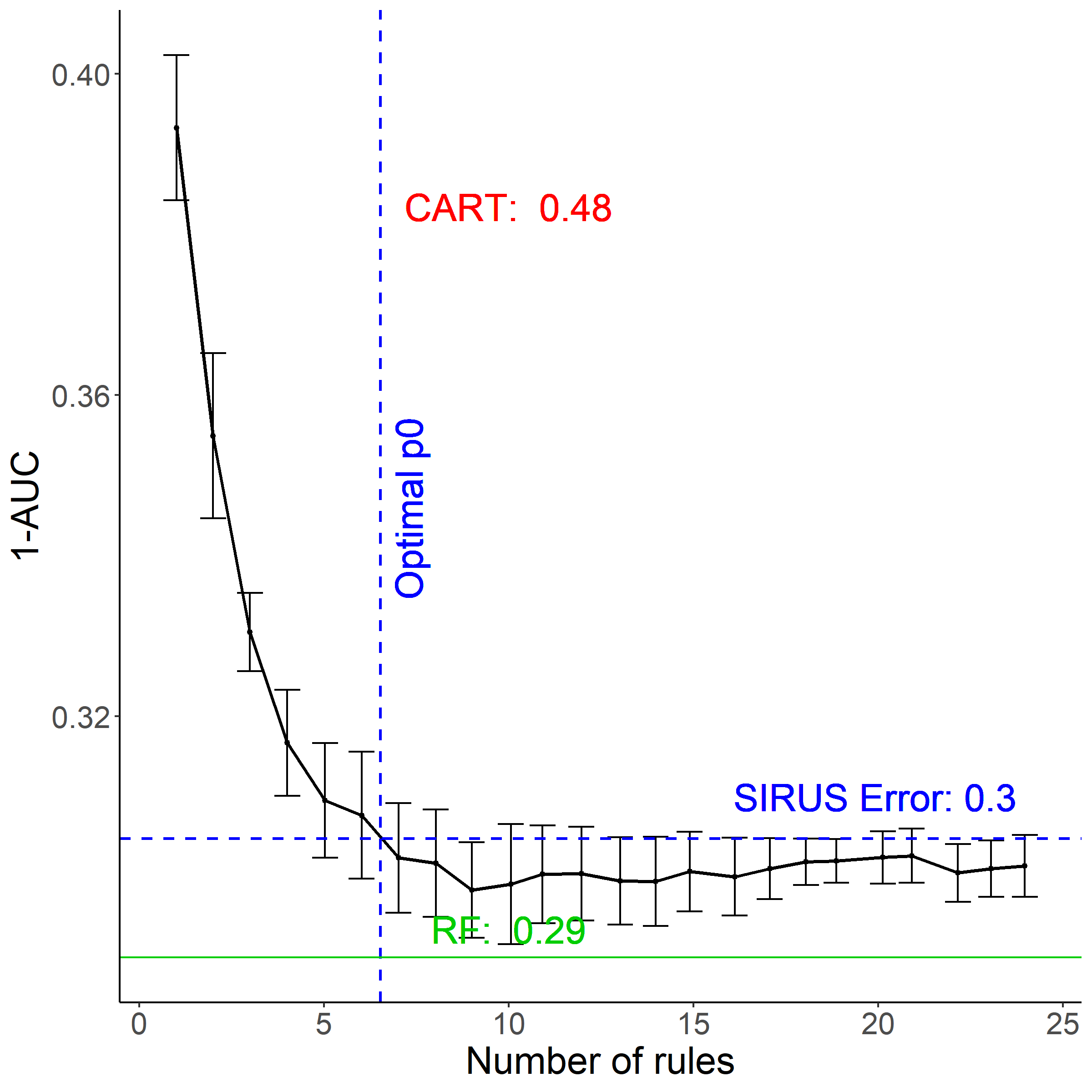}
\caption{\small{For the SECOM dataset, error (1-AUC) versus the number of rules when $p_0$ varies, estimated via 10-fold cross-validation (averaged over $10$ repetitions of the cross-validation).} Errors for CART and random forests are reported for comparisons.}
\label{figure_semicon_auc}
\end{center}
\end{figure}

Finally, the output of SIRUS may be displayed in the simple and interpretable form of Figure \ref{figure_semicon_R_print}, the output in the \texttt{R} console of the package \texttt{sirus} for the SECOM data.
\begin{figure}
\begin{center}
\setlength{\fboxrule}{1pt}
\fbox{\begin{minipage}{0.90\textwidth}
\includegraphics[scale = 1]{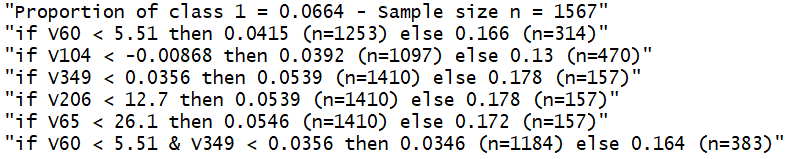}
\end{minipage}}
\caption{\small{List of rules output by our software \texttt{sirus} in the R console for the SECOM dataset.}}
\label{figure_semicon_R_print}
\end{center}
\end{figure}
Such a rule model enables to catch immediately how the most relevant variables impact failures. Among the $590$ variables, $5$ are enough to build a model as predictive as random forests, and such a selection is robust. Other rules alone may also be informative, but they do not add additional information to the model, since predictive accuracy is already minimal with the $6$ selected rules. Then, production engineers should first focus on those $6$ rules to investigate an improved setting of the production process. We insist that the stability of the output rule list is critical in practice. Indeed, the algorithm may be run multiple times during the analysis, eventually with an additional small new batch of data. The output rule list should be quite insensitive to such perturbation: domain experts are skeptical of unstable results, which are the symptoms of a partial and arbitrary modelling of the true phenomenon. SIRUS is stable, but it is not the case for decision trees or existing rule algorithms, as we show in the next subsection and illustrate in Appendix \ref{appendix_stability}.

\subsection{Improvement over Competitors} \label{subsec_competitors}

Overall, we observe that SIRUS provides a high improvement of stability compared to state-of-the-art rule algorithms, while preserving the other properties.
For the top competitors, experimental results are gathered in Table \ref{table_size} for model size, Table \ref{table_stability} for stability, and Table \ref{table_auc} for predictive accuracy. Experiments for additional competitors are provided in Appendix \ref{appendix_xp} in Tables \ref{table_size_appendix}, \ref{table_stability_appendix} and \ref{table_auc_appendix}. Standard deviations are made negligible by averaging metrics over $10$ repetitions of the cross-validation and are not displayed in the tables to increase readability.

Figure \ref{figure_credit_german} provides an example for the dataset ``Credit German'' of the dependence between predictivity and the number of rules when $p_0$ varies. In that case, the minimum of 1-AUC is about $0.25$ for SIRUS, $0.20$ for Breiman's forests, and $0.29$ for CART tree. 
For the chosen $p_0$, SIRUS returns a compact set of $22$ rules and its stability is $0.66$.
Figure \ref{figure_heart_statlog} provides another example of the good practical performance of SIRUS with the ``Heart Statlog'' dataset. Here, the predictivity of random forests is reached with $16$ rules, with a stability of $0.83$, i.e., about $13$ rules are consistent between two different models built in a $10$-fold cross-validation. 
Thus, the final models are simple, quite robust to data perturbation, and have a predictive accuracy close to random forests.
\begin{figure}
\begin{center}
\includegraphics[height=7cm,width=7cm]{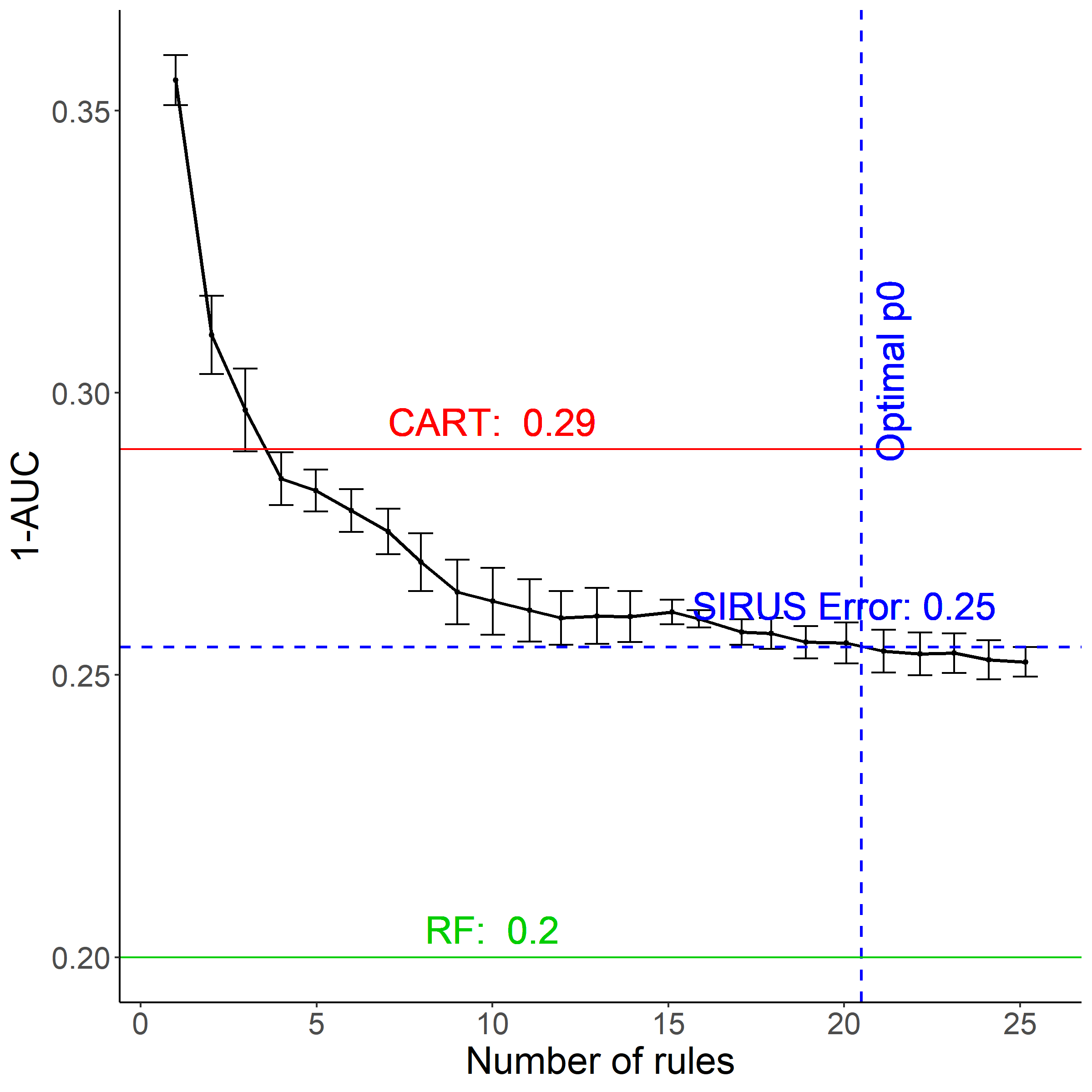}
\includegraphics[height=7cm,width=7cm]{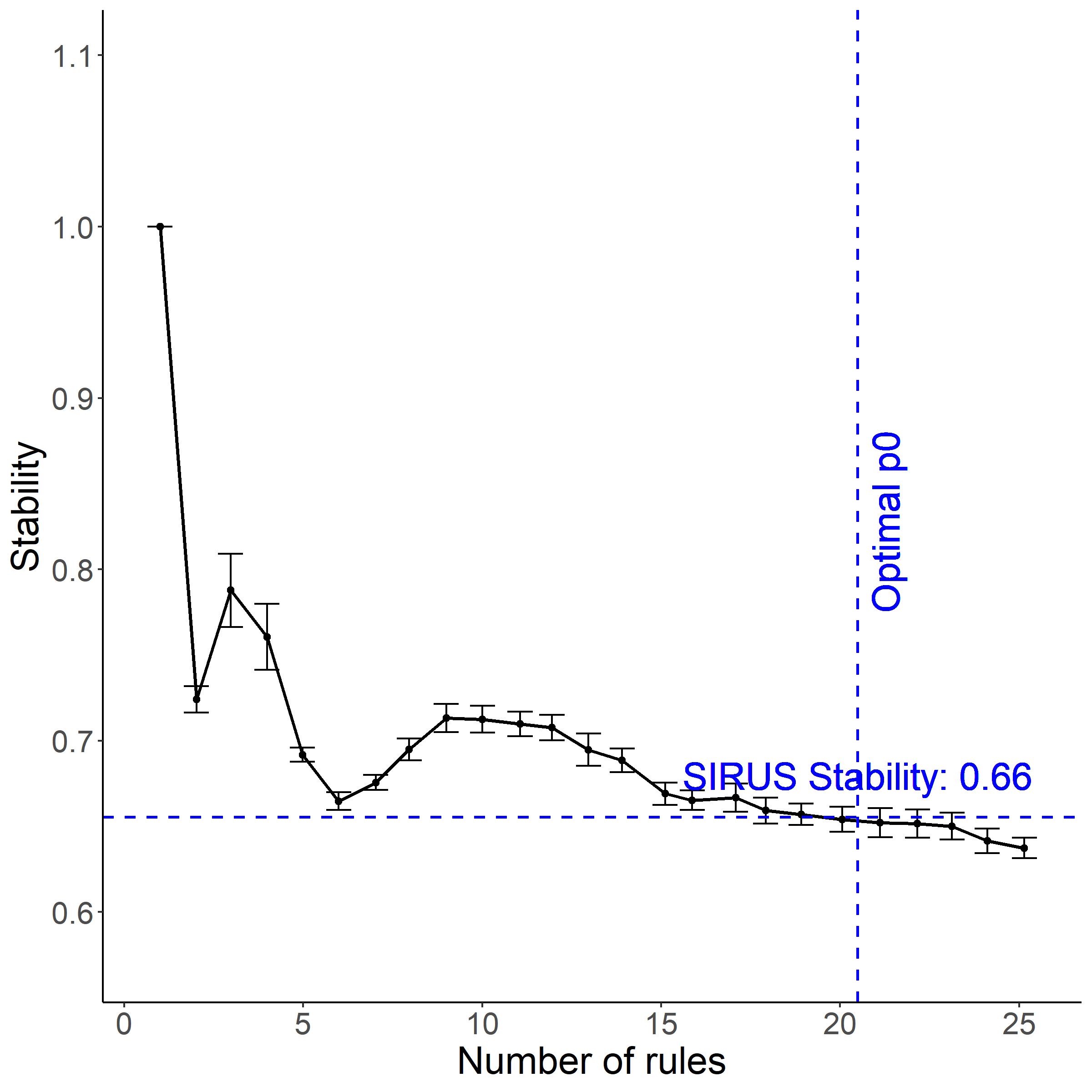}
\caption{\small{For the UCI dataset ``Credit German'', 1-AUC (on the left) and stability (on the right) versus the number of rules when $p_0$ varies, estimated via 10-fold cross-validation (results are averaged over $10$ repetitions).}}
\label{figure_credit_german}
\end{center}
\end{figure}
\begin{figure}
\begin{center}
\includegraphics[height=7cm,width=7cm]{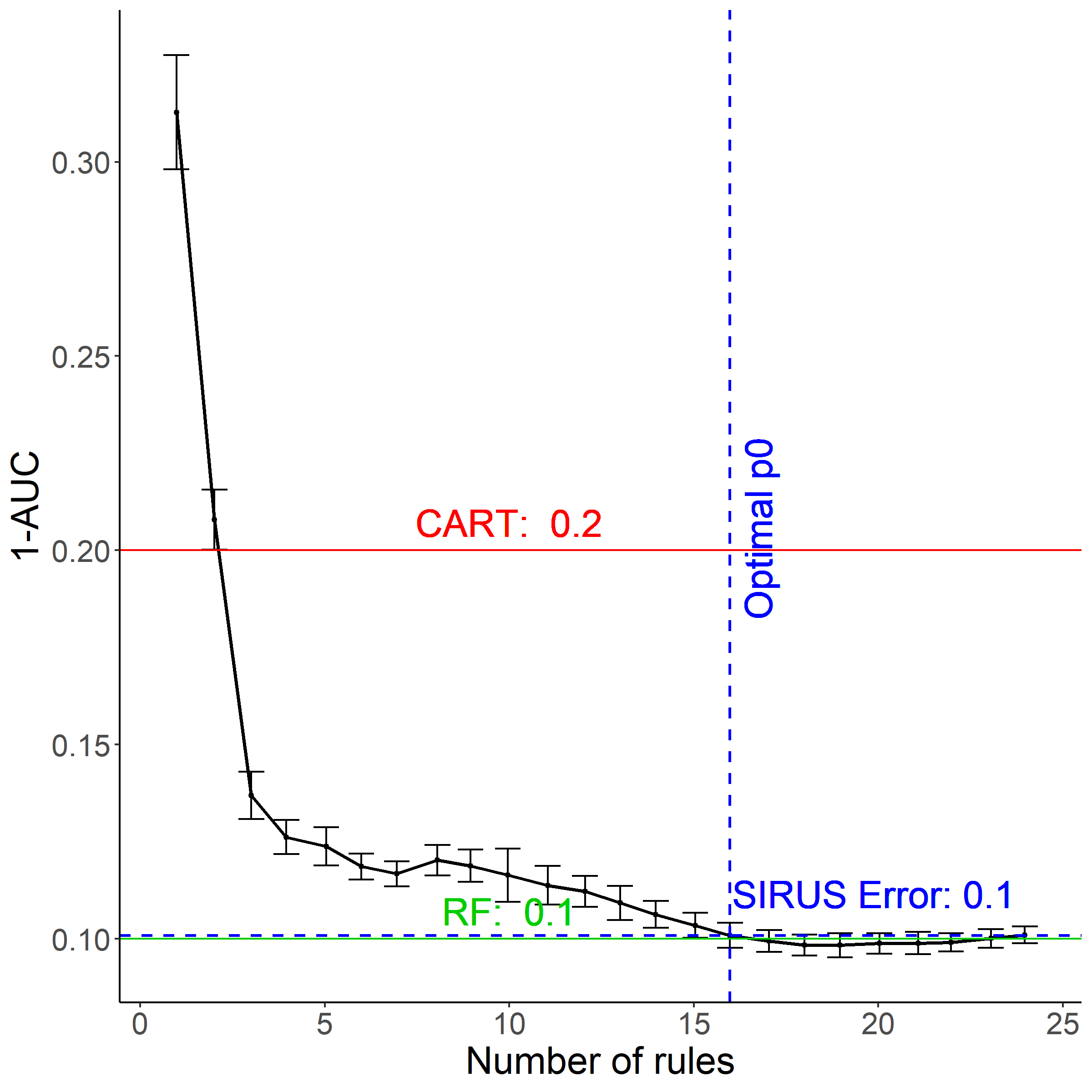}
\includegraphics[height=7cm,width=7cm]{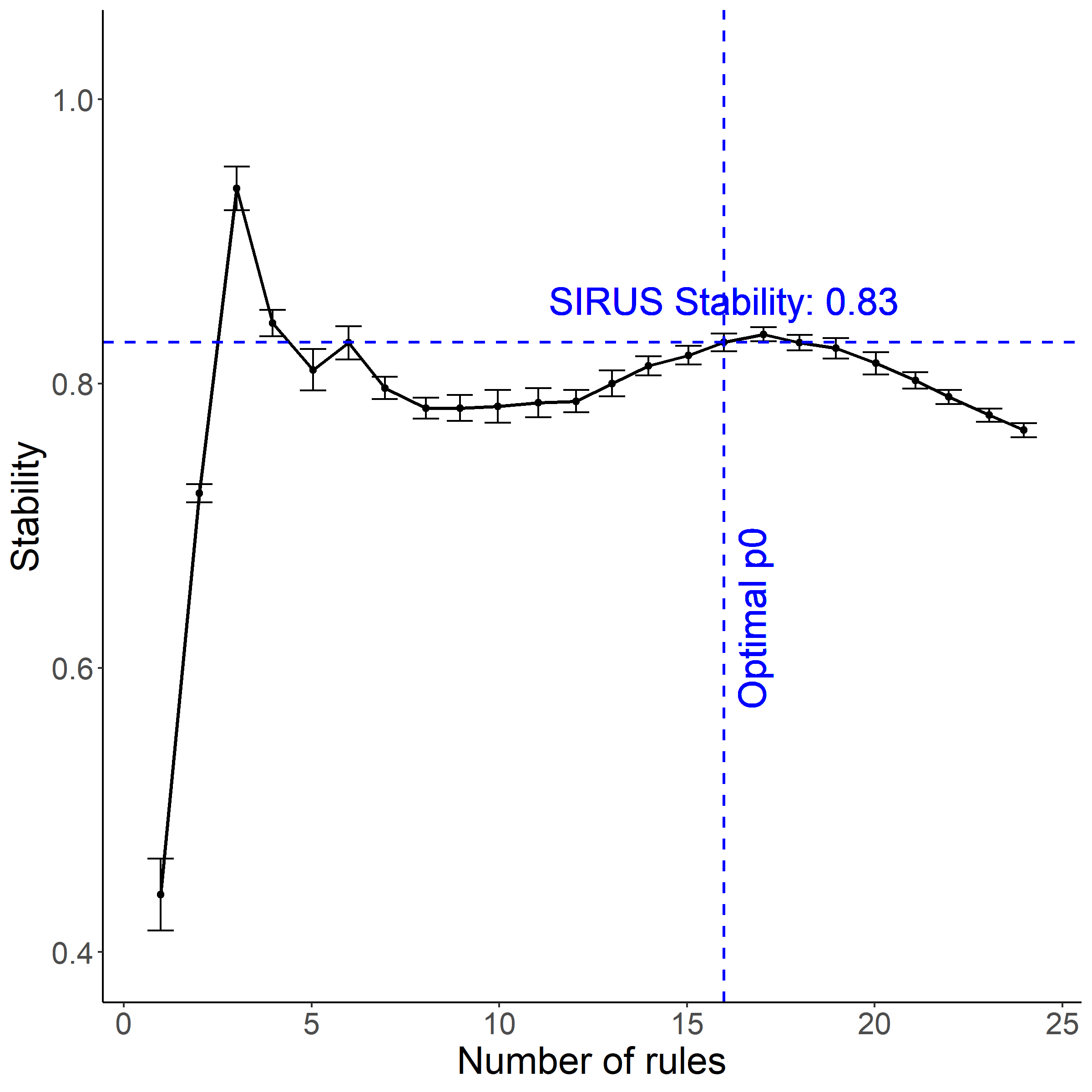}
\caption{\small{For the UCI dataset ``Heart Statlog'', 1-AUC (on the left) and stability (on the right) versus the number of rules when $p_0$ varies, estimated via 10-fold cross-validation (results are averaged over $10$ repetitions).}}
\label{figure_heart_statlog}
\end{center}
\end{figure}

We can draw the following conclusions from the experimental comparisons with competitors, displayed in Tables \ref{table_size}, \ref{table_stability}, and \ref{table_auc}.
SIRUS produces more stable and predictive rule lists than decision trees, for a comparable simplicity, but at the price of a higher computational complexity since many trees are grown.
SIRUS produces much more stable and shorter rule lists than RuleFit and Node harvest, for a comparable accuracy and computational complexity.
Classical rule algorithms exhibit similar properties as decision trees: a smaller computational complexity, but a high instability and a reduced predictivity.
Finally, algorithms based on frequent pattern mining exhibit quite good stability properties, higher than for the other types of competitors. On the other hand, their predictive accuracy is worse than decision trees. Experiments in Tables \ref{table_size}, \ref{table_stability}, and \ref{table_auc} show that SIRUS exhibits a high stability and predictivity improvement over these methods. Besides, simplicity varies across algorithms: CBA produces much longer rule lists than SIRUS, whereas BRL generates shorter models.
\begin{table}
\setlength{\tabcolsep}{2pt}
\centering
\begin{tabular}{|c | c | c | c | c | c | c | c|}
  \hline  \hline
   & \begin{tabular}{c}\textbf{Decision} \\ \textbf{tree}\end{tabular} & \begin{tabular}{c}\textbf{Classical} \\ \textbf{rule learning}\end{tabular} & \multicolumn{2}{|c|}{\begin{tabular}{c}\textbf{Frequent} \\ \textbf{pattern mining}\end{tabular}} & \multicolumn{3}{|c|}{\textbf{Tree ensemble}} \\
  \hline
  \textbf{Dataset} & \textbf{CART} & \textbf{RIPPER} & \textbf{CBA} & \textbf{BRL} & \textbf{RuleFit} & \begin{tabular}{c}\textbf{Node} \\ \textbf{harvest}\end{tabular} & \textbf{SIRUS} \\
  \hline
    Authentification & 21 & 7 & 7 & 17 & 49 & 30 & 13 \\
    Breast Wisconsin & 7 & 12 & 24 & 7 & 24 & 32 & 24 \\
    Credit Approval & 5 & 4 & 55 & 4 & 15 & 27 & 16 \\
    Credit German & 18 & 3 & 69 & 4 & 33 & 33 & 20 \\
    Diabetes & 13 & 3 & 17 & 6 & 26 & 31 & 8 \\
    Haberman & 2 & 1 & 2 & 2 & 3 & 17 & 5\\
    Heart C2 & 10 & 3 & 34 & 4 & 23 & 36 & 20 \\
    Heart H2 & 5 & 2 & 29 & 3 & 12 & 24 & 12 \\	
    Heart Statlog & 10 & 3 & 27 & 4 & 22 & 35 & 16 \\
    Hepatitis & 2 & 2 & 14 & 2 & 8 & 14 & 12 \\
    Ionosphere & 4 & 4 & 38 & 4 & 20 & 35 & 15 \\
    Kr vs Kp & 16 & 15 & 29 & 9 & 18 & 13 & 24 \\
    Liver Disorders & 15 & 3 & 2 & 3 & 19 & 33 & 17 \\
    Mushrooms & 4 & 8 & 25 & 11 & 10 & 22 & 23 \\
    Sonar & 6 & 4 & 33 & 2 & 32 & 83 & 19 \\
    Spambase & 14 & 16 & 126 & 16 & 68 & 60 & 21 \\
    Titanic & 13 & 4 & 4 & 3 & 19 & 23 & 6 \\
    Vote & 2 & 2 & 25 & NA & 12 & 10 & 7 \\
    Wilt & 9 & 5 & 3 & 10 & 31 & 19 & 24 \\
  \hline \hline
 \end{tabular}
\vspace*{1.5mm}
\caption{Mean model size over a $10$-fold cross-validation for UCI datasets. 
Results are averaged over $10$ repetitions of the cross-validation.}
\label{table_size}
\end{table}
\begin{table} 
\setlength{\tabcolsep}{2pt}
\centering
\begin{tabular}{| c | c | c | c | c | c | c | c |}
  \hline  \hline
  & \begin{tabular}{c}\textbf{Decision} \\ \textbf{tree}\end{tabular} & \begin{tabular}{c}\textbf{Classical} \\ \textbf{rule learning}\end{tabular} & \multicolumn{2}{|c|}{\begin{tabular}{c}\textbf{Frequent} \\ \textbf{pattern mining}\end{tabular}} & \multicolumn{3}{|c|}{\textbf{Tree ensemble}} \\
  \hline
  \textbf{Dataset} & \textbf{CART} & \textbf{RIPPER} & \textbf{CBA} & \textbf{BRL} & \textbf{RuleFit} & \begin{tabular}{c}\textbf{Node} \\ \textbf{harvest}\end{tabular} & \textbf{SIRUS} \\
  \hline
    Authentification & 0.41 & 0.36 & \B 0.87 & \B 0.86 & 0.48 & 0.59 & \B 0.81 \\
    Breast Wisconsin & 0.21 & 0.55 & \B 0.80 & \B 0.78 & 0.34 & 0.71 & 0.70 \\
    Credit Approval & 0.52 & 0.32 & 0.43 & 0.52 & 0.25 & 0.23 & \B 0.75 \\
    Credit German & 0.46 & 0.22 & 0.51 & 0.41 & 0.24 & 0.48 & \B 0.66 \\
    Diabetes & 0.29 & 0.21 & 0.46 & \B 0.73 & 0.39 & 0.45 & \B 0.81 \\
    Haberman & \B 0.83 & 0.09 & \B 0.79 & 0.50 & 0.46 & 0.52 & 0.65 \\
    Heart C2 & 0.25 & 0.35 & 0.38 & 0.60 & 0.39 & 0.49 & \B 0.71 \\
    Heart H2 & 0.46 & 0.27 & 0.52 & \B 0.73 & 0.29 & 0.29 & \B 0.65 \\
    Heart Statlog & 0.30 & 0.41 & 0.41 & \B 0.75 & 0.35 & 0.48 & \B 0.83 \\
    Hepatitis & 0.26 & 0.16 & 0.24 & 0.34 & 0.26 & 0.49 & \B 0.68 \\
    Ionosphere & \B 0.96 & 0.39 & 0.13 & 0.70 & 0.17 & 0.33 & 0.69 \\
    Kr vs Kp & 0.71 & 0.74 & \B 0.84 & \B 0.80 & 0.19 & 0.27 & \B 0.87 \\
    Liver Disorders & 0.23 & 0.10 & \B 0.91 & 0.50 & 0.24 & 0.31 & 0.58 \\
    Mushrooms & \B 1 & 0.84 & \B 0.98 & 0.80 & 0.69 & 0.48 & 0.86 \\
    Sonar & 0.34 & 0.04 & 0.09 & 0.19 & 0.09 & 0.20 & \B 0.55 \\
    Spambase & 0.49 & 0.10 & 0.46 & \B 0.86 & 0.28 & 0.66 & \B 0.78 \\
    Titanic & 0.55 & 0.42 & 0.69 & \B 0.88 & 0.37 & 0.36 & 0.76 \\
    Vote & \B 1 & 0.52 & 0.68 & NA & 0.21 & 0.30 & 0.75 \\
    Wilt & 0.36 & 0.32 & 0.72 & \B 0.94 & 0.47 & 0.64 & 0.73 \\
  \hline
    Average Rank & 4.2 & 5.9 & 3.3 & 2.8 & 5.6 & 4.3 & 1.9 \\
    p-values & \scriptsize{0.07} & \scriptsize{0.33} & \scriptsize{0.33} & \scriptsize{0.08} & \scriptsize{0.05} & \scriptsize{0.98} &  \\
    \textbf{Final Rank} & \textbf{4} & \textbf{6} & \textbf{2} & \textbf{2}  & \textbf{6} & \textbf{4} & \textbf{1} \\
  \hline \hline
\end{tabular}
\vspace*{1.5mm}
\caption{Mean stability over a $10$-fold cross-validation for UCI datasets.
Results are averaged over $10$ repetitions of the cross-validation. Values within 10\% of the maximum are displayed in bold. Algorithms are ranked with a Mann-Whitney-Wilcoxon test, the p-value with the previous performing algorithm determines the final rank (10\%-level test).}
\label{table_stability}
\end{table}
\begin{table} 
\setlength{\tabcolsep}{1pt}
\centering
\begin{tabular}{|c | c | c | c | c | c | c | c| c| c|}
  \hline \hline
  & \begin{tabular}{c}\textbf{Black} \\ \textbf{box}\end{tabular} & \begin{tabular}{c}\textbf{Decision} \\ \textbf{tree}\end{tabular} & \begin{tabular}{c}\textbf{Classical} \\ \textbf{rule} \\ \textbf{learning}\end{tabular} & \multicolumn{2}{|c|}{\begin{tabular}{c}\textbf{Frequent} \\ \textbf{pattern mining}\end{tabular}} & \multicolumn{3}{|c|}{\textbf{Tree ensemble}} \\
  \hline
  \textbf{Dataset} & \begin{tabular}{c}\textbf{Random} \\ \textbf{Forest} \end{tabular} & \textbf{CART} & \textbf{RIPPER} & \textbf{CBA}  & \textbf{BRL} & \textbf{RuleFit} & \begin{tabular}{c}\textbf{Node} \\ \textbf{harvest}\end{tabular} & \textbf{SIRUS} \\
  \hline
    Authentification & $10^{-4}$ & 0.02 & 0.02 & 0.14 & 0.009 & $\pmb{9.10^{-4}}$ & 0.02 & 0.03 \\
    Breast Wisconsin & 0.009 & 0.06 & 0.07 & 0.05 & 0.02 & \B 0.01 & \B 0.01 & \B  0.01 \\
    Credit Approval & 0.07 & 0.14 & 0.15 & 0.14 & 0.11 & \B  0.08 & \B 0.07 & 0.09 \\
    Credit German & 0.20 & 0.29 & 0.38 & 0.40 & 0.33 & \B 0.23 & \B 0.26 & \B 0.25 \\
    Diabetes & 0.17 & 0.25 & 0.29 & 0.30 & 0.25 & \B 0.18 & \B 0.19 & \B 0.19 \\	
    Haberman & 0.31 & 0.48 & 0.39 & 0.50 & 0.43 & \B 0.37 & \B 0.34 & \B 0.35 \\	
    Heart C2 & 0.10 & 0.19 & 0.23 & 0.17 & 0.23 & 0.12 & 0.12 & \B 0.10 \\	
    Heart H2 & 0.11 & 0.23 & 0.24 & 0.24 & 0.16 & \B 0.11 & \B 0.11 & \B 0.12 \\	
    Heart Statlog & 0.10 & 0.20 & 0.21 & 0.17 & 0.22 & 0.12 & 0.12 & \B 0.10 \\	
    Hepatitis & 0.12 & 0.48 & 0.39 & 0.36 & 0.33 & 0.20 & 0.23 & \B 0.17 \\
    Ionosphere & 0.02 & 0.11 & 0.12 & 0.13 & 0.10 & \B  0.04 & 0.07 & 0.07 \\
    Kr vs Kp & $9.10^{-4}$ & 0.02 & \B 0.009 & 0.05 & 0.01 & \B 0.009 & 0.04 & 0.04 \\
    Liver Disorders & 0.23 & 0.33 & 0.35 & 0.48 & 0.44 & \B 0.27 & 0.30 & 0.35 \\
    Mushrooms & 0 & 0.007 & $3.10^{-5}$ & $5.10^{-4}$ & $\pmb{2.10^{-5}}$ & \B $5.10^{-4}$ & 0.002 & \B $6.10^{-4}$ \\
    Sonar & 0.07 & 0.27 & 0.26 & 0.25 & 0.44 & \B 0.12 & 0.16 & 0.2 \\
    Spambase & 0.01 & 0.11 & 0.08 & 0.12 & 0.05 & \B 0.02 & 0.04 & 0.07 \\
    Titanic & 0.13 & 0.19 & 0.21 & 0.27 & 0.21 & \B 0.14 & 0.16 & 0.17 \\
    Vote & 0.01 & 0.06 & 0.04 & 0.06 & NA & \B 0.02 & \B 0.02 & \B 0.02 \\
    Wilt & 0.007 & 0.18 & 0.13 & 0.48 & 0.07 & \B 0.02 & 0.08 & 0.11 \\
  \hline
    Average Rank & & 5 & 4.9 & 5.8 & 4.4 & 1.4 & 2.4 & 2.8 \\
    p-values & & \scriptsize{0.22} & \scriptsize{0.24} & \scriptsize{0.01} &  \scriptsize{$6.10^{-3}$} & &  \scriptsize{0.01} & \scriptsize{0.34}\\
    \textbf{Final Rank} & & \textbf{4} & \textbf{4} & \textbf{7} & \textbf{4} & \textbf{1} &  \textbf{2} & \textbf{2} \\
  \hline \hline
\end{tabular}
\vspace*{1.5mm}
\caption{Model error (1-AUC) over a $10$-fold cross-validation for UCI datasets. 
Results are averaged over $10$ repetitions of the cross-validation. Values within 10\% of the minimum are displayed in bold, random forest is put aside. Algorithms are ranked with a Mann-Whitney-Wilcoxon test, the p-value with the previous performing algorithm determines the final rank (10\%-level test).}
\label{table_auc}
\end{table}

\subsection{SIRUS Parameters} \label{subsec_parameters}

SIRUS relies on a single tuning hyperparameter: the selection threshold $p_0$ involved in the definition of $\pathset_{M,n,p_{0}}$ to filter the most important rules, which therefore controls the simplicity of the model, and consequently also its accuracy and stability. 
On the other hand, SIRUS is not very sensitive to the other parameters: the number of trees, the number of quantiles, and the tree depth. Therefore, they do not require fine tuning, and we simply set efficient default values as explained below.

\paragraph{Tuning of SIRUS.} 
This parameter $p_0$ should be set to optimize a tradeoff between the number of rules, stability, and accuracy. 
In practice, it is difficult to settle such a criterion, and we choose to optimize $p_0$ to maximize the predictive accuracy with the smallest possible set of rules.
To achieve this goal, we proceed as follows. The error 1-AUC is estimated by $10$-fold cross-validation for a fine grid of $p_0$ values, defined such that $|\pathset_{M,n,p_{0}}|$ varies from $1$ to $25$ rules. (We let $25$ be an arbitrary upper bound on the maximum number of rules, considering that a bigger set is not readable anymore.)
The randomization introduced by the partition of the dataset in the $10$ folds of the cross-validation process has a significant impact on the variability of the size of the final model. Therefore, in order to get a robust estimation of $p_0$, the cross-validation is repeated multiple times (typically $10$) and results are averaged.
The standard deviation of the mean of 1-AUC is computed over these repetitions for each $p_0$ of the grid search. We consider that all models within $2$ standard deviations of the minimum of 1-AUC are not significantly less predictive than the optimal one. Thus, among these models, the one with the smallest number of rules is selected, i.e., the optimal $p_0$ is shifted towards higher values to reduce the model size without decreasing predictivity---see Figures \ref{figure_credit_german} and \ref{figure_heart_statlog} for examples.
This approach is very similar to the tuning procedure of the Lasso \citep{tibshirani1996regression}.

\paragraph{Number of trees.}
The accuracy, stability, and computational cost of SIRUS increase with the number of trees $M$. Thus, we simply design a stopping criterion to grow the minimum number of trees which ensures that accuracy and stability are higher than $95\%$ of their maximum asymptotic values with respect to $M$ and conditionally on $\Dn$.
We empirically observe that the stability requirement is met for a much higher number of trees than the accuracy requirement (about $10$ times). Therefore, the stopping criterion is only based on stability. More precisely, we require that $95\%$ of the rules are identical across two runs of SIRUS on a given dataset $\mathscr{D}_{n}$ in average. Formally, the mean stability $\mathbb{E} [ \hat{S}_{M,n,p_{0}} | \mathscr{D}_{n}]$ measures the expected proportion of rules shared by two fits of SIRUS on $\mathscr{D}_{n}$, for fixed $n$ (sample size), $p_0$ (threshold), and $M$ (number of trees).
Thus, the stopping criterion takes the form $1 - \mathbb{E} [ \hat{S}_{M,n,p_{0}} | \mathscr{D}_{n}] < \alpha$, with typically $\alpha = 0.05$.
 
There are two obstacles to operationalize this stopping criterion: its estimation and its dependence to $p_0$. We make two approximations to overcome these limitations and give empirical and theoretical evidence of their good practical behavior in Appendix \ref{appendix_M}.
First, Theorem \ref{stability_equivalent} in Appendix \ref{appendix_M_theory} provides an asymptotic equivalent with respect to $M$ of $1 - \mathbb{E} [ \hat{S}_{M,n,p_{0}} | \mathscr{D}_{n}]$, that we simply estimate by
\begin{align*}
\varepsilon_{M,n,p_0} = \frac{\sum_{\path \in \Pi} \Phi( M p_{0}, M, \hat{p}_{M, n}(\path) ) ( 1 - \Phi( M p_{0}, M,
 \hat{p}_{M, n}(\path)) )}{\sum_{\path \in \Pi} ( 1 - \Phi( M p_{0}, M,  \hat{p}_{M, n}(\path) ) ) },
\end{align*}
where $\Phi ( M p_{0}, M, p_n(\path))$ is the cdf of a binomial distribution with parameter 
$p_n(\path)$, $M$ trials, evaluated at $M p_{0}$.
Secondly, $\varepsilon_{M,n,p_0}$ depends on $p_0$, whose optimal value is unknown in the first step of SIRUS, when trees are grown.
It turns out however that $\varepsilon_{M,n,p_0}$ is not very sensitive to $p_0$, as shown by the experiments in Appendix \ref{appendix_M_xp}. Consequently, our strategy is to simply average $\varepsilon_{M,n,p_0}$ over a set $\hat{V}_{M,n}$ of many possible values of $p_0$ and use the resulting average as a gauge. These values are chosen to scan all possible path sets $\pathset_{M,n,p_0}$, of size ranging from $1$ to $50$ paths. When a set of $50$ paths is post-treated, its size reduces to around $25$ paths (as explained in the previous paragraph, $25$ is an arbitrarily threshold on the maximum number of rules above which a rule model is not readable anymore). In order to generate path sets of such sizes, values of $p_0$ are chosen halfway between two distinct consecutive $\hat{p}_{M,n}(\path), \path \in \Pi$, restricted to the highest $50$ values.
Thus, in the experiments, we utilize the following criterion to stop the growing of the forest, with typically $\alpha = 0.05$:
\begin{align} \label{criterion_M}
\underset{M}{\textrm{argmin}} \Big\{ \frac{1}{| \hat{V}_{M,n}|} \sum_{p_0 \in \hat{V}_{M,n}}
\varepsilon_{M,n,p_0} < \alpha \Big\}.
\end{align}

\paragraph{Quantile discretization.}
\label{simu_RF_bin}
In the modified random forest grown in the first step of SIRUS, the split at each tree node is limited to the empirical $q$-quantiles of each component of $\bX$, as described in Section \ref{sec_algo}.
Thus, we check that this modification alone of the forest has little impact on its accuracy. 
Using the R package \texttt{ranger}, 1-AUC is estimated for each dataset with 10-fold cross-validation for $q \in \{2, 5, 10, 20\}$.
We leave aside datasets with a majority of categorical variables, results are averaged over 10 repetitions of the cross-validation, and displayed in Table \ref{table_RF_bin}. Clearly, the decrease of accuracy generated by this discretization is small, and not very sensitive to $q$, provided that $q$ is not too small. Thus, $q = 10$ appears to be a good default choice from the experiments. In fact, the small impact of the discretization on the forest error is not surprising: with only $p=10$ input variables, the input space is split in a fine grid of $10^{10}$ hyperrectangles for $q = 10$ quantiles, providing a high flexibility to the modified random forest to identify local patterns.
\begin{table}
\centering
\begin{tabular}{|c | c | c | c | c | c |}
  \hline \hline
  \textbf{Dataset} & \textbf{Breiman's RF} & $\textbf{q=2}$ & $\textbf{q=5}$ & $\textbf{q=10}$ & $\textbf{q=20}$ \\
  \hline
    Authentification & 0.0002 & 0.08 & 0.002 & 0.0005 & 0.0004 \\
    Diabetes & 0.17 & 0.23 & 0.18 & 0.18 & 0.18 \\	
    Haberman & 0.32 & 0.35 & 0.30 & 0.32 & 0.30 \\		
    Heart Statlog & 0.10 & 0.10 & 0.10 & 0.10 & 0.10 \\	
    Hepatitis & 0.13 & 0.15 & 0.14 & 0.14 & 0.13 \\
    Ionosphere & 0.02 & 0.07 & 0.03 & 0.02 & 0.02 \\
    Liver Disorders & 0.23 & 0.32 & 0.27 & 0.25 & 0.24 \\
    Sonar & 0.07 & 0.09 & 0.07 & 0.07 & 0.07 \\
    Spambase & 0.01 & 0.14 & 0.03 & 0.02 & 0.01 \\
    Titanic & 0.13 & 0.15 & 0.14 & 0.14 & 0.13 \\
    Wilt & 0.007 & 0.15 & 0.03 & 0.02 & 0.02 \\
\hline \hline
\end{tabular}
\vspace*{1.5mm}
\caption{\small{Accuracy, measured by 1-AUC on UCI datasets, for two algorithms:
Breiman's random forests and random forests with splits limited to $q$-quantiles, for $q \in \{2,5,10,20\}$.}}
\label{table_RF_bin}
\end{table}

\paragraph{Tree depth.}
When SIRUS is fit using fully grown trees, the final set of rules $\pathset_{M,n,p_0}$ contains almost exclusively rules made of one or two splits, and rarely of three splits.
Although this may appear surprising at first glance, this phenomenon is in fact expected. 
Indeed, rules made of multiple splits are extracted from deeper tree levels and are thus more sensitive to data perturbation by construction. This results in much smaller values of $\hat{p}_{M,n}(\path)$ for rules with a high number of splits, and then deletion from the final set of path through the threshold $p_0$: $\smash{\pathset_{M,n,p_{0}}=\{ \path \in \Pi:\hat{p}_{M,n}(\path)>p_{0}\}}$. To illustrate this, let us consider the following typical example with $p = 100$ input variables and $q = 10$ quantiles. There are $q p = 100 \times 10 = 10^3$ possible splits at the root node of a tree, and then $2 pq = 2.10^3$ paths of one split. Since the left and right paths of one split at the root node are associated to the same rule, there are $q p = 10^3$ distinct rules of one split, about $(2 q p)^2 \approx 10^6$ distinct rules of two splits, and about $(2 q p)^3 \approx 10^{10}$ distinct rules of three splits. Using only rules of one split is too restrictive since it generates a small model class (a thousand rules for $100$ input variables) and does not handle variable interactions. On the other hand, rules of two splits are numerous (about one million) and thus provide a large flexibility to SIRUS. More importantly, since there are $10$ billion rules of three splits, a stable selection of a few of them is clearly a difficult task, and such complex rules are naturally discarded by SIRUS. 

In the software implementation \texttt{sirus}, the tree depth parameter \texttt{max.depth} is a modifiable input, set to $2$ by default to reduce the computational cost while leaving the output list of rules almost untouched as explained above. We conduct experiments where SIRUS is run with a tree depth of $1$, $2$, and $3$, and results are displayed in Table \ref{table_auc_d3}. 
Over the nineteen UCI datasets, rules of three splits appear in SIRUS rule list in only four cases, and a significant accuracy improvement over a tree depth of $2$ occurs only once, for the `Mushrooms' dataset. On the other hand, for all datasets except two, SIRUS outputs rules of two constraints, and predictivity is improved over a tree depth of $1$ for half of the datasets. The Titanic example shows how the rule list is drastically simplified by limiting tree depth to $1$, lowering the insights provided by SIRUS:
\begin{center}
\setlength{\fboxrule}{1pt}
\fbox{\begin{minipage}{0.71\textwidth}
\textbf{Average survival rate} $p_s = 39 \%$. \\[1em]
\begin{tabular}{c c c c c c}
\textbf{if} & \texttt{sex} is male & \textbf{then} & $p_s = 19\%$ & \textbf{else} & $p_s = 74\%$ \\[0.7em]
\textbf{if} & $1^{st}$ or $2^{nd}$ \texttt{class} & \textbf{then} & $p_s = 56\%$ & \textbf{else} & $p_s = 24\%$ \\[0.7em]
\end{tabular}
\end{minipage}}
\end{center}

This analysis of tree depth is not new. Indeed, both RuleFit \citep{friedman2008predictive} and Node harvest \citep{meinshausen2010node} articles discuss the optimal tree depth for the rule extraction from a tree ensemble in their experiments.
They both conclude that the optimal depth is $2$. Hence, the same hard limit of $2$ is used in Node harvest. RuleFit is slightly less restrictive: for each tree, its depth is randomly sampled with an exponential distribution concentrated on $2$, but allowing few trees of depth $1$, $3$, and $4$. We insist that they both reach such conclusion without considering stability issues, but only focusing on accuracy. Further considering stability properties consolidates that growing shallow trees is optimal for rule extraction from tree ensembles.
\begin{table}
\setlength{\tabcolsep}{2pt}
\centering
\begin{tabular}{|c | c | c | c |}
  \hline \hline
  \textbf{Dataset} & \textbf{SIRUS - depth = 1} & \textbf{SIRUS - depth = 2} & \textbf{SIRUS - depth = 3} \\
  \hline
    Authentification & 0.07 & \B 0.03 & \B 0.03 \\
    Breast Wisconsin & 0.01 & 0.01 & 0.01 \\
    Credit Approval	& 0.11 & \B 0.09 & \B 0.09 \\
    Credit German & 0.25 & 0.25 & 0.26 \\
    Diabetes & 0.19 & 0.19 & 0.19 \\	
    Haberman & 0.35 & 0.35 & 0.35 \\	
    Heart C2 & 0.11 & \B 0.10 & 0.11 \\	
    Heart H2 & 0.12 & 0.12  & 0.12 \\	
    Heart Statlog & 0.11 & \B 0.10 & \B 0.10 \\	
    Hepatitis & \B 0.15 & 0.17 & 0.18 \\
    Ionosphere & 0.07 & 0.07 & 0.07 \\
    Kr vs Kp & 0.05 & \B 0.04 & 0.06 \\
    Liver Disorders & 0.38 & \B 0.35 & \B 0.35 \\
    Mushrooms & $3.10^{-3}$ & $6.10^{-4}$ & $\pmb{3.10^{-4}}$ \\
    Sonar & 0.19 & 0.2 & 0.2 \\
    Spambase & 0.06 & 0.07 & 0.07 \\
    Titanic & 0.19 & \B 0.17 & \B 0.16 \\
    Vote & 0.02 & 0.02 & 0.02 \\
    Wilt & 0.19 & \B 0.11 & \B 0.11 \\
  \hline \hline
\end{tabular}
\vspace*{1.5mm}
\caption{\small{SIRUS error (1-AUC) over a $10$-fold cross-validation (averaged over $10$ repetitions) when tree depth is limited to $1$, $2$ or $3$. Values within 10\% of the minimum are displayed in bold, except for datasets with no significant variations.}}
\label{table_auc_d3}
\end{table}

\section{Conclusion} \label{sec_conclusion}
Interpretability of learning algorithms is required for applications involving critical decisions, for example the analysis of production processes in the manufacturing industry. Although interpretability does not have a precise definition, we argued that simplicity, stability, and predictivity are minimum requirements.
In particular, decision trees and rule algorithms both combine a simple structure and a good accuracy for nonlinear data, and are thus considered as state-of-the-art interpretable algorithms. However, these methods are unstable with respect to data perturbation, which is a strong operational limitation. Therefore, we proposed a new rule algorithm for classification, SIRUS (Stable and Interpretable RUle Set), which takes the form of a short list of rules. We proved that SIRUS considerably improves stability over state-of-the-art algorithms, while preserving simplicity, accuracy, and computational complexity of top competitors.
The principle of SIRUS is to extract rules from a random forest, based on their probability of occurrence in a random tree, and to stop the growing of the forest when the rule selection is converged. Thus, SIRUS inherits the computational complexity of random forests, and has only one tuning parameter. A software implementation, the \texttt{R/C++} package \texttt{sirus} \citep{Rsirus}, is available from \texttt{CRAN}.
Besides, we believe that the extension of SIRUS to regression is a promising future research direction: the main challenge is the construction of an appropriate rule aggregation framework to accurately estimate continuous outputs without hurting stability. Furthermore, although SIRUS has the ability to handle high-dimensional data, as illustrated with the SECOM dataset (590 inputs), specific variable selection strategies could be used to reduce the number of possible rules and then improve SIRUS performance.

\bibliography{biblio}

\begin{thebibliography}{72}
\providecommand{\natexlab}[1]{#1}
\providecommand{\url}[1]{\texttt{#1}}
\expandafter\ifx\csname urlstyle\endcsname\relax
  \providecommand{\doi}[1]{doi: #1}\else
  \providecommand{\doi}{doi: \begingroup \urlstyle{rm}\Url}\fi

\bibitem[Agrawal et~al.(1993)Agrawal, Imieli{\'n}ski, and
  Swami]{agrawal1993mining}
R.~Agrawal, T.~Imieli{\'n}ski, and A.~Swami.
\newblock Mining association rules between sets of items in large databases.
\newblock In \emph{Proceedings of the 1993 ACM SIGMOD International Conference
  on Management of Data}, pages 207--216, New York, 1993. ACM.

\bibitem[Alelyani et~al.(2011)Alelyani, Zhao, and Liu]{alelyani2011dilemma}
S.~Alelyani, Z.~Zhao, and H.~Liu.
\newblock A dilemma in assessing stability of feature selection algorithms.
\newblock In \emph{13th IEEE International Conference on High Performance
  Computing \& Communication}, pages 701--707, Piscataway, 2011. IEEE.

\bibitem[Angelino et~al.(2017)Angelino, Larus-Stone, Alabi, Seltzer, and
  Rudin]{angelino2017learning}
E.~Angelino, N.~Larus-Stone, D.~Alabi, M.~Seltzer, and C.~Rudin.
\newblock Learning certifiably optimal rule lists for categorical data.
\newblock \emph{The Journal of Machine Learning Research}, 18:\penalty0
  8753--8830, 2017.

\bibitem[Benard and Wright(2020)]{Rsirus}
C.~Benard and M.N. Wright.
\newblock \emph{sirus: Stable and Interpretable RUle Set}, 2020.
\newblock URL \url{https://CRAN.R-project.org/package=sirus}.
\newblock R package version 0.2.1.

\bibitem[Biau and Scornet(2016)]{biau2016random}
G.~Biau and E.~Scornet.
\newblock A random forest guided tour (with comments and a rejoinder by the
  author).
\newblock \emph{TEST}, 25:\penalty0 197--268, 2016.

\bibitem[Boulesteix and Slawski(2009)]{boulesteix2009stability}
A.-L. Boulesteix and M.~Slawski.
\newblock Stability and aggregation of ranked gene lists.
\newblock \emph{Briefings in Bioinformatics}, 10:\penalty0 556--568, 2009.

\bibitem[Bousquet and Elisseeff(2002)]{bousquet2002stability}
O.~Bousquet and A.~Elisseeff.
\newblock Stability and generalization.
\newblock \emph{Journal of Machine Learning Research}, 2:\penalty0 499--526,
  2002.

\bibitem[Breiman(1996)]{breiman1996bagging}
L.~Breiman.
\newblock Bagging predictors.
\newblock \emph{Machine Learning}, 24:\penalty0 123--140, 1996.

\bibitem[Breiman(2001{\natexlab{a}})]{breiman2001random}
L.~Breiman.
\newblock Random forests.
\newblock \emph{Machine Learning}, 45:\penalty0 5--32, 2001{\natexlab{a}}.

\bibitem[Breiman(2001{\natexlab{b}})]{breiman2001statistical}
L.~Breiman.
\newblock Statistical modeling: The two cultures (with comments and a rejoinder
  by the author).
\newblock \emph{Statistical Science}, 16:\penalty0 199--231,
  2001{\natexlab{b}}.

\bibitem[Breiman(2003a)]{breiman2003atechnical}
L.~Breiman.
\newblock Setting up, using, and understanding random forests v3.1.
\newblock Technical report, UC Berkeley, 2003a.
\newblock URL
  \url{https://www.stat.berkeley.edu/~breiman/Using_random_forests_V3.1.pdf}.

\bibitem[Breiman et~al.(1984)Breiman, Friedman, Olshen, and
  Stone]{leo1984classification}
L.~Breiman, J.H. Friedman, R.A. Olshen, and C.J. Stone.
\newblock \emph{Classification and Regression Trees}.
\newblock Chapman \& Hall/CRC, Boca Raton, 1984.

\bibitem[Chao et~al.(2006)Chao, Chazdon, Colwell, and Shen]{chao2006abundance}
A.~Chao, R.L. Chazdon, R.K. Colwell, and T.-J. Shen.
\newblock Abundance-based similarity indices and their estimation when there
  are unseen species in samples.
\newblock \emph{Biometrics}, 62:\penalty0 361--371, 2006.

\bibitem[Clark and Niblett(1989)]{clark1989cn2}
P.~Clark and T.~Niblett.
\newblock The {CN2} induction algorithm.
\newblock \emph{Machine Learning}, 3:\penalty0 261--283, 1989.

\bibitem[Cohen(1995)]{cohen1995fast}
W.W. Cohen.
\newblock Fast effective rule induction.
\newblock In \emph{Proceedings of the Twelfth International Conference on
  Machine Learning}, pages 115--123, San Francisco, 1995. Morgan Kaufmann
  Publishers Inc.

\bibitem[Cohen and Singer(1999)]{cohen1999simple}
W.W. Cohen and Y.~Singer.
\newblock A simple, fast, and effective rule learner.
\newblock In \emph{Proceedings of the Sixteenth National Conference on
  Artificial Intelligence and Eleventh Conference on Innovative Applications of
  Artificial Intelligence}, pages 335--342, Palo Alto, 1999. AAAI Press.

\bibitem[Cvitkovi{\'c} et~al.(2017)Cvitkovi{\'c}, Smith, and
  Pande]{cvitkovic2017asymptotic}
M.~Cvitkovi{\'c}, A.-S. Smith, and J.~Pande.
\newblock Asymptotic expansions of the hypergeometric function with two large
  parameters application to the partition function of a lattice gas in a field
  of traps.
\newblock \emph{Journal of Physics A: Mathematical and Theoretical},
  50:\penalty0 265206, 2017.

\bibitem[Dembczy{\'n}ski et~al.(2010)Dembczy{\'n}ski, Kot{\l}owski, and
  S{\l}owi{\'n}ski]{dembczynski2010ender}
K.~Dembczy{\'n}ski, W.~Kot{\l}owski, and R.~S{\l}owi{\'n}ski.
\newblock {ENDER}: A statistical framework for boosting decision rules.
\newblock \emph{Data Mining and Knowledge Discovery}, 21:\penalty0 52--90,
  2010.

\bibitem[Devroye and Wagner(1979)]{devroye1979distribution}
L.~Devroye and T.~Wagner.
\newblock Distribution-free inequalities for the deleted and holdout error
  estimates.
\newblock \emph{IEEE Transactions on Information Theory}, 25:\penalty0
  202--207, 1979.

\bibitem[Doshi-Velez and Kim(2017)]{doshi2017towards}
F.~Doshi-Velez and B.~Kim.
\newblock Towards a rigorous science of interpretable machine learning.
\newblock \emph{arXiv:1702.08608}, 2017.

\bibitem[Dua and Graff(2017)]{Dua:2019}
D.~Dua and C.~Graff.
\newblock {UCI} machine learning repository, 2017.
\newblock URL \url{http://archive.ics.uci.edu/ml}.

\bibitem[Fokkema(2020)]{fokkema2017pre}
M.~Fokkema.
\newblock Fitting prediction rule ensembles with {R} package {pre}.
\newblock \emph{Journal of Statistical Software}, 92:\penalty0 1--30, 2020.

\bibitem[Frank and Witten(1998)]{frank1998generating}
E.~Frank and I.H. Witten.
\newblock Generating accurate rule sets without global optimization.
\newblock In \emph{Proceedings of the Fifteenth International Conference on
  Machine Learning}, pages 144--151, San Francisco, 1998. Morgan Kaufmann
  Publishers Inc.

\bibitem[Freitas(2014)]{freitas2014comprehensible}
A.A. Freitas.
\newblock Comprehensible classification models: A position paper.
\newblock \emph{ACM SIGKDD Explorations Newsletter}, 15:\penalty0 1--10, 2014.

\bibitem[Friedman et~al.(2001)Friedman, Hastie, and
  Tibshirani]{friedman2001elements}
J.~Friedman, T.~Hastie, and R.~Tibshirani.
\newblock \emph{The Elements of Statistical Learning}, volume~1.
\newblock Springer series in statistics New York, 2001.

\bibitem[Friedman and Popescu(2003)]{friedman2003importance}
J.H. Friedman and B.E. Popescu.
\newblock Importance sampled learning ensembles.
\newblock Technical report, Stanford University, 2003.

\bibitem[Friedman and Popescu(2008)]{friedman2008predictive}
J.H. Friedman and B.E. Popescu.
\newblock Predictive learning via rule ensembles.
\newblock \emph{The Annals of Applied Statistics}, 2:\penalty0 916--954, 2008.

\bibitem[F{\"u}rnkranz and Widmer(1994)]{furnkranz1994incremental}
J.~F{\"u}rnkranz and G.~Widmer.
\newblock Incremental reduced error pruning.
\newblock In \emph{Proceedings of the 11th International Conference on Machine
  Learning}, pages 70--77, San Francisco, 1994. Morgan Kaufmann Publishers Inc.

\bibitem[Guidotti and Ruggieri(2019)]{guidotti2019stability}
R.~Guidotti and S.~Ruggieri.
\newblock On the stability of interpretable models.
\newblock In \emph{International Joint Conference on Neural Networks}, pages
  1--8, Piscataway, 2019. IEEE.

\bibitem[Guidotti et~al.(2018)Guidotti, Monreale, Ruggieri, Turini, Giannotti,
  and Pedreschi]{guidotti2018survey}
R.~Guidotti, A.~Monreale, S.~Ruggieri, F.~Turini, F.~Giannotti, and
  D.~Pedreschi.
\newblock A survey of methods for explaining black box models.
\newblock \emph{ACM Computing Surveys}, 51:\penalty0 1--42, 2018.

\bibitem[He and Yu(2010)]{he2010stable}
Z.~He and W.~Yu.
\newblock Stable feature selection for biomarker discovery.
\newblock \emph{Computational Biology and Chemistry}, 34:\penalty0 215--225,
  2010.

\bibitem[Hoeffding(1948)]{Hoeffding1948}
W.~Hoeffding.
\newblock A class of statistics with asymptotically normal distribution.
\newblock \emph{The Annals of Mathematical Statistics}, 19:\penalty0 293--325,
  09 1948.
\newblock \doi{10.1214/aoms/1177730196}.
\newblock URL \url{https://doi.org/10.1214/aoms/1177730196}.

\bibitem[Hornik et~al.(2009)Hornik, Buchta, and Zeileis]{hornik2020package}
K.~Hornik, C.~Buchta, and A.~Zeileis.
\newblock Open-source machine learning: {R} meets {Weka}.
\newblock \emph{Computational Statistics}, 24:\penalty0 225--232, 2009.

\bibitem[Johnson and Hahsler(2020)]{johnson1arulescba}
I.~Johnson and M.~Hahsler.
\newblock \emph{arulesCBA: Classification Based on Association Rules}, 2020.
\newblock URL \url{https://CRAN.R-project.org/package=arulesCBA}.
\newblock R package version 1.1.6.

\bibitem[Kuhn and Quinlan(2020)]{kuhn2014c50}
M.~Kuhn and R.~Quinlan.
\newblock \emph{C50: C5.0 Decision Trees and Rule-Based Models}, 2020.
\newblock URL \url{https://CRAN.R-project.org/package=C50}.
\newblock R package version 0.1.3.

\bibitem[Kumbier et~al.(2018)Kumbier, Basu, Brown, Celniker, and
  Yu]{kumbier2018refining}
K.~Kumbier, S.~Basu, J.B. Brown, S.~Celniker, and B.~Yu.
\newblock Refining interaction search through signed iterative random forests.
\newblock \emph{arXiv:1810.07287}, 2018.

\bibitem[Lakkaraju et~al.(2016)Lakkaraju, Bach, and
  Leskovec]{lakkaraju2016interpretable}
H.~Lakkaraju, S.H. Bach, and J.~Leskovec.
\newblock Interpretable decision sets: A joint framework for description and
  prediction.
\newblock In \emph{Proceedings of the 22nd ACM SIGKDD International Conference
  on Knowledge Discovery and Data Mining}, pages 1675--1684, New York, 2016.
  ACM.

\bibitem[Letham(2015)]{letham2015statistical}
B.~Letham.
\newblock \emph{Statistical learning for decision making: Interpretability,
  uncertainty, and inference}.
\newblock PhD thesis, Massachusetts Institute of Technology, 2015.

\bibitem[Letham et~al.(2015)Letham, Rudin, McCormick, and
  Madigan]{letham2015interpretable}
B.~Letham, C.~Rudin, T.H. McCormick, and D.~Madigan.
\newblock Interpretable classifiers using rules and bayesian analysis: Building
  a better stroke prediction model.
\newblock \emph{The Annals of Applied Statistics}, 9:\penalty0 1350--1371,
  2015.

\bibitem[Lipton(2016)]{lipton2016mythos}
Z.C. Lipton.
\newblock The mythos of model interpretability.
\newblock \emph{arXiv:1606.03490}, 2016.

\bibitem[Liu et~al.(1998)Liu, Hsu, and Ma]{liu1998integrating}
B.~Liu, W.~Hsu, and Y.~Ma.
\newblock Integrating classification and association rule mining.
\newblock In \emph{Proceedings of the 14th International Conference on
  Knowledge Discovery and Data Mining}, volume~98, pages 80--86, New York,
  1998. ACM.

\bibitem[Meinshausen(2010)]{meinshausen2010node}
N.~Meinshausen.
\newblock Node harvest.
\newblock \emph{The Annals of Applied Statistics}, 4:\penalty0 2049--2072,
  2010.

\bibitem[Meinshausen(2015)]{meinshausen2015package}
N.~Meinshausen.
\newblock \emph{Node harvest}, 2015.
\newblock URL \url{https://CRAN.R-project.org/package=nodeHarvest}.
\newblock R package version 0.7-3.

\bibitem[Mentch and Hooker(2016)]{mentchquantifying2016}
L.~Mentch and G.~Hooker.
\newblock Quantifying uncertainty in random forests via confidence intervals
  and hypothesis tests.
\newblock \emph{Journal of Machine Learning Research}, 17:\penalty0 841--881,
  2016.

\bibitem[Michalski(1969)]{michalski1969quasi}
R.S. Michalski.
\newblock On the quasi-minimal solution of the general covering problem.
\newblock In \emph{Proceedings of the Fifth International Symposium on
  Information Processing}, pages 125--128, New York, 1969. ACM.

\bibitem[Murdoch et~al.(2019)Murdoch, Singh, Kumbier, Abbasi-Asl, and
  Yu]{murdoch2019interpretable}
W.J. Murdoch, C.~Singh, K.~Kumbier, R.~Abbasi-Asl, and B.~Yu.
\newblock Interpretable machine learning: Definitions, methods, and
  applications.
\newblock \emph{arXiv:1901.04592}, 2019.

\bibitem[Oates and Jensen(1997)]{oates1997ects}
T.~Oates and D.~Jensen.
\newblock The effects of training set size on decision tree complexity.
\newblock In \emph{Proceedings of the 14th International Conference on Machine
  Learning}, pages 254--262, San Francisco, 1997. Morgan Kaufmann Publishers
  Inc.

\bibitem[Olver et~al.(2010)Olver, Lozier, Boisvert, and Clark]{olver2010nist}
F.W.J. Olver, D.W. Lozier, R.F. Boisvert, and C.W. Clark.
\newblock \emph{NIST Handbook of Mathematical Functions Hardback and CD-ROM}.
\newblock Cambridge University Press, 2010.

\bibitem[Piech(2016)]{titanicdata}
C.~Piech.
\newblock Titanic dataset.
\newblock
  \url{https://web.stanford.edu/class/archive/cs/cs109/cs109.1166/problem12.html},
  2016.
\newblock Accessed: 2020-10-26.

\bibitem[Poggio et~al.(2004)Poggio, Rifkin, Mukherjee, and
  Niyogi]{poggio2004general}
T.~Poggio, R.~Rifkin, S.~Mukherjee, and P.~Niyogi.
\newblock General conditions for predictivity in learning theory.
\newblock \emph{Nature}, 428:\penalty0 419--422, 2004.

\bibitem[Quinlan(1990)]{quinlan1990learning}
J.R. Quinlan.
\newblock Learning logical definitions from relations.
\newblock \emph{Machine learning}, 5:\penalty0 239--266, 1990.

\bibitem[Quinlan(1992)]{quinlan1992c4}
J.R. Quinlan.
\newblock \emph{C4.5: Programs for Machine Learning}.
\newblock Morgan Kaufmann Publishers Inc., San Mateo, 1992.

\bibitem[Quinlan and Cameron-Jones(1995)]{quinlan1995induction}
J.R. Quinlan and R.M. Cameron-Jones.
\newblock Induction of logic programs: Foil and related systems.
\newblock \emph{New Generation Computing}, 13:\penalty0 287--312, 1995.

\bibitem[Ribeiro et~al.(2016)Ribeiro, Singh, and Guestrin]{ribeiro2016should}
M.T. Ribeiro, S.~Singh, and C.~Guestrin.
\newblock Why should {I} trust you? {E}xplaining the predictions of any
  classifier.
\newblock In \emph{Proceedings of the 22nd ACM SIGKDD International Conference
  on Knowledge Discovery and Data Mining}, pages 1135--1144, New York, 2016.
  ACM.

\bibitem[Rivest(1987)]{rivest1987learning}
R.L. Rivest.
\newblock Learning decision lists.
\newblock \emph{Machine Learning}, 2:\penalty0 229--246, 1987.

\bibitem[Rogers and Wagner(1978)]{rogers1978finite}
W.H. Rogers and T.J. Wagner.
\newblock A finite sample distribution-free performance bound for local
  discrimination rules.
\newblock \emph{The Annals of Statistics}, 6:\penalty0 506--514, 1978.

\bibitem[Rudin(2018)]{rudin2018please}
C.~Rudin.
\newblock Please stop explaining black box models for high stakes decisions.
\newblock \emph{arXiv:1811.10154}, 2018.

\bibitem[R{\"u}ping(2006)]{ruping2006learning}
S.~R{\"u}ping.
\newblock \emph{Learning interpretable models}.
\newblock PhD thesis, Universit{\"a}t Dortmund, 2006.

\bibitem[Serfling(2009)]{serfling2009approximation}
R.J. Serfling.
\newblock \emph{Approximation Theorems of Mathematical Statistics}, volume 162.
\newblock John Wiley \& Sons, 2009.

\bibitem[Strobl et~al.(2006)Strobl, Boulesteix, Zeileis, and
  Hothorn]{strobl2006bias}
C.~Strobl, A.-L. Boulesteix, A.~Zeileis, and T.~Hothorn.
\newblock Bias in random forest variable importance measures.
\newblock In \emph{Workshop on Statistical Modelling of Complex Systems}.
  Citeseer, 2006.

\bibitem[Therneau and Atkinson(2019)]{therneau2018package}
T.~Therneau and B.~Atkinson.
\newblock \emph{rpart: Recursive Partitioning and Regression Trees}, 2019.
\newblock URL \url{https://CRAN.R-project.org/package=rpart}.
\newblock R package version 4.1-15.

\bibitem[Tibshirani(1996)]{tibshirani1996regression}
R.~Tibshirani.
\newblock Regression shrinkage and selection via the lasso.
\newblock \emph{Journal of the Royal Statistical Society. Series B
  (Methodological)}, pages 267--288, 1996.

\bibitem[Tolomei et~al.(2017)Tolomei, Silvestri, Haines, and
  Lalmas]{tolomei2017interpretable}
G.~Tolomei, F.~Silvestri, A.~Haines, and M.~Lalmas.
\newblock Interpretable predictions of tree-based ensembles via actionable
  feature tweaking.
\newblock In \emph{Proceedings of the 23rd ACM SIGKDD International Conference
  on Knowledge Discovery and Data Mining}, pages 465--474, New York, 2017. ACM.

\bibitem[Vapnik(1998)]{vapnik1998statistical}
V.~Vapnik.
\newblock \emph{Statistical Learning Theory}.
\newblock Wiley, New York, 1998.

\bibitem[Weiss and Indurkhya(2000)]{weiss2000lightweight}
S.M. Weiss and N.~Indurkhya.
\newblock Lightweight rule induction.
\newblock In \emph{Proceedings of the Seventeenth International Conference on
  Machine Learning}, pages 1135--1142, San Francisco, 2000. Morgan Kaufmann
  Publishers Inc.

\bibitem[Wright and Ziegler(2017)]{wright2017ranger}
M.N. Wright and A.~Ziegler.
\newblock ranger: A fast implementation of random forests for high dimensional
  data in {C}++ and {R}.
\newblock \emph{Journal of Statistical Software}, 77:\penalty0 1--17, 2017.

\bibitem[Yang et~al.(2017)Yang, Rudin, and Seltzer]{yang2017scalable}
H.~Yang, C.~Rudin, and M.~Seltzer.
\newblock Scalable {B}ayesian rule lists.
\newblock In \emph{Proceedings of the 34th International Conference on Machine
  Learning}, pages 3921--3930, Cambridge MA, 2017. JMLR.

\bibitem[Yin and Han(2003)]{yin2003cpar}
X.~Yin and J.~Han.
\newblock {CPAR}: Classification based on predictive association rules.
\newblock In \emph{Proceedings of the 2003 SIAM International Conference on
  Data Mining}, pages 331--335, Philadelphia, 2003. SIAM.

\bibitem[Yu(2013)]{yu2013stability}
B.~Yu.
\newblock Stability.
\newblock \emph{Bernoulli}, 19:\penalty0 1484--1500, 2013.

\bibitem[Yu and Kumbier(2019)]{yu2019three}
B.~Yu and K.~Kumbier.
\newblock Three principles of data science: Predictability, computability, and
  stability ({PCS}).
\newblock \emph{arXiv:1901.08152}, 2019.

\bibitem[Zaki et~al.(1997)Zaki, Parthasarathy, Ogihara, and
  Li]{zaki1997parallel}
M.J. Zaki, S.~Parthasarathy, M.~Ogihara, and W.~Li.
\newblock Parallel algorithms for discovery of association rules.
\newblock \emph{Data Mining and Knowledge Discovery}, 1:\penalty0 343--373,
  1997.

\bibitem[Zucknick et~al.(2008)Zucknick, Richardson, and
  Stronach]{zucknick2008comparing}
M.~Zucknick, S.~Richardson, and E.A. Stronach.
\newblock Comparing the characteristics of gene expression profiles derived by
  univariate and multivariate classification methods.
\newblock \emph{Statistical Applications in Genetics and Molecular Biology},
  7:\penalty0 1--34, 2008.

\end{thebibliography}

\newpage

\appendix


\section{Additional Experiments} \label{appendix_add_xp}

\subsection{Robustness Illustration} \label{appendix_stability}

\FloatBarrier

For the SECOM dataset used in the experimental Section 5 of the article, only three rule algorithms achieve the same predictivity as random forests: RuleFit, Node harvest, and SIRUS (1-AUC of $0.30$, whereas CART and BRL are no better than the random classifier with an error of 1-AUC $= 0.5$). SIRUS produces a short and stable list of $6$ rules, while RuleFit and Node harvest generate complex, long, and unstable rule lists. 
Rule algorithms based on tree ensembles are stochastic since they rely on the tree randomness $\Theta_1, \hdots, \Theta_M$. Consequently, RuleFit and Node harvest output different rule lists when run multiple times on the same dataset. Such behavior is a strong limitation in practice, as domain experts become skeptical of the algorithm conclusions. On the other hand, SIRUS is built to have a robust rule extraction mechanism, and the same list of rules is output over multiple repetitions with the same data, as proved in Theorem \ref{stability_equivalent} in the next Section.

To illustrate this, we run each algorithm twice on the SECOM dataset, and display the output models in Figure \ref{figure_sirus_secom} for SIRUS, Figure \ref{figure_nodeharvest_secom} for Node harvest, and Figure \ref{figure_rulefit_secom} for RuleFit.
We set the regularization parameter of Node harvest and SIRUS as explained in Subsection $5.3$ of the article, to maximize accuracy with the smallest possible model: for Node harvest $\lambda = 4$, and for SIRUS $p_0 = 0.04$. RuleFit is tuned as defined in \citet{friedman2008predictive}.
Figures \ref{figure_nodeharvest_secom} and \ref{figure_rulefit_secom} show that the rule lists output by RuleFit and Node harvest are quite different across multiple runs with the exact same data, while SIRUS has the same output.

We also observe that for the same accuracy, RuleFit and Node harvest models are longer and more complex than SIRUS. In addition, rules are aggregated using weights to generate predictions. This is not the case for SIRUS, which simply averages the $6$ output rules.
Finally, we can also mention that manually increasing the regularization of Node harvest, to reduce the model size to $6$ rules as in SIRUS, strongly hurts accuracy, which drops to $0.39$.
\begin{figure}
\begin{center}
\setlength{\fboxrule}{1pt}
\fbox{\begin{minipage}{0.85\textwidth}
\includegraphics[scale = 0.9]{sirus_secom.png}
\end{minipage}}
\fbox{\begin{minipage}{0.85\textwidth}
\includegraphics[scale = 0.9]{sirus_secom.png}
\end{minipage}}
\caption{\small{The two lists of rules output by two runs of SIRUS for the SECOM dataset.}}
\label{figure_sirus_secom}
\end{center}
\end{figure}
\begin{figure}
\begin{center}
\setlength{\fboxrule}{1pt}
\fbox{\begin{minipage}{0.7\textwidth}
\includegraphics[scale = 0.83]{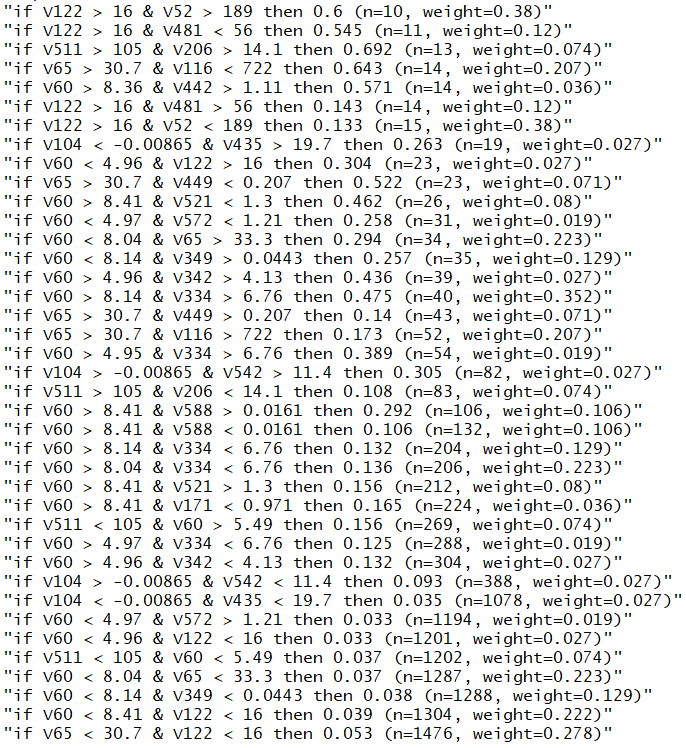}
\end{minipage}}
\fbox{\begin{minipage}{0.7\textwidth}
\includegraphics[scale = 0.83]{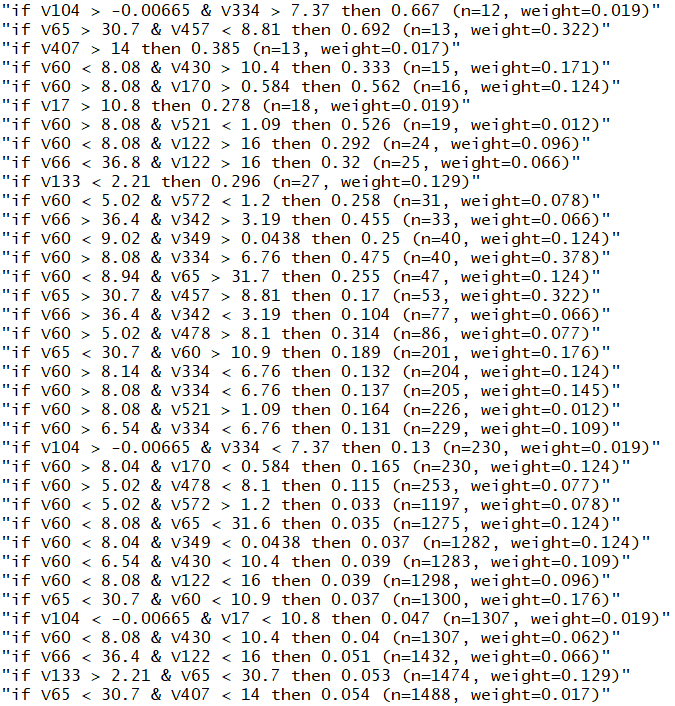}
\end{minipage}}
\caption{\small{The two lists of rules output by two runs of Node harvest for the SECOM dataset.}}
\label{figure_nodeharvest_secom}
\end{center}
\end{figure}
\begin{figure}
\begin{center}
\setlength{\fboxrule}{1pt}
\fbox{\begin{minipage}{0.9\textwidth}
\includegraphics[scale = 1]{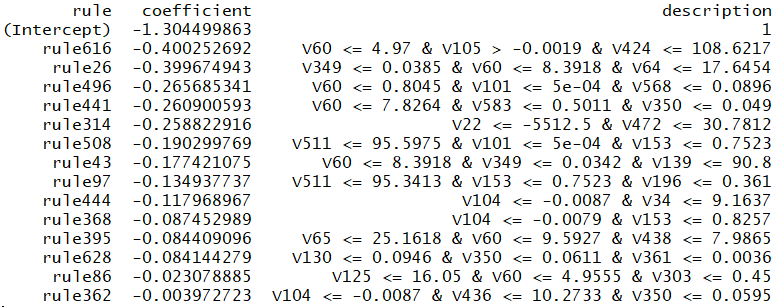}
\end{minipage}}
\fbox{\begin{minipage}{0.9\textwidth}
\includegraphics[scale = 1]{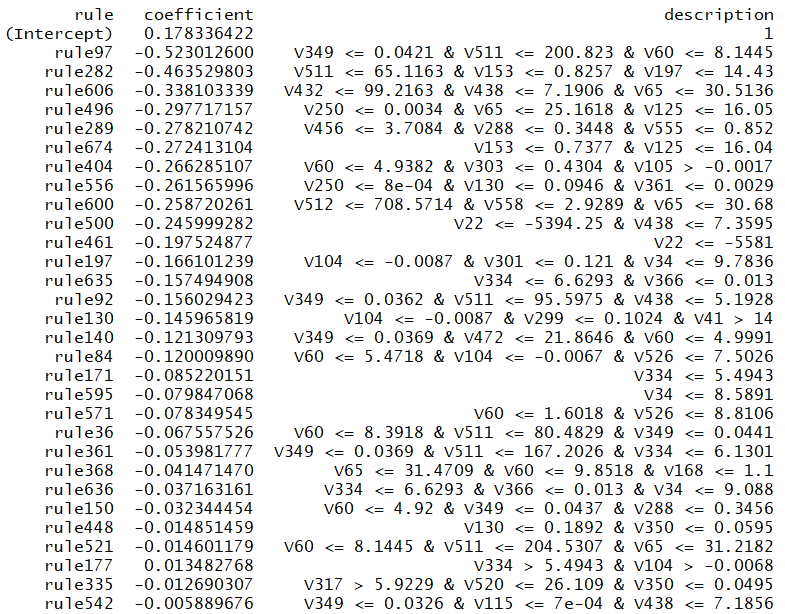}
\end{minipage}}
\caption{\small{The two lists of rules output by two runs of RuleFit for the SECOM dataset.}}
\label{figure_rulefit_secom}
\end{center}
\end{figure}

\subsection{Additional Competitors} \label{appendix_xp}

Additional experiments are provided to compare SIRUS to other competitors: C5.0 \citep{quinlan1992c4} (decision tree), PART \citep{frank1998generating}, and FOIL \citep{quinlan1995induction} (classical rule learning algorithms). Model size results are provided in Table \ref{table_size_appendix}, stability in Table \ref{table_stability_appendix}, and error in Table \ref{table_auc_appendix}. The stability and accuracy improvement of SIRUS is clear.

\begin{table}
\centering
\begin{tabular}{|c | c | c | c | c |}
  \hline \hline
  \textbf{Dataset} & \textbf{C5.0} & \textbf{PART} & \textbf{FOIL} & \textbf{SIRUS} \\
  \hline
    Authentification & 11 & 8 & 20 & 13 \\
    Breast Wisconsin & 5 & 10 & 41 & 24 \\
    Credit Approval & 9 & 32 &  40 & 16 \\
    Credit German & 22 & 68 & 101 & 22 \\
    Diabetes & 12 & 7 & 36  & 8 \\
    Haberman & 2 & 2 & 4 & 5\\
    Heart C2 & 10 & 20 & 31 & 20 \\
    Heart H2 & 4 & 15 & 29 & 12 \\	
    Heart Statlog & 10 & 18 & 28 & 15 \\
    Hepatitis & 7 & 8 & 14 & 12 \\
    Ionosphere & 9 & 6 & 28 & 15 \\
    Kr vs Kp & 11 & 21 & 24 & 24 \\
    Liver Disorders & 14 & 7 & 2 & 17 \\
    Mushrooms & 7 & 9 & 14 & 23 \\
    Sonar & 10 & 6 & 20 & 19 \\
    Spambase & 29 & 46 & 73 & 21 \\
    Titanic & 7 & 15 & 17 & 6 \\
    Vote & 5 & 7 & 19 & 7 \\
    Wilt & 10 & 8 & 10 & 24 \\
  \hline \hline
\end{tabular}
\vspace*{-2mm}
\caption{Mean model size over a $10$-fold cross-validation for UCI datasets (averaged over $10$ repetitions).}
\label{table_size_appendix}
\end{table}

\begin{table} 
\centering
\begin{tabular}{|c | c | c | c | c |}
  \hline \hline
  \textbf{Dataset} & \textbf{C5.0} & \textbf{PART} & \textbf{FOIL} & \textbf{SIRUS} \\
  \hline
    Authentification & 0.44 & 0.43 & \B 0.81 & \B 0.81 \\
    Breast Wisconsin & 0.17 & 0.49 & 0.36 & \B 0.70 \\
    Credit Approval & 0.18 & 0.31 & 0.17 & \B 0.75 \\
    Credit German & 0.03 & 0.16 & 0.11 & \B 0.65 \\
    Diabetes & 0.07 & 0.15 & 0.18 & \B 0.81 \\
    Haberman & 0.28 & 0.25 & \B 0.64 & \B 0.65 \\
    Heart C2 & 0.09 & 0.15 & 0.16 & \B 0.71 \\
    Heart H2 & 0.32 & 0.31 & 0.39 & \B 0.65 \\
    Heart Statlog & 0.11 & 0.15 & 0.15 & \B 0.82 \\
    Hepatitis & 0.10 & 0.15 & 0.05 & \B 0.68 \\
    Ionosphere & 0.24 & 0.13 & 0.07 & \B 0.69 \\
    Kr vs Kp & 0.65 & 0.51 & 0.85 & \B 0.87 \\
    Liver Disorders & 0.05 & 0.07 & \B 0.69 & 0.58 \\
    Mushrooms & 0.79 & 0.78 & \B 0.93 & \B 0.86 \\
    Sonar & 0.06 & 0.06 & 0.04 & \B 0.55 \\
    Spambase & 0.08 & 0.08 & 0.11 & \B 0.78 \\
    Titanic & 0.49 & 0.27 & \B 0.77 & \B 0.76 \\
    Vote & \B 0.67 & 0.40 & 0.39 & \B 0.75 \\
    Wilt & 0.34 & 0.37 & 0.48 & \B 0.73 \\
  \hline \hline
\end{tabular}
\vspace*{-2mm}
\caption{Mean stability over a $10$-fold cross-validation for UCI datasets (averaged over $10$ repetitions). Values within 10\% of the maximum are displayed in bold.}
\label{table_stability_appendix}
\end{table}

\begin{table} 
\centering
\begin{tabular}{|c | c | c | c | c | c | c |}
  \hline \hline
  \textbf{Dataset} & \textbf{C5.0} & \textbf{PART} & \textbf{FOIL} & \textbf{SIRUS} \\
  \hline
    Authentification & 0.02 & \B 0.01 & 0.08 & 0.03 \\
    Breast Wisconsin & 0.06 & 0.07 & 0.08 & \B  0.01 \\
    Credit Approval & 0.15 & 0.17 & 0.15 & \B 0.09 \\
    Credit German & 0.37 & 0.36 & 0.41 & \B 0.25 \\
    Diabetes & 0.28 & 0.30 & 0.28  & \B 0.19 \\	
    Haberman & 0.46 & 0.42 & 0.50 & \B 0.35 \\	
    Heart C2 & 0.20 & 0.23 & 0.19 & \B 0.10 \\	
    Heart H2 & 0.23 & 0.23 & 0.23 & \B 0.12 \\	
    Heart Statlog & 0.21 & 0.24 & 0.20 & \B 0.10 \\	
    Hepatitis & 0.34 & 0.34 & 0.39 & \B 0.17 \\
    Ionosphere & 0.10 & 0.10 & 0.13 & \B 0.07 \\
    Kr vs Kp & \B 0.006 & 0.008 & 0.02 & 0.04 \\
    Liver Disorders & \B 0.34 & \B 0.38 & 0.50 & \B 0.35 \\
    Mushrooms & 0.001 & 0 & $\pmb{6.10^{-5}}$ & $6.10^{-4}$ \\
    Sonar & 0.26 & 0.26 & 0.26 & \B 0.2 \\
    Spambase & \B 0.07 & \B 0.07 & 0.12 & \B 0.07 \\
    Titanic & 0.20 & 0.20 & 0.25 & \B 0.17 \\
    Vote & 0.04 & 0.05 & 0.05 & \B 0.02 \\
    Wilt & 0.15 & 0.17 & 0.46 & \B 0.11 \\
  \hline \hline
\end{tabular}
\vspace*{-2mm}
\caption{Model error (1-AUC) over a $10$-fold cross-validation for UCI datasets (averaged over $10$ repetitions). Values within 10\% of the minimum are displayed in bold.}
\label{table_auc_appendix}
\end{table}

\subsection{Rule Aggregation}
\label{simu_mean_logit}
In Section $3$ of the article, $\hat{\eta}_{M,n,p_{0}}(\bx)$ ($3.3$) is a simple average of the set of rules, defined as
\begin{align} \label{eq_eta_mean}
\hat{\eta}_{M,n,p_{0}}(\bx) = \frac{1}{| \hat{\mathscr{P}}_{M,n,p_{0}}|}
 \sum_{\path \in \pathset_{M,n,p_{0}}}  \hat{g}_{n, \path}( \bx).
\end{align}
To tackle our binary classification problem, a natural approach would be to use a logistic regression and define 
\begin{align} \label{eq_eta_logit}
\ln\Big(\frac{\hat{\eta}_{M,n,p_{0}}(\bx)}{1 - \hat{\eta}_{M,n,p_{0}}(\bx)}\Big)
 = \sum_{\path \in \pathset_{M,n,p_{0}}} \beta_{\path}   \hat{g}_{n, \path}( \bx),
\end{align}
where the coefficients $\beta_{\path}$ have to be estimated.
To illustrate the performance of the logistic regression (\ref{eq_eta_logit}), we consider again the UCI dataset, ``Credit German''. We augment the previous results from Figure $4$ (in Section $5$ of the article) with the logistic regression error in Figure \ref{figure_auc_logit_cg}.
\begin{figure}
\begin{center}
\includegraphics[height=6cm,width=9cm]{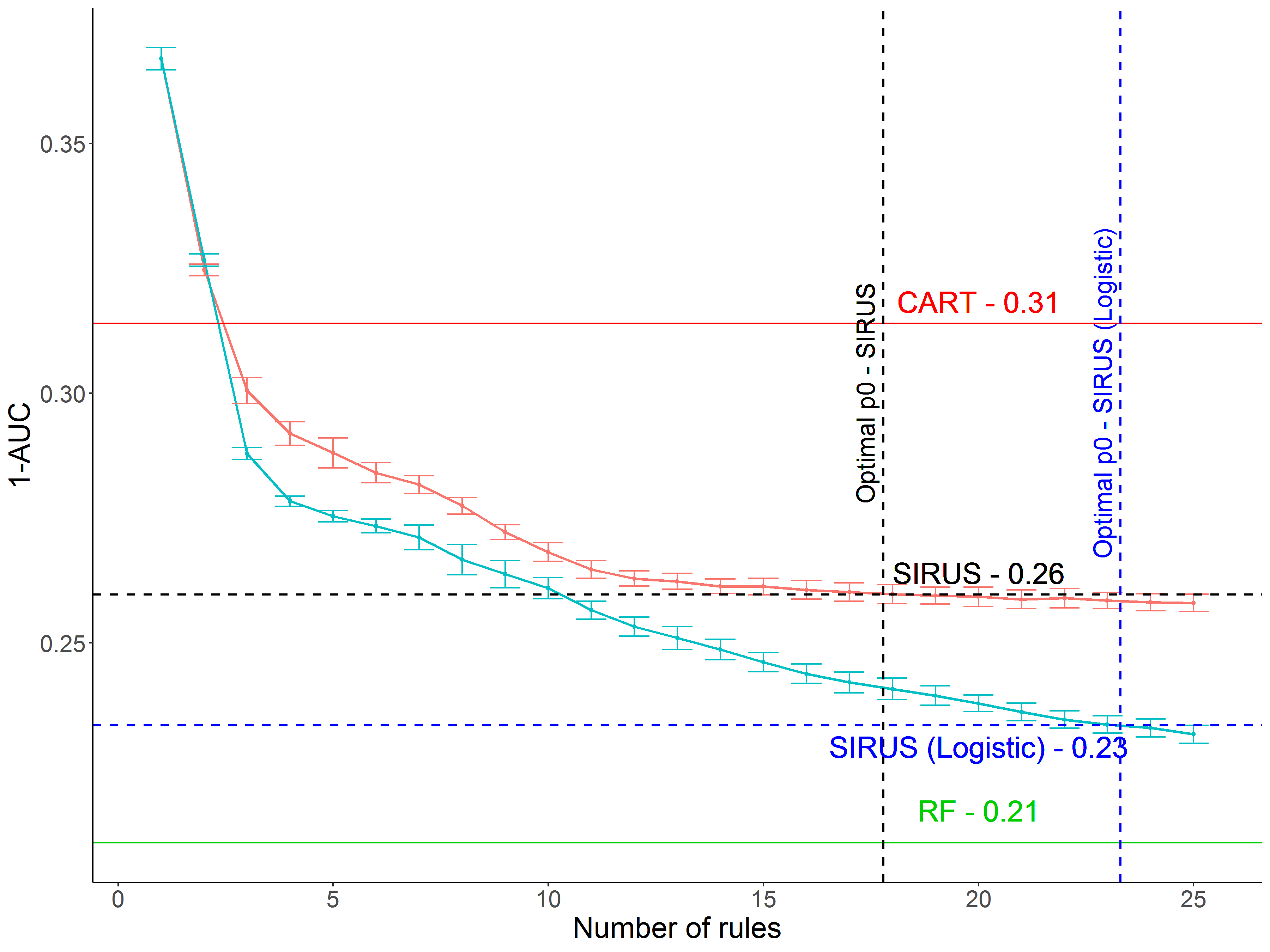}
\caption{\small{For the UCI dataset ``Credit German'', 1-AUC versus the number of rules when $p_0$ varies, estimated via 10-fold cross-validation (repeated $30$ times) for two different methods of rule aggregation: the rule average (\ref{eq_eta_mean}) in red and a logistic regression (\ref{eq_eta_logit}) in blue.}}
\label{figure_auc_logit_cg}
\end{center}
\end{figure}
One can observe that the predictive accuracy is slightly improved but it comes at the price of an additional set of coefficients that can be hard to interpret (some can be negative), and an increased computational cost. Notice that categorical variables are one-hot-encoded in this example.

\section{Stopping Criterion for the Number of Trees $M$} \label{appendix_M}

We recall that the definition of the stopping criterion ($5.1$) of the forest growing is provided in Section $5$ of the main article. First, we provide three groups of experiments to show its good empirical efficiency. In the second subsection, we provide theoretical properties of the stopping criterion.

\subsection{Experiments} \label{appendix_M_xp}
The following experiments on the UCI datasets show the good empirical performance of the stopping criterion ($5.1$).
Recall that the goal of this criterion is to determine the minimum number of trees $M$ ensuring that two independent fits of SIRUS on the same dataset result in two lists of rules with an overlap of $95$\% in average. This is checked with a first batch of experiments---see next paragraph. Secondly, the stopping criterion (5.1) does not consider the optimal $p_0$, unknown when trees are grown in the first step of SIRUS. Then, another batch of experiments is run to show that the stability approximation $1 - \varepsilon_{M,n,p_0}$ is quite insensitive to $p_0$. Finally, a last batch of experiments provides examples of the number of trees grown when SIRUS is fit.
Notice that for these experiments, categorical variables are one-hot-encoded.
\paragraph{Experiments 1.}
For each dataset, the following procedure is applied.
SIRUS is run a first time using criterion (5.1) to stop the number of trees. This initial run provides the optimal number of trees $M$ as well as the set $\hat{V}_{M,n}$ of possible $p_0$. Then, SIRUS is fit twice independently using the precomputed number of trees $M$. For each $p_0 \in \hat{V}_{M,n}$, the stability metric $\hat{S}_{M,n,p_{0}}$ (with $\Dn'=\Dn$) is computed over the two resulting lists of rules. Finally $\hat{S}_{M,n,p_{0}}$ is averaged across all $p_0$ values in $\hat{V}_{M,n}$. This procedure is repeated $10$ times: results are averaged and presented in Table \ref{table_epsilon}, with standard deviations in parentheses. Across the considered datasets, resulting values range from $0.941$ to $0.955$, and are thus close to $0.95$ as expected by construction of criterion (5.1).
\begin{table}
\centering
\begin{tabular}{|c | c |}
  \hline \hline
  \textbf{Dataset} & \textbf{Mean stability} \\
  \hline
Haberman & 0.950 (0.01) \\	
Diabetes & 0.950 (0.007) \\	
Heart Statlog & 0.954 (0.007)	 \\	
Liver Disorders	&   0.951 (0.006) \\	
Heart C2	& 0.955 (0.009) \\	
Heart H2	& 0.952 (0.009) \\	
Credit German	&0.950 (0.008)  \\	
Credit Approval	& 0.941 (0.02)  \\	
Ionosphere & 0.950 (0.009) \\	
  \hline \hline
\end{tabular}
\vspace*{1.5mm}
\caption{\small{Values of $\hat{S}_{M,n,p_{0}}$ averaged over $p_0 \in \hat{V}_{M,n}$ when the stopping criterion (5.1) is used to set $M$, for UCI datasets. 
Results are averaged over $10$ repetitions and standard deviations are displayed in parentheses.}}
 \label{table_epsilon}
\end{table}
\paragraph{Experiments 2.} The second type of experiments illustrates that $\varepsilon_{M,n,p_0}$ is quite insensitive to $p_0$ when $M$ is set with criterion (5.1). For the ``Credit German'' dataset, we fit SIRUS and then compute $1 - \varepsilon_{M,n,p_0}$ for each $p_0 \in \hat{V}_{M,n}$. Results are
displayed in Figure \ref{figure_epsilon}. $1 - \varepsilon_{M,n,p_0}$ ranges from $0.90$ to $1$, where the extreme values are reached for $p_0$ corresponding to very small number of rules, which are not of interest when $p_0$ is selected to maximize predictive accuracy. Thus, $1 - \varepsilon_{M,n,p_0}$ is quite concentrated around $0.95$ when $p_0$ varies.
\begin{figure}
\begin{center}
\includegraphics[height=8cm,width=12cm]{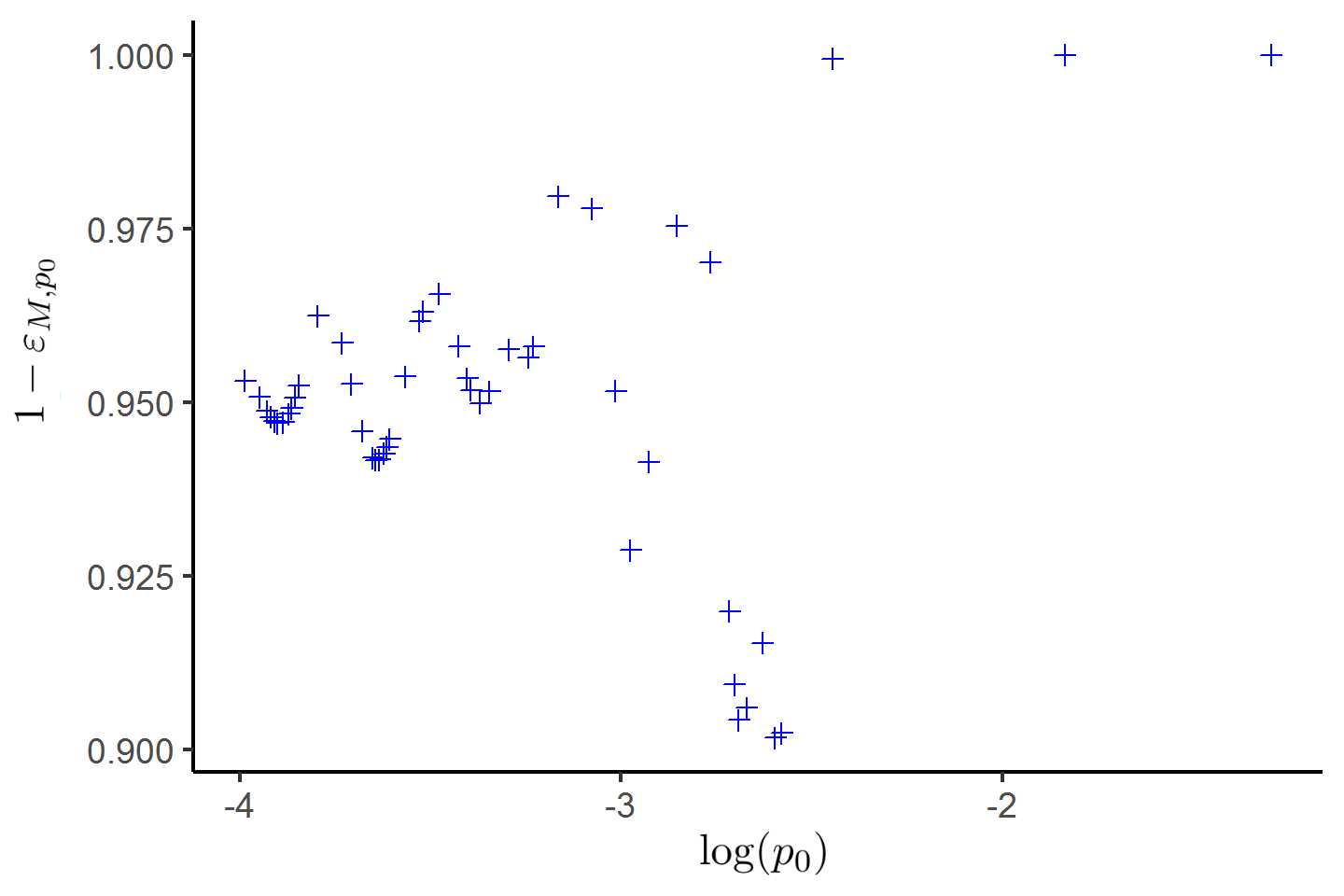}
\caption{\small{For the UCI dataset ``Credit German'', $1 - \varepsilon_{M,n,p_0}$ for a sequence of $p_0 \in \hat{V}_{M,p_0}$
 corresponding to final models ranging from $1$ to about $25$ rules.}}
\label{figure_epsilon}
\end{center}
\end{figure}
\paragraph{Experiments 3.} Finally, we display in Table \ref{table_M} the optimal number of trees when the growing of SIRUS is stopped using criterion (5.1). 
It ranges from $4220$ to $20\,650$ trees. In Breiman's forests, the number of trees above which the accuracy cannot be significantly improved  is typically $10$ times lower. However SIRUS grows shallow trees, and is thus not computationally more demanding than random forests overall.
\begin{table}
\centering
\begin{tabular}{|c | c |}
  \hline \hline
  \textbf{Dataset} & \textbf{Nb of trees (sd)} \\
  \hline
Haberman & 10\,920 (877) \\	
Diabetes & 18\,830 (1538) \\	
Heart Statlog &	7840 (994) \\	
Liver Disorders	& 14\,650 (1242)  \\	
Heart C2	& 6840 (1270) \\	
Heart H2	& 4220 (529) \\	
Credit German	&  7940 (672) \\	
Credit Approval	&  20\,650 (8460) \\	
Ionosphere & 7320 (487)  \\	
  \hline \hline
\end{tabular}
\vspace*{1.5mm}
\caption{\small{Number of trees $M$ determined by the stopping criterion (5.1) for UCI datasets. 
Results are averaged over $10$ repetitions and standard deviations are displayed in parentheses.}}
 \label{table_M}
\end{table}

\subsection{Theoretical Properties} \label{appendix_M_theory}
We emphasize that growing more trees does not improve predictive accuracy or stability with respect to data perturbation for a fixed sample size $n$. 
Indeed, the instability of the rule selection is generated by the variance of the estimates $\hat{p}_{M,n}(\path), \path \in \Pi$.
Upon noting that we have two sources of randomness---$\Theta$ and $\mathscr{D}_{n}$---, the law of total variance shows that  $\mathbb{V}[\hat{p}_{M,n}(\path)]$ can be broken down into two terms: the variance generated by the Monte Carlo randomness $\Theta$ on the one hand, and the sampling variance on the other hand. In fact, equation (\ref{proof_thm1_variance}) in the proof of Theorem \ref{theorem_consistency} below reveals that
\[
\mathbb{V} [ \hat{p}_{M, n} (\path)]
= \frac{1}{M} \E[p_n(\path)] ( 1 - \E[p_n(\path)]) + ( 1 - \frac{1}{M} ) \mathbb{V} [p_n(\path) ] . 
 \] 
The stopping criterion (5.1) ensures that the first term becomes negligible as $M \to \infty$, so that
 $\mathbb{V}[\hat{p}_{M,n}(\path)]$ reduces to the sampling variance $\mathbb{V}[p_n(\path)]$, which is independent of $M$. 
Therefore, stability with respect to data perturbation cannot be further improved by increasing the number of trees. 
Additionally, the trees are only involved in the selection of the paths. For a given set of paths $\pathset_{M,n,p_0}$, the construction of 
the final aggregated estimate $\hat{\eta}_{M,n,p_0}$ (see Section $3$ of the article) is independent of the forest. 
Thus, if further increasing the number of trees does not impact the path selection, neither it improves the predictive accuracy.

Next, Theorem~\ref{stability_equivalent} states that conditionally on $\mathscr{D}_{n}$ and with $\mathscr{D}_{n}' = \mathscr{D}_{n}$, $\hat{S}_{M,n,p_{0}}$ should be close to $1$, and also provides an asymptotic approximation of $\smash{\mathbb{E} [ \hat{S}_{M,n,p_{0}} | \mathscr{D}_{n}]}$ for large values of the number of trees $M$, which quantifies the influence of $M$ on  the mean stability, conditional on $\mathscr{D}_{n}$. 
We let $\smash{\setpemp \stackrel{\tiny{\rm def}}{=} \{ p_n(\path):  \path \in \Pi \}}$ be the empirical counterpart of $\mathcal{U}^{\star}$.
\setcounter{theorem}{1}
\begin{theorem}
\label{stability_equivalent}
If $p_{0}\in[0, 1] \setminus \setpemp$ and $\mathscr{D}'_{n} = \mathscr{D}_{n}$, then, conditional on $\mathscr{D}_{n}$, we have
\begin{align} \label{eq_stab_M}
\lim \limits_{M \to \infty} \hat{S}_{M,n,p_{0}} = 1 \quad \textrm{in probability.}
\end{align}
In addition, for all $p_{0} < \max~\setpemp$,
\begin{align*}
1 - & \mathbb{E} [ \hat{S}_{M,n,p_{0}} | \mathscr{D}_{n}] \\ & \underset{M \to \infty}{\sim} \sum_{\path \in \Pi}
\frac{\Phi( M p_{0}, M, p_n(\path)) ( 1 - \Phi\big( M p_{0}, M,
p_n(\path)))}
{\frac{1}{2} \sum_{\path' \in \Pi} \mathds{1}_{p_n(\path') > p_{0}} +
\mathds{1}_{p_n(\path') > p_{0} - \rho_{n}(\path, \path')
\frac{\sigma_{n}( \path')}{\sigma_{n}( \path) }(p_{0} - p_n(\path)) }},
\end{align*}
where $\Phi ( M p_{0}, M, p_n(\path))$ is the cdf of a binomial distribution with parameter 
$p_n(\path)$, $M$ trials, evaluated at $M p_{0}$, and, for all $\path, \path' \in \Pi$, 
\[
\sigma_{n}( \path) = \sqrt{p_n(\path) (1 -  p_n(\path))},
\]
and
\[
 \rho_{n}(\path, \path') = \frac{\emph{Cov}(\mathds{1}_{\path \in T(\Theta, \mathscr{D}_{n})}, 
\mathds{1}_{\path' \in T(\Theta, \mathscr{D}_{n})} | \mathscr{D}_{n} )}
{\sigma_{n}( \path ) \sigma_{n}( \path')}.
\]
\end{theorem}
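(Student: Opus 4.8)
The plan is to prove the two assertions separately, both conditionally on $\mathscr{D}_n$, exploiting that once $\mathscr{D}_n$ (hence $\mathscr{D}'_n=\mathscr{D}_n$) is fixed, the estimates $\hat{p}_{M,n}(\path)$ and $\hat{p}'_{M,n}(\path)$ are empirical means of i.i.d.\ Bernoulli$(p_n(\path))$ variables driven by the independent randomizations $\Theta_1,\dots,\Theta_M$ and $\Theta'_1,\dots,\Theta'_M$. For the first statement I would start from the identity already used in the proof of Corollary~\ref{corollary_stability}. Conditional on $\mathscr{D}_n$, the strong law of large numbers gives $\hat{p}_{M,n}(\path)\to p_n(\path)$ and $\hat{p}'_{M,n}(\path)\to p_n(\path)$ almost surely as $M\to\infty$. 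Since $p_0\notin\setpemp$, we have $p_n(\path)\neq p_0$ for every $\path$, so each indicator converges, $\mathds{1}_{\hat{p}_{M,n}(\path)>p_0}\to\mathds{1}_{p_n(\path)>p_0}$ a.s.\ (and likewise for the primed version). Passing to the limit in the finite sums over $\Pi$ yields $\hat{S}_{M,n,p_0}\to 1$ a.s.\ (the convention $0/0=1$ covering the case where no path is selected), which gives \eqref{eq_stab_M}.

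For the asymptotic equivalent I would write $1-\hat{S}_{M,n,p_0}=\frac{\sum_{\path\in\Pi}\mathds{1}_{E_\path}}{N_M+N'_M}$, where $N_M=|\pathset_{M,n,p_0}|$, $N'_M=|\pathset'_{M,n,p_0}|$, and $E_\path$ is the event that exactly one of $\hat{p}_{M,n}(\path)>p_0$, $\hat{p}'_{M,n}(\path)>p_0$ holds (using $\mathds{1}_A+\mathds{1}_B-2\mathds{1}_A\mathds{1}_B=\mathds{1}_{A\triangle B}$). Since $\Theta$ and $\Theta'$ are independent given $\mathscr{D}_n$, the two events above are independent with common probability $1-\Phi(Mp_0,M,p_n(\path))$, so $\P(E_\path\mid\mathscr{D}_n)=2\,\Phi(Mp_0,M,p_n(\path))\big(1-\Phi(Mp_0,M,p_n(\path))\big)$, which already produces the numerator of the claimed equivalent. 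As $1/(N_M+N'_M)$ is bounded, the remaining task is to identify, for each $\path$, the conditional limit of $N_M+N'_M$ on $E_\path$.

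This is the main obstacle. The key observation is that $E_\path$ is rare and that its dominant contribution comes from trajectories in which the deviating estimate sits exactly at the threshold, i.e.\ equals $\lceil Mp_0\rceil/M\to p_0$ (a binomial conditioned to cross a level concentrates at the level). Conditioning the corresponding forest on the count of $\path$ being $Mp_0$ and using that the pairs $(\mathds{1}_{\path\in T(\Theta_\ell,\mathscr{D}_n)},\mathds{1}_{\path'\in T(\Theta_\ell,\mathscr{D}_n)})$ are i.i.d., a two-point computation shows that the conditional mean of $\hat{p}_{M,n}(\path')$ equals $p_n(\path')+\rho_n(\path,\path')\frac{\sigma_n(\path')}{\sigma_n(\path)}(p_0-p_n(\path))$, the linear-regression shift, with conditional variance $O(1/M)$; a conditional law of large numbers then forces concentration at this value. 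Hence on the conditioned forest $\mathds{1}_{\hat{p}_{M,n}(\path')>p_0}$ converges to $\mathds{1}_{p_n(\path')>p_0-\rho_n(\path,\path')\frac{\sigma_n(\path')}{\sigma_n(\path)}(p_0-p_n(\path))}$, while on the free forest it converges to $\mathds{1}_{p_n(\path')>p_0}$. Summing over $\path'\in\Pi$ identifies the conditional limit of $N_M+N'_M$ with twice the denominator in the statement, the hypothesis $p_0<\max\setpemp$ guaranteeing this limit is positive. The genuinely delicate steps are the passage from ``conditioned to exceed the threshold'' to ``conditioned at the threshold'', the joint local limit controlling the shift, and the careful bookkeeping of the indicator at $\path'=\path$, where the shifted threshold lands exactly on the boundary.

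Finally I would assemble the pieces: combining $\P(E_\path\mid\mathscr{D}_n)$ with the conditional limit of $N_M+N'_M$ and summing over $\path\in\Pi$ reproduces the stated expression. To justify the ``$\sim$'', I would check that configurations in which two or more paths simultaneously lie in the symmetric difference are negligible, their conditional probabilities decaying strictly faster because the joint large-deviation rate of two distinct, non-perfectly-correlated paths crossing their thresholds strictly exceeds each individual rate; the leading term is therefore exactly the sum of single-path contributions.
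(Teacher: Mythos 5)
Your proof of the first claim is complete and is the natural route: conditional on $\Dn$, each $\hat{p}_{M,n}(\path)$ is an average of i.i.d.\ Bernoulli$(p_n(\path))$ indicators of the $\Theta_\ell$'s, the strong law applies, and $p_0\notin\setpemp$ lets you pass to the limit inside the (finitely many) indicators. For the equivalent, your skeleton is the right one: writing $1-\hat{S}_{M,n,p_0}=\big(\sum_{\path}\mathds{1}_{E_\path}\big)/(N_M+N'_M)$ with $N_M=|\pathset_{M,n,p_0}|$, $N'_M=|\pathset'_{M,n,p_0}|$ and $E_\path$ the symmetric-difference event, using conditional independence of the two forests to get $\P(E_\path\mid\Dn)=2\,\Phi(Mp_0,M,p_n(\path))\big(1-\Phi(Mp_0,M,p_n(\path))\big)$, and identifying the conditional limit of $N_M+N'_M$ on $E_\path$ is exactly what produces a formula of the stated shape. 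Your two-point computation is also verifiably correct: for a pair of Bernoulli indicators, the mean of the second coordinate under the exponential tilting that moves the first coordinate's mean to $p_0$ equals $p_n(\path')+\rho_n(\path,\path')\frac{\sigma_n(\path')}{\sigma_n(\path)}(p_0-p_n(\path))$, i.e., the linear-regression shift, which is precisely the threshold appearing in the denominator.

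The gap is that the core of the argument, the evaluation of $\lim_M\E[(N_M+N'_M)^{-1}\mid E_\path]$, is asserted rather than proved, and it is not routine. You need (i) a Gibbs-conditioning statement for lattice variables: conditional on the rare crossing $\{M\hat{p}_{M,n}(\path)\le Mp_0\}$ (or $>Mp_0$), the count concentrates on the boundary layer and the whole vector $(\hat{p}_{M,n}(\path'))_{\path'\in\Pi}$ concentrates at the shifted means, with enough uniformity to pass the \emph{discontinuous} indicators $\mathds{1}_{\cdot>p_0}$ to the limit; this silently requires that no $p_n(\path')$ lands exactly on its shifted threshold, a non-degeneracy condition that is not implied by $p_0\notin\setpemp$ and is nowhere addressed. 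And (ii) the self-term $\path'=\path$, which you rightly flag, is a real inconsistency in your plan as written: on $E_\path$ the deviant forest contributes exactly $\mathds{1}_{\hat{p}(\path)>p_0}$ to the denominator, which equals $1$ when $p_n(\path)<p_0$ (upward deviation), whereas the shifted indicator $\mathds{1}_{p_n(\path)>p_n(\path)}$ is $0$; so your computation yields a denominator differing by $1$ from the stated one for every such path, and you must either show that those paths never carry the dominant exponential weight in the sum over $\path$ or repair the bookkeeping. A final, smaller point: the claim that multi-path symmetric differences are negligible needs an actual joint large-deviation bound, not just the remark that the joint rate "strictly exceeds" the individual one, since perfectly correlated paths (e.g., the two children of a common root split, for which $\hat p_{M,n}$ coincide) violate that strict inequality. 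Until (i) and (ii) are written out, the equivalent is not established.
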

The proof of Theorem~\ref{stability_equivalent} is to be found in Section \ref{sec_stability_equivalent}.
The equivalent provided in Theorem~\ref{stability_equivalent}  is defined when the sets of rules $\pathset_{M,n,p_{0}}$
and $\pathset_{M,n,p_{0}}'$ are not post-treated. It considerably simplifies the analysis of the asymptotic behavior of $\smash{\mathbb{E} [ \hat{S}_{M,n,p_{0}} | \mathscr{D}_{n}]}$. Since the post-treatment is deterministic, this operation is not an additional source of instability. Then, if the estimation of the rule set without post-treatment is stable, it is also the case when the post-treatment is added. Finally, despite its apparent complexity, the asymptotic approximation of $1 - \mathbb{E} [ \hat{S}_{M,n,p_{0}} | \mathscr{D}_{n}]$ can be easily estimated, and an efficient stopping criterion for the number of trees is therefore derived in (5.1).

\section{Proof of Theorem \ref{theorem_consistency}}
\label{sec_theorem_consistency}

We recall Assumptions (A1)-(A3) and Theorem \ref{theorem_consistency} for the sake of clarity.
\begin{enumerate}
\item[(A1)] The subsampling rate $a_n$ satisfies $\lim\limits_{n \to \infty} a_{n}=\infty$ and $\lim\limits_{n \to \infty} \frac{a_{n}}{n}=0$.
\item[(A2)] The number of trees $M_n$ satisfies $\lim\limits_{n \to \infty} M_n = \infty$.
\item[(A3)] $\bX$ has a strictly positive density $f$ with respect to the Lebesgue measure. Furthermore, for all $j\in \{1,\hdots,p\}$,
 the marginal density $f^{(j)}$ of $X^{(j)}$ is continuous, bounded, and strictly positive.
\end{enumerate}

\setcounter{theorem}{0}
\begin{theorem}
\label{theorem_consistency}
If Assumptions (A1)-(A3) are satisfied, then, for all $\path \in \Pi$, we have
\[
\lim\limits_{n \to \infty} \hat{p}_{M_{n},n}(\path) = p^{\star}(\path) \quad \textrm{in probability.}
\]
\end{theorem}

First, we prove Theorem \ref{theorem_consistency} for a path of one split. The proof is extended for a path of two splits in the next subsection and follows the same steps. Finally, the proof can be easily extended to a path of any depth $d \in \mathbb{N}^{\star}$ by recursion.

\subsection{Proof of Theorem \ref{theorem_consistency} for a path of one split}

We consider $\path_{1}=\{(j_{1},r_{1},s_{1})\}$ a path of one split, where $j_{1} \in\{1, \hdots, p\}$, $r_{1} \in\{1, \hdots, q - 1\}$, and $s_{1} \in\{L,R\}$.
We assume throughout that Assumptions (A1)-(A3) are satisfied. 

Before proving Theorem \ref{theorem_consistency}, we state five lemmas (Lemma \ref{convergence_quantile} to Lemma \ref{convergence_pn_theta_1}). Their proof can be found in the Subsection \ref{subsec_thm1_lemmas}.
Lemma \ref{convergence_quantile} is a preliminary technical result used to state both Lemmas \ref{criterion_residual_1} and \ref{criterion_distribution_1} - case (b).

\begin{lemme}
\label{convergence_quantile}
Let $\bX$ be a random variable distributed on $\R^p$ such that Assumptions (A1) and (A3) are satisfied. Then, for all $ j\in\{1, \hdots, p\}$ and all $r\in\{1, \hdots, q - 1\}$, we have
\[
\lim\limits_{n \to \infty} \sqrt{a_{n}}~ \mathbb{P}\big(q_{r}^{\star (j)}\leq X^{(j)}<\hat{q}_{n,r}^{(j)}\big)=0
\]
and
\[
\lim\limits_{n \to \infty} \sqrt{a_{n}}~ \mathbb{P}\big(\hat{q}_{n,r}^{(j)}\leq X^{(j)}<q_{r}^{\star (j)}\big)=0.
\]
\end{lemme}

Lemma \ref{criterion_residual_1} is used to prove both consistency (Lemma \ref{criterion_consistency_1}) and convergence rate (Lemma \ref{criterion_distribution_1})
 of the CART-splitting criterion when the root node of the tree is cut at an empirical quantile. 
Lemma \ref{convergence_pn_theta_1} is an intermediate result to prove Theorem \ref{theorem_consistency}.
\begin{lemme}
\label{criterion_residual_1}
If Assumptions (A1) and (A3) are satisfied, then for all $j\in\{1, \hdots, p\}$, all $r\in\{1, \hdots, q - 1\}$, and all $H\subseteq\mathds{R}^{p}$ such that
$\mathbb{P}(\bX \in H, X^{(j)} < q^{\star(j)}_{r})>0$ and $\mathbb{P}(\bX \in H, X^{(j)} \geq q^{\star(j)}_{r})>0$, we have
\[
\lim\limits_{n \to \infty} \sqrt{a_{n}}\big(L_{a_{n}}\big(H,\hat{q}_{n,r}^{(j)}\big)
-L_{a_{n}}\big(H,q_{r}^{\star(j)}\big)\big)=0 \quad \textrm{in probability.}
\]
\end{lemme}

\begin{lemme}
\label{criterion_consistency_1}
If Assumptions (A1) and (A3) are satisfied, then for all $j\in\{1, \hdots, p\}$, all $r\in\{1, \hdots, q - 1\}$, and all $H\subseteq\mathds{R}^{p}$ such that
$\mathbb{P}(\bX \in H, X^{(j)} < q^{\star(j)}_{r})>0$ and $\mathbb{P}(\bX \in H, X^{(j)} \geq q^{\star(j)}_{r})>0$, we have
\[
\lim\limits_{n \to \infty} L_{a_{n}}\big(H,\hat{q}_{n,r}^{(j)}\big)=L^{\star}\big(H,q_{r}^{\star(j)}\big) \quad \textrm{in probability.}
\]
\end{lemme}

When splitting a node, if the theoretical CART-splitting criterion has multiple maxima, one is randomly selected. This random selection follows a discrete probability law, which is not necessarily uniform and is based on $\P_{\bX, Y}$ as specified in Definition \ref{def_proba_equality_1}.
In order to derive the limit of the probability that a given split occurs in a $\Theta$-random tree in the empirical algorithm, one needs to assess the convergence rate of the empirical CART-splitting criterion when it has multiple maxima.

\begin{lemme} \label{criterion_distribution_1}
Consider that Assumptions (A1) and (A3) are satisfied.
Let $\bestcut_1 \subset \{1, \hdots, p\} \times \{1, \hdots, q-1\}$ be a set of splits of cardinality $c_1 \geq 2$, such that, for all $(j,r) \in \bestcut_1$,
$L^{\star}(\R^p,  q_{r}^{\star(j)}) \stackrel{\tiny{\rm def}}{=}L^{\star}_{\bestcut_1}$, i.e., the theoretical CART-splitting criterion is constant for all splits in $\bestcut_1$.
Let $(j_1, r_1) \in \bestcut_1$ and let $\textbf{L}_{n,\path_1}^{(\bestcut_1)}$ be a random vector where each component is the difference between the empirical
 CART-splitting criterion for the splits $(j, r) \in \bestcut_1 \setminus (j_1, r_1)$ and $(j_1, r_1)$, that is
\[
\textbf{L}_{n,\path_1}^{(\bestcut_1)} = \Big(
L_{a_{n}}\big(\mathbb{R}^{p},\hat{q}_{n,r}^{(j)}\big)-L_{a_{n}}\big(\mathbb{R}^{p},\hat{q}_{n,r_1}^{(j_1)}\big)\Big)_{(j, r) \in \bestcut_1 \setminus (j_1, r_1)}.
\]

\paragraph{(a)} If $L^{\star}_{\bestcut_1} > 0$, then we have
\[
\sqrt{a_{n}}~ \textbf{L}_{n,\path_1}^{(\bestcut_1)} \stackrel[n\rightarrow\infty]{\mathscr{D}}{\longrightarrow}\mathcal{N}(0, \Sigma),
\]
where, for all $(j,r), (j',r') \in \bestcut_1 \setminus (j_1, r_1)$, each element of the covariance matrix $\Sigma$ is defined by $
\Sigma_{(j,r), (j',r')} = \emph{Cov}[Z_{j,r}, Z_{j',r'}],
$ with
\begin{align*}
Z_{j,r}= \big(Y - & \E[Y | X^{(j_1)}<q_{r_1}^{\star(j_1)}]\mathds{1}_{X^{(j_1)}<q_{r_1}^{\star(j_1)}}
\\ & -\E[Y | X^{(j_1)} \geq q_{r_1}^{\star(j_1)}]\mathds{1}_{X^{(j_1)}\geq q_{r_1}^{\star(j_1)}}\big)^{2}\\
- \big(Y &- \E[Y | X^{(j)}<q_{r}^{\star(j)}]\mathds{1}_{X^{(j)}<q_{r}^{\star(j)}}
- \E[Y | X^{(j)} \geq q_{r}^{\star(j)}]\mathds{1}_{X^{(j)}\geq q_{r}^{\star(j)}}\big)^{2}.
\end{align*}
Besides, for all $(j,r) \in \bestcut_1$, $\mathds{V} [Z_{j,r}] >0$.
\paragraph{(b)} If $L^{\star}_{\bestcut_1} = 0$, then we have
\[
a_{n} \textbf{L}_{n,\path_1}^{(\bestcut_1)} \stackrel[n\rightarrow\infty]{\mathscr{D}}{\longrightarrow} h_{\path_1}(\mathbf{V}),
\]
where $\mathbf{V}$ is a Gaussian vector of covariance matrix $\emph{Cov}[\mathbf{Z}]$. If $\bestcut_1$ is explicitly written
$\bestcut_1 = \{(j_k, r_k)\}_{k=1,\hdots,c_1}$, $\mathbf{Z}$ is defined, for $k \in \{1,\hdots,c_1\}$, by
\begin{align*}
Z_{2k-1} = \frac{1}{\sqrt{p_{L,k}}} (Y - \E[Y]) \mathds{1}_{X^{(j_k)} < q_{r_k}^{\star (j_k)}} \\
Z_{2k} = \frac{1}{\sqrt{p_{R,k}}} (Y - \E[Y]) \mathds{1}_{X^{(j_k)} \geq q_{r_k}^{\star (j_k)}},
\end{align*}
where $p_{L,k} = \P (X^{(j_k)}<q_{r_k}^{\star (j_k)})$, $p_{R,k} = \P (X^{(j_k)} \geq q_{r_k}^{\star (j_k)})$, and $h_{\path_1}$ is a multivariate quadratic form defined as
\begin{align*}
h_{\path_1}: \left( \begin{array}{c} x_1 \\ \vdots \\ x_{2c_1} \end{array} \right) \rightarrow \left( \begin{array}{c}
x_{3}^2 + x_{4}^2 - x_1^2 - x_2^2\\ \vdots \\
x_{2k-1}^2 + x_{2k}^2 - x_{1}^2 - x_{2}^2 \\ \vdots \\
x_{2c_1-1}^2 + x_{2c_1}^2 - x_{1}^2 - x_{2}^2  \end{array}\right).
\end{align*}
Besides, the variance of each component of $h_{\path_1}(\mathbf{V})$ is strictly positive.
\end{lemme}

\begin{mydef}[Theoretical splitting procedure] \label{def_proba_equality_1}
Let $\theta_1^{(V)}$ be the set of eligible variables to split the root node of a theoretical random tree. 
The set of best theoretical cuts at the root node is defined as
\[
\bestcut_{1}^{\star}\big(\theta_1^{(V)}\big) = \underset{(j,r) \in \theta_{1}^{(V)} \times \{1, \hdots, q - 1\}}{\emph{argmax}} L^{\star}\big(\mathbb{R}^{p},q_{r}^{\star(j)}\big).
\]
If $\bestcut_{1}^{\star}(\theta_1^{(V)})$ has multiple elements, then $(j_1, r_1)$ is randomly drawn with probability 
\begin{align} \label{def_probability_equality_1} 
\mathbb{P}\big(\path_{1} \in T^{\star}(\Theta)|\Theta^{(V)}=\theta^{(V)}\big) 
= \Phi_{\theta_1^{(V)}, (j_1, r_1)} (\textbf{0}),
\end{align}
where $\Phi_{\theta_1^{(V)}, (j_1, r_1)}$ is the cdf of the limit law defined in Lemma \ref{criterion_distribution_1} for 
$\bestcut_1 = \bestcut_{1}^{\star}(\theta_1^{(V)})$. 
This definition is extended for the second split in Definition \ref{def_proba_equality_2}.
\end{mydef}

Recall that the randomness in a tree can be decomposed as $\Theta = (\Theta^{(S)}, \Theta^{(V)})$, where $\Theta^{(S)}$ corresponds to the subsampling and $\Theta^{(V)}$ is related to the variable selection. $\Theta^{(V)}$ takes values in the finite set $\Omega^{(V)} = \{1,\hdots,p\}^{3 \times \texttt{mtry}}$.
\begin{lemme} \label{convergence_pn_theta_1}
If Assumptions (A1)-(A3) are satisfied, then for all $\theta^{(V)}\in\Omega^{(V)}$, we have
\[
\lim\limits_{n \to \infty} \mathbb{P}\big(\path_{1}\in T(\Theta,\mathscr{D}_{n})|\Theta^{(V)}
=\theta^{(V)}\big)=\mathbb{P}\big(\path_{1} \in T^{\star}(\Theta)|\Theta^{(V)}=\theta^{(V)}\big).
\]
\end{lemme}

We are now equipped to prove Theorem \ref{theorem_consistency} in the case of one single split. Recall that 
\begin{align}
    \E[\hat{p}_{M_n, n}(\path_1)] = \P (\path_1 \in T(\Theta, \mathscr{D}_{n})).
\end{align}
Since $\Theta^{(V)}$ takes values in the finite set
$\Omega^{(V)}$, according to Lemma \ref{convergence_pn_theta_1}, we have
\begin{align*}
& \lim \limits_{n \to \infty} \P (\path_1 \in T(\Theta, \mathscr{D}_{n})) \\
& \quad = 
\lim \limits_{n \to \infty} \sum_{\theta^{(V)}\in\Omega^{(V)}}\mathbb{P}\big(\path_{1}\in T(\Theta,\mathscr{D}_{n})|\Theta^{(V)}=\theta^{(V)}\big)\mathbb{P}_{\Theta^{(V)}}\big(\Theta^{(V)}=\theta^{(V)}\big)\\
& \quad = \sum_{\theta^{(V)}\in\Omega^{(V)}}\mathbb{P}\big(\path_{1} \in T^{\star}(\Theta)|\Theta^{(V)}=\theta^{(V)}\big)\mathbb{P}_{\Theta^{(V)}}\big(\Theta^{(V)}=\theta^{(V)}\big)\\
& \quad = \P(\path_1 \in T^{\star}(\Theta)).
\end{align*}
Therefore,
\[
\lim_{n \to \infty} \E[\hat{p}_{M_n, n}(\path_1)] = p^{\star}(\path_{1}).
\]

To finish the proof, we just have to show that 
$\lim\limits_{n \to \infty} \mathds{V}[\hat{p}_{M_{n},n}(\path_{1})] = 0$. 

The law of total variance gives
\begin{align}
\mathbb{V} [ \hat{p}_{M_{n}, n} (\path_1)] & = \E \big[ \mathbb{V} [ \hat{p}_{M_{n}, n}(\path_1) |  \mathscr{D}_{n} ] \big] 
+  \mathbb{V} \big[ \E [ \hat{p}_{M_{n}, n} (\path_1) | \mathscr{D}_{n} ] \big] \nonumber \\
& =  \E \Big[ \mathbb{V} \Big[ \frac{1}{M_{n}}\sum_{\ell=1}^{M_{n}}\mathds{1}_{\path_{1} \in T(\Theta_{\ell},\mathscr{D}_{n})}| 
 \mathscr{D}_{n} \Big] \Big]  + \mathbb{V} [ p_n (\path_1)  ] \nonumber \\
 & =  \frac{1}{M_{n}} \E \big[ \mathbb{V} [ \mathds{1}_{\path_{1} \in T(\Theta_1,\mathscr{D}_{n})}| 
 \mathscr{D}_{n} ] \big] + \mathbb{V} [ p_n(\path_1)  ] \nonumber \\
 & =  \frac{1}{M_{n}} \E \big[p_n(\path_1) -p_n(\path_1)^{2} \big] + \mathbb{V} [p_n(\path_1)], \nonumber \\
& = \frac{1}{M_{n}} \E[p_n(\path_1)] ( 1 - \E[p_n(\path_1)] ) +  \Big( 1 - \frac{1}{M_{n}} \Big) \mathbb{V} [p_n(\path_1) ] . \label{proof_thm1_variance}
 \end{align}
Following the approach of \cite{mentchquantifying2016}, $p_n(\path_{1})$ 
is a complete infinite order U-statistic with the kernel $
\E [ \mathds{1}_{\path_{1} \in T ( \Theta, \mathscr{D}_{n} )} | \Theta^{( S )},  \mathscr{D}_{n} ] $. From  \cite{Hoeffding1948},
\[
\mathbb{V} [p_n(\path_1)] \leq \frac{a_{n}}{n} \xi_{a_{n}, a_{n}},
\]
where
$\xi_{a_{n}, a_{n}} = \mathbb{V} [ \E [ \mathds{1}_{\path_{1} \in T ( \Theta, \mathscr{D}_{n} )} 
| \Theta^{( S)}, \mathscr{D}_{n} ] | \Theta^{( S )} ]$. 
Since $\xi_{a_{n}, a_{n}}$ is bounded and $\lim \limits_{n \to \infty} \frac{a_{n}}{n} = 0$,
\[
\lim \limits_{n \to \infty}\mathbb{V} [p_n(\path_1)]  = 0.
\]
Using equality (\ref{proof_thm1_variance}), since $ p_n(\path_1)$ is bounded and $\lim \limits_{n \to \infty} M_{n} = \infty$,
\[
\lim \limits_{n \to \infty}\mathbb{V} [ p_{M{_n}, n}(\path_1)]  = 0.
\]
Finally, 
\begin{align*}
\lim \limits_{n \to \infty} \E\big[&(\hat{p}_{M_{n}, n} (\path_1) - p^{\star}(\path_1) )^2\big] \\ & = \lim \limits_{n \to \infty} \mathbb{V}[\hat{p}_{M_{n}, n} (\path_1)] + \big(\E[\hat{p}_{M_{n}, n} (\path_1)] - p^{\star}(\path_1)\big)^2 = 0,
\end{align*}
which concludes the proof.

\subsection{Proof of Theorem \ref{theorem_consistency} for a path of two split}

The proof of Theorem \ref{theorem_consistency} is extended for a path of two splits. We consider $\path_{1}=\{(j_{1},r_{1},s_{1})\}$ a path of one split and
$\path_{2}=\{(j_{k},r_{k},s_{k}), k=1,2\}$ a path of
two splits, where $j_{1}, j_{2}\in\{1, ..., p\}$, $r_{1},
r_{2}\in\{1, ..., q - 1\}$ and $s_{1}, s_{2} \in\{L,R\}$.
We assume assumptions (A1)-(A3) are satisfied.

The path $\path_2 = \{ (j_1, r_1, s_1), (j_2, r_2, s_2) \}$ can occur in trees where the split at the root node is $(j_1, r_1)$
and the split of one of the child node is $(j_2, r_2)$, and in trees where the splits are made in the reversed order, $(j_2, r_2)$
at the root node and $(j_1, r_1)$ at one of the child node. Since these two events are disjoint, $\P\big(\path_{2} \in T(\Theta,\mathscr{D}_{n})\big)$
is the sum of the probability of these two events. Without loss of generality, we will consider in the entire proof that the split at the root node is
$(j_1, r_1)$. Lemmas \ref{criterion_residual_2} - \ref{convergence_pn_theta_2} below extend Lemmas \ref{criterion_residual_1} - \ref{convergence_pn_theta_1} to the case where the tree path contains two splits. 

We need to introduce additional notations, first, the theoretical hyperrectangle based on a path $\path$ by
\[
H^{\star}(\path)=\left\{ \bx \in\mathds{R}^{p}:\begin{cases}
x^{(j_{k})}<q_{r_{k}}^{\star(j_{k})} & \textrm{if }s_{k}=L\\
x^{(j_{k})}\geq q_{r_{k}}^{\star(j_{k})} & \textrm{if }s_{k}=R
\end{cases}\;,k\in 1,\hdots,d \right\},
\]
 with $d \in \{1,2\}$, the empirical counterpart of $\hat{H}_n (\path)$ defined in (2.3).
Furthermore, since from assumption (A3), $\bX$ has a strictly positive density, then for $j \in \{1, ..., p\} \setminus j_1$, and $r \in \{1, ..., q - 1\}$,
$\mathbb{P}\big(\bX \in H^{\star}(\path_{1}), X^{(j)} < q^{\star(j)}_{r}\big)>0$ and $\mathbb{P}\big(\bX \in H^{\star}(\path_{1}), X^{(j)} \geq q^{\star(j)}_{r}\big)>0$. When $j = j_1$, the second cut is performed along the same direction as the first one. 
In that case, depending on the side $s_1$ of the first cut and the cut positions $r_1$ and $r$, one of the two child node can be empty with probability one.
For example, the hyperrectangle associated to the path $\{ (1, 2, L), (1, 3, R) \}$ is empty.
In SIRUS, such splits are not considered to find the best cut for a node at the second level of the tree.
Thus we define $\bestcut_{\path_1}$ the set of possible splits for the second cut
\begin{align*}
\bestcut_{\path_1} = \{&(j,r), j \in \{1, ..., p\} \setminus j_1, r \in \{1, ..., q - 1\} \} \\
&\cup \{(j_1,r), \textrm{  s.t.  } r < r_1 \textrm{  if  } s_1 = L, \textrm{  and  } r > r_1 \textrm{  if  } s_1 = R \},
\end{align*}
and $\bestcut_{\path_1}\big(\theta_2^{(V)}\big) = \big\{(j,r) \in \bestcut_{\path_1} \textrm{  s.t.  } j \in \theta_2^{(V)}\big\}$ when the split directions are restricted to $\theta_2^{(V)} \subset \{1,...,p\}$.

\begin{lemme}
\label{criterion_residual_2}
If Assumptions (A1) and (A3) are satisfied, then for all $(j, r) \in \bestcut_{\path_1}$, we have
\begin{align*}
\lim\limits_{n \to \infty}  \sqrt{a_{n}}\big(L_{a_{n}}\big(\hat{H}_{n}(\path_{1}),\hat{q}_{n,r}^{(j)}\big)
-L_{a_{n}}\big(H^{\star}(\path_{1}),q_{r}^{\star(j)}\big)\big)=0 \quad \textrm{in probability.}
\end{align*}
\end{lemme}

\begin{lemme}
\label{criterion_consistency_2}
If Assumptions (A1) and (A3) are satisfied, then for all $(j, r) \in \bestcut_{\path_1}$, we have
\begin{align*}
\lim\limits_{n \to \infty}  L_{a_{n}}\big(\hat{H}_{n}(\path_{1}),\hat{q}_{n,r}^{(j)}\big)=L^{\star}\big(H^{\star}(\path_{1}),q_{r}^{\star(j)}\big) \quad \textrm{in probability.}
\end{align*}
\end{lemme}

\begin{lemme} \label{criterion_distribution_2}
Consider that Assumptions (A1) and (A3) are satisfied.
Let $\bestcut_1 \subset \{1, ..., p\} \times \{1, ..., q-1\}$ and $\bestcut_2 \subset \bestcut_{\path_1}$ be two sets of splits of cardinality $c_1 \geq 1$ and $c_2 \geq 2$, such that the theoretical CART-splitting criterion is constant for all splits in $\bestcut_1$ on one hand, and in $\bestcut_2$ on the other hand, i.e.,
\begin{align*}
\forall l \in \{1,2\}, \quad \forall (j,r) \in \bestcut_l, \quad L^{\star}\big(H_l,  q_{r}^{\star(j)}\big) \stackrel{\tiny{\rm def}}{=}L^{\star}_{\bestcut_l},
\end{align*}
where $H_1 = \R^p$ and $H_2 = H^{\star}(\path_1)$. Let $(j_1, r_1) \in \bestcut_1$, $(j_2, r_2) \in \bestcut_2$, and let $\textbf{L}_{n,\path_2}^{(\bestcut_1, \bestcut_2)}$ a the random vector where each component is the difference between the empirical CART-splitting criterion for the splits $(j, r) \in \bestcut_1 \setminus (j_1, r_1)$ and $(j_1, r_1)$ for the first $c_1 - 1$ components, and for the splits $(j, r) \in \bestcut_2 \setminus (j_2, r_2)$ and $(j_2, r_2)$ for the remaining $c_2 - 1$ components,  that is
\begin{align*}
\textbf{L}_{n,\path_2}^{(\bestcut_1, \bestcut_2)} = \left(\begin{array}{c}
\big[L_{a_{n}}\big(\mathbb{R}^{p},\hat{q}_{n,r}^{(j)}\big)-L_{a_{n}}\big(\mathbb{R}^{p},\hat{q}_{n,r_1}^{(j_1)}\big)\big]_{(j, r) \in \bestcut_1 \setminus (j_1, r_1)} \\
\big[L_{a_{n}}\big(\hat{H}_n(\path_{1}),\hat{q}_{n,r}^{(j)}\big)-L_{a_{n}}\big(\hat{H}_n(\path_{1}),\hat{q}_{n,r_2}^{(j_2)}\big)\big]_{(j, r) \in \bestcut_2 \setminus (j_2, r_2)} \\
\end{array}\right).
\end{align*}

\paragraph{(a)} If $L^{\star}_{\bestcut_1} > 0$ and $L^{\star}_{\bestcut_2} > 0$, then we have
\begin{align*}
\sqrt{a_{n}} \textbf{L}_{n,\path_2}^{(\bestcut_1, \bestcut_2)} \stackrel[n\rightarrow\infty]{\mathscr{D}}{\longrightarrow}\mathcal{N}(0, \Sigma)
\end{align*}
where for $l, l' \in \{1,2\}$, for all $(j,r) \in \bestcut_l \setminus (j_l,r_l)$,
$(j', r') \in \bestcut_{l'} \setminus (j_{l'}, r_{l'})$, each element of the covariance matrix $\Sigma$ is defined by
$ \Sigma_{(j,r,l), (j',r',l')} = \textrm{Cov}[Z_{j,r,l}, Z_{j',r',l'}],$ with
\begin{align*}
Z_{j,r,l}= & \frac{1}{\P(\bX \in H_l)} \big(Y - \mu_{L,r_l}^{(j_l)}\mathds{1}_{X^{(j_{l})}<q_{r_{l}}^{\star(j_{l})}}
-\mu_{R,r_l}^{(j_l)}\mathds{1}_{X^{(j_{l})}\geq q_{r_{l}}^{\star(j_{l})}}\big)^{2} \mathds{1}_{\bX \in H_l}  \\
&- \frac{1}{\P(\bX \in H_l)} \big(Y - \mu_{L,r}^{(j)}\mathds{1}_{X^{(j)}<q_{r}^{\star(j)}}
-\mu_{R,r}^{(j)}\mathds{1}_{X^{(j)}\geq q_{r}^{\star(j)}}\big)^{2} \mathds{1}_{\bX \in H_l},
\end{align*}
$\mu_{L,r}^{(j)} = \E\big[Y | X^{(j)}<q_{r}^{\star(j)}, \bX \in H_l\big]$, 
$\mu_{R,r}^{(j)} = \E\big[Y | X^{(j)} \geq q_{r}^{\star(j)}, \bX \in H_l\big]$. 
Besides, for all $l \in \{1,2\}$ and for all $(j,r) \in C_l$, $\mathds{V}[Z_{j,r,l}] > 0$.

\paragraph{(b)} If $L^{\star}_{\bestcut_1} = L^{\star}_{\bestcut_2} = 0$, then we have
\begin{align*}
a_{n} \textbf{L}_{n,\path_2}^{(\bestcut_1, \bestcut_2)} \stackrel[n\rightarrow\infty]{\mathscr{D}}{\longrightarrow} h_{\path_2}(\mathbf{V}),
\end{align*}
where $\mathbf{V}$ is a gaussian vector of covariance matrix $\textrm{Cov}[\mathbf{Z}]$. If $\bestcut_1$ and $\bestcut_2$ are explicitly written
$\bestcut_1 = \{(j_k, r_k)\}_{k \in J_1}$, and $\bestcut_2 = \{(j_k, r_k)\}_{k \in J_2}$, with $J_1 = \{1,...,c_1+1\} \setminus 2$
and $J_2 = \{2\} \cup \{c_1+2,...,c_1+c_2\}$, $\mathbf{Z}$ is defined, for $l \in \{1,2\}$ and $k \in J_l$, by
\begin{align*}
Z_{2k-1} = \frac{1}{\sqrt{p_{L,k}\P(\bX \in H_l)}} (Y - \E[Y|\bX \in H_l]) \mathds{1}_{X^{(j_k)} < q_{r_k}^{\star (j_k)}}\mathds{1}_{\bX \in H_l} \\
Z_{2k} = \frac{1}{\sqrt{p_{R,k}\P(\bX \in H_l)}} (Y - \E[Y|\bX \in H_l]) \mathds{1}_{X^{(j_k)} \geq q_{r_k}^{\star (j_k)}}\mathds{1}_{\bX \in H_l},
\end{align*}
where $p_{L,k} = \P \big(X^{(j_k)}<q_{r_k}^{\star (j_k)}, \bX \in H_l\big)$, $p_{R,k} = \P \big(X^{(j_k)} \geq q_{r_k}^{\star (j_k)}, \bX \in H_l\big)$, and $h_{\path_2}$ is a multivariate quadratic form defined as
\begin{align*}
h_{\path_2}: \left( \begin{array}{c} x_1 \\ \vdots \\ x_{2(c_1 + c_2)}  \end{array} \right) \rightarrow \left( \begin{array}{c}
x_{5}^2 + x_{6}^2 - x_1^2 - x_2^2\\ \vdots \\
x_{2c_1+1}^2 + x_{2c_1+2}^2 - x_{1}^2 - x_{2}^2\\ 
x_{2c_1+3}^2 + x_{2c_1+4}^2 - x_{3}^2 - x_{4}^2\\ \vdots \\
x_{2(c_1+c_2)-1}^2 + x_{2(c_1+c_2)}^2 - x_{3}^2 - x_{4}^2  \end{array}\right).
\end{align*}
Besides, the variance of each component of $h_{\path_2}(\mathbf{V})$ is strictly positive.

\paragraph{(c)} If $L^{\star}_{\bestcut_1} > 0$ and $L^{\star}_{\bestcut_2} = 0$, then we have
\begin{align*}
a_{n} \textbf{L}_{n,\path_2}^{(\bestcut_1, \bestcut_2)} \stackrel[n\rightarrow\infty]{\mathscr{D}}{\longrightarrow} h'_{\path_2}(\mathbf{V}),
\end{align*}
where $\mathbf{V}$ is a gaussian vector of covariance matrix $\textrm{Cov}[\mathbf{Z}]$, and $\mathbf{Z}$ is defined as,
for $k \in J_1$,
\begin{align*}
Z_{2k - 1}&=  \big( Y - \E\big[Y | X^{(j_k)}<q_{r_k}^{\star(j_k)}\big] \big)^{2} \mathds{1}_{X^{(j_{k})}<q_{r_{k}}^{\star(j_{k})}} \\
Z_{2k }&=  \big( Y - \E\big[Y | X^{(j_k)} \geq q_{r_k}^{\star(j_k)}\big] \big)^{2} \mathds{1}_{X^{(j_{k})}\geq q_{r_{k}}^{\star(j_{k})}},
\end{align*}
for $k \in J_2$,
\begin{align*}
Z_{2k - 1}&= \frac{Y - \E[Y|\bX \in H^{\star}(\path_1)]}{\sqrt{p_{L,k}\P(\bX \in H^{\star}(\path_1))}} \mathds{1}_{X^{(j_k)} < q_{r_k}^{\star (j_k)}, \bX \in H^{\star}(\path_1)} \\
Z_{2k}&= \frac{Y - \E[Y|\bX \in H^{\star}(\path_1)]}{\sqrt{p_{R,k}\P(\bX \in H^{\star}(\path_1))}} \mathds{1}_{X^{(j_k)} \geq q_{r_k}^{\star (j_k)}, \bX \in H^{\star}(\path_1)}, 
\end{align*}
and $h'_{\path_2}$ is a multivariate quadratic form defined as
\begin{align*}
h'_{\path_2}: \left( \begin{array}{c} x_1 \\ \vdots \\ x_{2(c_1 + c_2)}  \end{array} \right) \rightarrow \left( \begin{array}{c}
x_1 + x_2 - x_5 - x_6\\ \vdots \\
x_1 + x_2 - x_{2c_1+1} - x_{2c_1 + 2}\\ 
x_{2c_1+3}^2 + x_{2c_1+4}^2 - x_{3}^2 - x_{4}^2\\ \vdots \\
x_{2(c_1+c_2)-1}^2 + x_{2(c_1+c_2)}^2 - x_{3}^2 - x_{4}^2  \end{array}\right).
\end{align*}
Besides, the variance of each component of $h'_{\path_2}(\mathbf{V})$ is strictly positive.

\paragraph{(d)} $L^{\star}_{\bestcut_1} = 0$ and $L^{\star}_{\bestcut_2} > 0$. Symmetric to case (c).
\end{lemme}

\begin{mydef}[Theoretical splitting procedure at children nodes] \label{def_proba_equality_2}
Let $\theta^{(V)} = (\theta_1^{(V)}, \theta_2^{(V)}, \cdot) \in \Omega^{(V)}$ be the sets of eligible variables to split the nodes of a theoretical random tree. The set of best theoretical cuts at the left children node along the variables in $\theta^{(V)}_2$ is defined as  
\begin{align*}
\bestcut_{2}^{\star}\big(\theta_{2}^{(V)}\big) = \underset{ (j,r) \in \bestcut_{\path_1}\big(\theta_2^{(V)}\big)}{\textrm{argmax}}
 L^{\star}\big(H^{\star}(\path_{1}),q_{r}^{\star(j)}\big).    
\end{align*}
If $\bestcut_{2}^{\star}\big(\theta_2^{(V)}\big)$ has multiple elements, then $(j_2, r_2)$ is randomly drawn with probability 
\begin{align} \label{def_probability_equality_2} 
\mathbb{P}\big(\path_{2} \in T^{\star}(\Theta)|\Theta^{(V)}=\theta^{(V)}\big) 
= \frac{\Phi_{\path_1, \theta^{(V)}, (j_2, r_2)} (\textbf{0})}
{\mathbb{P}\big(\path_{1} \in T^{\star}(\Theta)|\Theta^{(V)}=\theta^{(V)}\big)},
\end{align}
where $\mathbb{P}\big(\path_{1} \in T^{\star}(\Theta)|\Theta^{(V)}=\theta^{(V)}\big)$ is defined
from Definition \ref{def_proba_equality_1}, and $\Phi_{\path_1, \theta^{(V)}, (j_2, r_2)}$ is the cdf of the limit law defined in Lemma \ref{criterion_distribution_2} for $\bestcut_1 = \bestcut_{1}^{\star}\big(\theta_1^{(V)}\big)$ and
$\bestcut_2 = \bestcut_{2}^{\star}\big(\theta_2^{(V)}\big)$.
\end{mydef}

\begin{lemme}
\label{convergence_pn_theta_2}
If Assumptions (A1)-(A3) are satisfied, then for all $\theta^{(V)}\in\Omega^{(V)}$, we have
\begin{align*}
\lim\limits_{n \to \infty} \mathbb{P}\big(\path_{2} \in T(\Theta,\mathscr{D}_{n})|\Theta^{(V)}
=\theta^{(V)}\big)=\mathbb{P}\big(\path_{2} \in T^{\star}(\Theta)|\Theta^{(V)}=\theta^{(V)}\big)
\end{align*}
\end{lemme}
Finally, the proof of Theorem \ref{theorem_consistency} in the two-splits scenario is the same as in the single-split scenario.

\subsection{Proofs of intermediate lemmas} \label{subsec_thm1_lemmas}

\begin{proof}[Proof of Lemma \ref{convergence_quantile}]
\label{proof_lemma_convergence_quantile}

Set $j\in\{1, ..., p\} $, and $r\in\{1, ..., q - 1\}$. We define  the marginal cumulative distribution function $F^{(j)}$ of $X^{(j)}$,
 $F^{(j)}(x)=\mathbb{P}\big(X^{(j)}<x\big)$, and $F_{n}^{(j)}$ the empirical c.d.f.
\begin{align*}
F_{n}^{(j)}(x)=\frac{1}{n}\sum_{i=1}^{n}\mathds{1}_{X^{(j)}_{i}\leq x}.
\end{align*}
We adapt an inequality from \cite{serfling2009approximation} (section 2.3.2 page 75)
to bound the following conditional probability for all $\varepsilon>0$
\begin{align}
\P\big(q_{r}^{\star(j)}&\leq X_{1}^{(j)} < \hat{q}_{n,r}^{(j)} | X_1^{(j)} = q_{r}^{\star(j)} + \varepsilon \big) \nonumber \\
=\P\big(&q_{r}^{\star(j)} + \varepsilon < \hat{q}_{n,r}^{(j)} | X_1^{(j)} = q_{r}^{\star(j)} + \varepsilon \big) \nonumber \\
\leq \P\big(& F_{n}^{(j)}\big(q_{r}^{\star(j)} + \varepsilon \big)
\leq F_{n}^{(j)}\big(\hat{q}_{n,r}^{(j)}\big) | X_1^{(j)} = q_{r}^{\star(j)} + \varepsilon \big) \nonumber \\
\leq \mathbb{P}\Big(&1 + \sum_{i=2}^{n}\mathds{1}_{X_{i}^{(j)}\leq q_{r}^{\star(j)} + \varepsilon}
\leq\left\lceil \frac{n.r}{q}\right\rceil \Big) \nonumber \\
\leq \mathbb{P}\Big(&\sum_{i=2}^{n}\mathds{1}_{X_{i}^{(j)}\leq q_{r}^{\star(j)} + \varepsilon}
- (n - 1) F^{(j)}\big(q_{r}^{\star(j)} + \varepsilon \big) \\
&\leq\left\lceil \frac{n.r}{q}\right\rceil - 1 - (n - 1) F^{(j)}\big(q_{r}^{\star(j)} + \varepsilon\big)\Big) \label{eq_hoeff}
\end{align}

Since $f$ is continuous and strictly positive, there exists three constants $c_1, c_2, \eta >0$ such that for all $x \in [q_{r}^{\star(j)}, q_{r}^{\star(j)} + \eta]$, 
$c_1 \leq f^{(j)}(x) \leq c_2$. Thus, for all $\varepsilon < \eta$, we have
\begin{align*}
 F^{(j)}\big(q_{r}^{\star(j)} + \varepsilon \big) - F^{(j)}\big(q_{r}^{\star(j)} \big)  & =  \int_{q_{r}^{\star(j)} }^{q_{r}^{\star(j)} + \varepsilon} f^{(j)}(x) \textrm{d} x,
\end{align*}
which leads to
\begin{align*}
c_1 \varepsilon \leq F^{(j)}\big(q_{r}^{\star(j)} + \varepsilon \big) - F^{(j)}\big(q_{r}^{\star(j)} \big) \leq c_2 \varepsilon.
\end{align*}
Consequently,
\begin{align*}
\left\lceil \frac{n.r}{q}\right\rceil - 1 - &(n - 1) F^{(j)}\big(q_{r}^{\star(j)} + \varepsilon\big) \\
& \leq \left\lceil \frac{n.r}{q}\right\rceil - 1 - (n - 1) \big( c_1 \varepsilon + F^{(j)}\big(q_{r}^{\star(j)}\big) \big) \\
& \leq \left\lceil \frac{n.r}{q}\right\rceil - 1 - (n - 1) c_1 \varepsilon -  \frac{(n-1).r}{q} \\
& \leq 1 - (n - 1) c_1 \varepsilon.
\end{align*}
For $n > 1 + \frac{1}{c_1 \varepsilon}$, we can apply Hoeffding inequality to \ref{eq_hoeff}, 
\begin{align}
\P\big(&q_{r}^{\star(j)}\leq X_{1}^{(j)} < \hat{q}_{n,r}^{(j)} | X_1^{(j)} = q_{r}^{\star(j)} + \varepsilon \big) \nonumber \\
& \leq \mathbb{P}\big(\sum_{i=2}^{n}\mathds{1}_{X_{i}^{(j)}\leq q_{r}^{\star(j)} + \varepsilon}
- (n - 1) F^{(j)}\big(q_{r}^{\star(j)} + \varepsilon \big)
\leq 1 - (n - 1) c_1 \varepsilon \big) \nonumber \\
& \leq e^{-\frac{2}{n}\big(1 - (n - 1) c_1  \varepsilon \big)^{2}} \nonumber \\
& \leq C e^{- 2 n c_1^{2}  \varepsilon^{2}}, \label{bound_qtl}
\end{align}
where $C = e^{2 c_1 \eta (1 + 2 c_1 \eta)}$. By definition, we have
\begin{align*}
\P\big(q_{r}^{\star(j)}\leq X_{1}^{(j)} < \hat{q}_{n,r}^{(j)})
= \int_{]0,\infty[}\P\big(&q_{r}^{\star(j)}\leq X_{1}^{(j)} < \hat{q}_{n,r}^{(j)} | X_1^{(j)} = q_{r}^{\star(j)} + \varepsilon \big) \\
& \times f^{(j)}\big(q_{r}^{\star(j)} + \varepsilon\big) d\varepsilon.
\end{align*}
To bound the previous integral, we break it down in three parts. Since $f^{(j)}$ is bounded by $c_2$ on $[q_{r}^{\star(j)}, q_{r}^{\star(j)} + \eta]$,
 for $n > 1 + \frac{1}{c_1 \eta}$ we use inequality \ref{bound_qtl} to get
\begin{align*}
\P\big(q_{r}^{\star(j)}\leq X_{1}^{(j)} < \hat{q}_{n,r}^{(j)}\big)
\leq &\int_{]0,\frac{1}{(n - 1) c_1}]} c_2 d\varepsilon \\
&+ \int_{]\frac{1}{(n - 1) c_1}, \eta[} c_2 C e^{- 2 n c_1^{2}  \varepsilon^{2}}  d\varepsilon \\
&+ \int_{[\eta, \infty[}C e^{- 2 n c_1^{2}  \eta^{2}} f^{(j)}\big(q_{r}^{\star(j)} + \varepsilon\big)  d\varepsilon. \\
\end{align*}
In the second integral, we introduce the following change of variable $u =\sqrt{2n}c_1\varepsilon$
\begin{align*}
\int_{]\frac{1}{(n - 1) c_1}, \eta[} c_2 C e^{- 2 n c_1^{2}  \varepsilon^{2}}  d\varepsilon 
&= \frac{c_2 C}{c_1 \sqrt{2n}} \int_{]\frac{\sqrt{2n}}{(n - 1)}, \sqrt{2n} c_ 1 \eta[} e^{- u^2}  du \\
&\leq \frac{c_2 C}{c_1 \sqrt{2n}} \int_{]0,\infty[} e^{- u^2}  du 
\leq \frac{\sqrt{\pi} c_2 C}{2 c_1 \sqrt{2n}},
 \end{align*}
and therefore we can write
\begin{align*}
\sqrt{a_n} \P\big(q_{r}^{\star(j)}\leq X_{1}^{(j)} < \hat{q}_{n,r}^{(j)}\big)
\leq \frac{c_2 \sqrt{a_n} }{(n - 1) c_1} + \frac{\sqrt{\pi a_n } c_2 C}{2 c_1 \sqrt{2n}}
+ C \sqrt{a_n}  e^{- 2 n c_1^{2}  \eta^{2}}
\end{align*}

From Assumption (A1), $\lim \limits_{n \to \infty} \frac{a_n}{n} = 0$, and then
$$
\lim_{n \to \infty} \sqrt{a_n}~ \mathbb{P}\big(q_{r}^{\star(j)}\leq X_{1}^{(j)}<\hat{q}_{n,r}^{(j)}\big)  = 0.
$$
The case $\lim \limits_{n \to \infty} \sqrt{a_{n}} \mathbb{P}\big(\hat{q}_{n,r}^{(j)}\leq X_{1}^{(j)}
<q_{r}^{\star(j)}\big) = 0$ is similar.
\end{proof}

\subsubsection{Case 1: $\path_{1}$}

\begin{proof}[Proof of Lemma \ref{criterion_residual_1}]
Let $j\in\{1, ..., p\} $, $r\in\{1, ..., q - 1\} $,
and $H\subseteq\mathds{R}^{p}$ such that $\mathbb{P}\big(\bX \in H, X^{(j)} < q^{\star(j)}_{r}\big)>0$
 and $\mathbb{P}\big(\bX \in H, X^{(j)} \geq q^{\star(j)}_{r}\big)>0$. Let
 
\begin{align*}
 \Delta_{n,r}^{(j)} & =\sqrt{a_{n}}\big(L_{a_{n}}\big(H,\hat{q}_{n,r}^{(j)}\big)-L_{a_{n}}\big(H,q_{r}^{\star(j)}\big)\big) 
\end{align*}
that is
\begin{align*}
 \Delta_{n,r}^{(j)}  & =-\frac{\sqrt{a_{n}}}{N_{n}(H)}\big[\sum_{i=1}^{a_{n}}\big(Y_{i}-\overline{Y}_{H_{L}}\mathds{1}_{X_{i}^{(j)}<\hat{q}_{n,r}^{(j)}}-\overline{Y}_{H_{R}}\mathds{1}_{X_{i}^{(j)}\geq\hat{q}_{n,r}^{(j)}}\big)^{2}\mathds{1}_{\bX_{i}\in H}\\
& \quad -\sum_{i=1}^{a_{n}}\big(Y_{i}-\overline{Y}_{H_{L}^{\star}}\mathds{1}_{X_{i}^{(j)}<q_{r}^{\star(j)}}-\overline{Y}_{H_{R}^{\star}}\mathds{1}_{X_{i}^{(j)}\geq q_{r}^{\star(j)}}\big)^{2}\mathds{1}_{\bX_{i}\in H}\big]
\end{align*}
where, for a generic hyperrectangle $H$, we define $N_n(H) = \sum_{i=1}^{a_n} \mathds{1}_{\bX_i \in H}$,  and
\begin{align*}
H_{L}=\big\{ \bx \in H:x^{(j)}<\hat{q}_{n,r}^{(j)}\big\} \quad \textrm{and} \quad  \overline{Y}_{H_{L}}=\frac{1}{N_{n}(H_{L})}\sum_{i=1}^{a_{n}}Y_{i}\mathds{1}_{X_{i}^{(j)}<\hat{q}_{n,r}^{(j)}}\mathds{1}_{\bX_{i}\in H},
\end{align*}
with the convention $\overline{Y}_{H_L} = 0$ if $H_L$ is empty.
The theoretical quantities $H_L^{\star}$ and  $\overline{Y}_{H_{L}^{\star}}$ are
defined similarly by replacing the empirical quantile by its population version. We define symmetrically $H_R$, $H_R^{\star}$, $\overline{Y}_{H_{R}}$,
$\overline{Y}_{H_{R}^{\star}}$. 

Simple calculations show that \begin{align}
\Delta_{n,r}^{(j)}& =\frac{\sqrt{a_{n}}}{N_{n}(H)}\big(\overline{Y}_{H_{L}}^{2}N_{n}(H_{L})-\overline{Y}_{H_{L}^{\star}}^{2}N_{n}(H_{L}^{\star})\big) \nonumber \\
& \quad +\frac{\sqrt{a_{n}}}{N_{n}(H)}\big(\overline{Y}_{H_{R}}^{2}N_{n}(H_{R})-\overline{Y}_{H_{R}^{\star}}^{2}N_{n}(H_{R}^{\star})\big)
\label{proof_lemme2_eq1}
\end{align}

The first term in equation (\ref{proof_lemme2_eq1}) can be rewritten as
\begin{align*}
&\frac{\sqrt{a_{n}}}{N_{n}(H\big)}(\overline{Y}_{H_{L}}^{2}N_{n}(H_{L})-\overline{Y}_{H_{L}^{\star}}^{2}N_{n}(H_{L}^{\star})\big)\\
& \quad =\frac{\sqrt{a_{n}}}{N_{n}(H)N_{n}(H_{L})N_{n}(H_{L}^{\star})}\sum_{i,k,l=1}^{a_{n}}Y_{i}Y_{k} 
\mathds{1}_{\bX_{i}\in H,\bX_{k}\in H} \\ & \qquad \times \big(\mathds{1}_{X_{l}^{(j)}<q_{r}^{\star(j)}}\mathds{1}_{X_{i}^{(j)}<\hat{q}_{n,r}^{(j)}}\mathds{1}_{X_{k}^{(j)}<\hat{q}_{n,r}^{(j)}} - \mathds{1}_{X_{l}^{(j)}<\hat{q}_{n,r}^{(j)}}\mathds{1}_{X_{i}^{(j)}<q_{r}^{\star(j)}}\mathds{1}_{X_{k}^{(j)}<q_{r}^{\star(j)}}\big).
\end{align*}
Since $Y_{i}\in \{ 0,1\}$, we have the following bound

\begin{align*}
& \frac{\sqrt{a_{n}}}{N_{n}(H)}\big|\overline{Y}_{H_{L}}^{2}N_{n}(H_{L})-\overline{Y}_{H_{L}^{\star}}^{2}N_{n}(H_{L}^{\star})\big| \nonumber \\
&  \leq\frac{\sqrt{a_{n}}}{N_{n}(H)N_{n}(H_{L})N_{n}(H_{L}^{\star})}\sum_{i,k,l=1}^{a_{n}}\big|\mathds{1}_{X_{l}^{(j)}<q_{r}^{\star(j)}}\mathds{1}_{X_{i}^{(j)}<\hat{q}_{n,r}^{(j)}}\mathds{1}_{X_{k}^{(j)}<\hat{q}_{n,r}^{(j)}} \\
& \quad -\mathds{1}_{X_{l}^{(j)}<\hat{q}_{n,r}^{(j)}}\mathds{1}_{X_{i}^{(j)}<q_{r}^{\star(j)}}\mathds{1}_{X_{k}^{(j)}<q_{r}^{\star(j)}}\big|,
\end{align*}
and finally
\begin{align}
\label{delta_bound_1}
\frac{\sqrt{a_{n}}}{N_{n}(H)}\big|\overline{Y}_{H_{L}}^{2}N_{n}(H_{L})-\overline{Y}_{H_{L}^{\star}}^{2}N_{n}(H_{L}^{\star})\big|
& \leq\frac{a_{n}^{3}}{N_{n}(H)N_{n}(H_{L})N_{n}(H_{L}^{\star})}W_{n,r}^{(j)},
\end{align}
where
\begin{align} \label{proof_lemma2_eq2_W}
W_{n,r}^{(j)}=\frac{\sqrt{a_{n}}}{a_{n}^{3}}\sum_{i,k,l=1}^{a_{n}}\big|&\mathds{1}_{X_{l}^{(j)}<q_{r}^{\star(j)}}\mathds{1}_{X_{i}^{(j)}<\hat{q}_{n,r}^{(j)}}\mathds{1}_{X_{k}^{(j)}<\hat{q}_{n,r}^{(j)}}\\&-\mathds{1}_{X_{l}^{(j)}<\hat{q}_{n,r}^{(j)}}\mathds{1}_{X_{i}^{(j)}<q_{r}^{\star(j)}}\mathds{1}_{X_{k}^{(j)}<q_{r}^{\star(j)}}\big|. \nonumber
\end{align}
A close inspection of the terms inside the sum of (\ref{proof_lemma2_eq2_W}) reveals that 
\begin{align*}
\mathbb{E}\big[W_{n,r}^{(j)}\big] & \leq\frac{\sqrt{a_{n}}}{a_{n}^{3}}\sum_{i,k,l=1}^{a_{n}}\mathbb{P}\big(\hat{q}_{n,r}^{(j)}\leq X_{i}^{(j)}<q_{r}^{\star(j)}\big)+\mathbb{P}\big(\hat{q}_{n,r}^{(j)}\leq X_{k}^{(j)}<q_{r}^{\star(j)}\big)\\
& \quad +\mathbb{P}\big(q_{r}^{\star(j)}\leq X_{l}^{(j)}<\hat{q}_{n,r}^{(j)}\big)+\mathbb{P}\big(q_{r}^{\star(j)}\leq X_{i}^{(j)}<\hat{q}_{n,r}^{(j)}\big)\\
& \quad +\mathbb{P}\big(q_{r}^{\star(j)}\leq X_{k}^{(j)}<\hat{q}_{n,r}^{(j)}\big)+\mathbb{P}\big(\hat{q}_{n,r}^{(j)}\leq X_{l}^{(j)}<q_{r}^{\star(j)}\big)\\
& \leq 3\sqrt{a_{n}}~\mathbb{P}\big(\hat{q}_{n,r}^{(j)}\leq X_{1}^{(j)}<q_{r}^{\star(j)}\big)+3\sqrt{a_{n}}~\mathbb{P}\big(q_{r}^{\star(j)}\leq X_{1}^{(j)}<\hat{q}_{n,r}^{(j)}\big),
\end{align*}
which tends to zero, according to Lemma \ref{convergence_quantile}. Thus, in probability, 
\begin{align}
\lim\limits_{n \to \infty} W_{n,r}^{(j)} = 0. \label{proof_lemma2_eq:1}
\end{align}
Regarding the remaining terms in inequality (\ref{delta_bound_1}), by the law of large numbers, in probability,   
\begin{align}
\lim\limits_{n \to \infty} \frac{N_{n}(H)}{a_{n}} = \mathbb{P}\big(\bX\in H\big), \quad
\lim\limits_{n \to \infty} \frac{N_{n}(H_{L}^{\star})}{a_{n}} = \mathbb{P}\big(\bX\in H_{L}^{\star}\big).
\label{proof_lemma2_eq:2}
\end{align}
Additionally, 
\begin{align*}
\mathbb{E}\big[\big|\frac{N_{n}(H_{L})}{a_{n}}-\frac{N_{n}(H_{L}^{\star})}{a_{n}}\big|\big] & \leq \mathbb{E}\big[ \frac{1}{a_{n}}\sum_{i=1}^{a_{n}}\mathds{1}_{X^{(j)}_{i}\in H}\big| \mathds{1}_{X^{(j)}_{i}\leq\hat{q}_{n,r}^{(j)}}-\mathds{1}_{X^{(j)}_{i}\leq q_{r}^{\star(j)}}\big|\big]\\
& \leq \mathbb{P}\big(\hat{q}_{n,r}^{(j)}\leq X_{1}^{(j)}<q_{r}^{\star(j)}\big)+\mathbb{P}\big(q_{r}^{\star(j)}\leq X_{1}^{(j)}<\hat{q}_{n,r}^{(j)}\big),
\end{align*}
which tends to zero, according to Lemma \ref{convergence_quantile}. Therefore, in probability, 
\begin{align}
\lim\limits_{n \to \infty} \frac{N_{n}(H_{L})}{a_{n}}-\frac{N_{n}(H_{L}^{\star})}{a_{n}} = 0.
\label{proof_lemma2_eq:3}    
\end{align}
Since $\mathbb{P}(\bX\in H) > 0$ and $\mathbb{P}(\bX\in H_{L}^{\star}) > 0$ by assumption, 
we can combine (\ref{proof_lemma2_eq:1})-(\ref{proof_lemma2_eq:3}) to obtain, in probability, 
\begin{align}
\lim\limits_{n \to \infty} \frac{a_{n}^{3}}{N_{n}(H)N_{n}(H_{L})N_{n}(H_{L}^{\star})}=\frac{1}{\mathbb{P}
(\bX\in H)\mathbb{P}(\bX\in H_{L}^{\star})^{2}}. \label{eq:2}
\end{align}
Using (\ref{proof_lemma2_eq:1}) and (\ref{eq:2}) and inequality (\ref{delta_bound_1}), we obtain, in probability, 
\begin{align*}
 \lim\limits_{n \to \infty} \frac{\sqrt{a_{n}}}{N_{n}(H)}\big|\overline{Y}_{H_{L}}^{2}N_{n}(H_{L})-\overline{Y}_{H_{L}^{\star}}^{2}N_{n}(H_{L}^{\star})\big| = 0.
 \end{align*}
Similar results can be derived for the other term in equation (\ref{proof_lemme2_eq1}), which allows us to conclude that, in probability, 
\begin{align*}
 \lim\limits_{n \to \infty} \sqrt{a_{n}} \big(L_{a_{n}}\big(H,\hat{q}_{n,r}^{(j)}\big)-L_{a_{n}}\big(H,q_{r}^{\star(j)}\big)\big) = 0.
\end{align*}
\end{proof}

\begin{proof}[Proof of Lemma \ref{criterion_consistency_1}]

Let $j\in\{1, ..., p\}$, $r\in\{1, ..., q - 1\}$ and $H\subseteq\mathds{R}^{p}$ such that 
$\mathbb{P}\big(\bX \in H, X^{(j)} < q^{\star(j)}_{r}\big)>0$ and $\mathbb{P}\big(\bX \in H, X^{(j)} \geq q^{\star(j)}_{r}\big)>0$.
\begin{align*}
L_{a_{n}}\big(H,\hat{q}_{n,r}^{(j)}\big) = L_{a_{n}}\big(H,q_{r}^{\star(j)}\big) +
\big(L_{a_{n}}\big(H,\hat{q}_{n,r}^{(j)}\big) - L_{a_{n}}\big(H,q_{r}^{\star(j)}\big)\big)
\end{align*}
From the law of large number, in probability, 
\begin{align*}
\lim \limits_{n \to \infty} L_{a_{n}}\big(H,q_{r}^{\star(j)}\big)
= L^{\star}\big(H,q_{r}^{\star(j)}\big).
\end{align*}
Thus, according to Lemma \ref{criterion_residual_1}, in probability, 
\begin{align*}
\lim \limits_{n \to \infty} L_{a_{n}}\big(H,\hat{q}_{n,r}^{(j)}\big)
= L^{\star}\big(H,q_{r}^{\star(j)}\big).
\end{align*}
\end{proof}

\begin{proof}[Proof of Lemma \ref{criterion_distribution_1}]
We consider $\bestcut_1$, a set of splits of cardinality $c_1 \geq 2$ satisfying, for all $(j,r) \in \bestcut_1$,
$L^{\star}\big(\R^p,  q_{r}^{\star(j)}\big) \stackrel{\tiny{\rm def}}{=}L^{\star}_{\bestcut_1}$.
Fix $(j_1, r_1) \in \bestcut_1$, we recall that
\begin{align*}
\textbf{L}_{n,\path_1}^{(\bestcut_1)} = \left(\begin{array}{c}
L_{a_{n}}\big(\mathbb{R}^{p},\hat{q}_{n,r}^{(j)}\big)-L_{a_{n}}\big(\mathbb{R}^{p},\hat{q}_{n,r_1}^{(j_1)}\big)
\end{array}\right)_{(j, r) \in \bestcut_1 \setminus (j_1, r_1)}.
\end{align*}

\paragraph{Case (a): $L^{\star}_{\bestcut_1} > 0$} 
We first consider the following decomposition for $(j,r) \in \bestcut_1$,
\begin{align*}
L_{a_{n}}\big(&\R^p,\hat{q}_{n,r}^{(j)}\big) = L_{a_{n}}\big(\R^p,q_{r}^{\star(j)}\big) +
\big(L_{a_{n}}\big(\R^p,\hat{q}_{n,r}^{(j)}\big) - L_{a_{n}}\big(\R^p,q_{r}^{\star(j)}\big)\big) \\
=& \frac{1}{a_{n}}\sum_{i=1}^{a_n}(Y_{i}-\overline{Y})^{2}
-\frac{1}{a_{n}}\sum_{i=1}^{a_n}\big(Y_{i}-\overline{Y}^{\star}_{L}\mathds{1}_{X_{i}^{(j)}<q_{r}^{\star(j)}}
-\overline{Y}^{\star}_{R}\mathds{1}_{X_{i}^{(j)}\geq q_{r}^{\star(j)}}\big)^{2} \\
&+ L_{a_{n}}\big(\R^p,\hat{q}_{n,r}^{(j)}\big) - L_{a_{n}}\big(\R^p,q_{r}^{\star(j)}\big), 
\end{align*}
where 
\begin{align*}
N_{n,L}^{\star} = \sum_{i=1}^{a_{n}} \mathds{1}_{X_{i}^{(j)}<q_{r}^{\star(j)}}    \quad \textrm{and} \quad \overline{Y}_{L}^{\star}=\frac{1}{N_{n,L}^{\star}} \sum_{i=1}^{a_{n}}Y_{i}\mathds{1}_{X_{i}^{(j)}<q_{r}^{\star(j)}}
\end{align*}
($\overline{Y}_{R}^{\star}$, $N_{n,R}^{\star}$ are defined symmetrically).
Letting $\mu_{L,r}^{(j)} = \E\big[Y | X^{(j)}<q_{r}^{\star(j)}\big]$ (and $\mu_{R,r}^{(j)}$ symmetrically), the first two terms of the last decomposition are standard variance estimates and we can write
\begin{align}
L_{a_{n}}\big(\R^p,\hat{q}_{n,r}^{(j)}\big) 
= &\frac{1}{a_{n}}\sum_{i=1}^{a_n}(Y_{i}-\overline{Y})^{2} \\
&-\frac{1}{a_{n}}\sum_{i=1}^{a_n}\big(Y_{i}- \mu_{L,r}^{(j)}\mathds{1}_{X_{i}^{(j)}<q_{r}^{\star(j)}}
-\mu_{R,r}^{(j)}\mathds{1}_{X_{i}^{(j)}\geq q_{r}^{\star(j)}}\big)^{2} + R_{n,r}^{(j)}, \label{eq_lemma4_1}
\end{align}
where
\begin{align}  \label{eq_lemma4}
R_{n,L}^{(j)} =& \frac{N_{n,L}^{\star}}{a_{n}} \big(\overline{Y}^{\star}_{L} - \mu_{L,r}^{(j)}\big)^2 
+ \frac{N_{n,R}^{\star}}{a_{n}}\big(\overline{Y}^{\star}_{R} - \mu_{L,r}^{(j)}\big)^{2} \\ &+ L_{a_{n}}\big(\R^p,\hat{q}_{n,r}^{(j)}\big)
 - L_{a_{n}}\big(\R^p,q_{r}^{\star(j)}\big). \nonumber
\end{align}
Using the Central limit theorem, in probability,
\begin{align} \label{eq_proof_normal_1}
\lim \limits_{n \to \infty} \sqrt{a_n}~ \frac{N_{L,r}^{\star}}{a_{n}} \big(\overline{Y}^{\star}_{L,r} - \mu_{L,r}^{(j)}\big)^2 = 0.
\end{align}
The same result holds for the second term of (\ref{eq_lemma4}), and using Lemma \ref{criterion_residual_1} for the third term of (\ref{eq_lemma4}),
we get that, in probability,
\begin{align*}
\lim \limits_{n \to \infty} \sqrt{a_n} \big(L_{a_{n}}\big(\R^p,\hat{q}_{n,r}^{(j)}\big) - L_{a_{n}}\big(\R^p,q_{r}^{\star(j)}\big) \big) = 0.
\end{align*}
Finally,
\begin{align*}
\lim \limits_{n \to \infty} \sqrt{a_n} R_{n,r}^{(j)} = 0, \quad \textrm{in probability}.
\end{align*}
Using Equation (\ref{eq_lemma4_1}), each component of $\textbf{L}_{n,\path_1}^{(\bestcut_1)}$ writes, with $(j,r) \in \bestcut_1 \setminus (j_1,r_1)$,
\begin{align*}
L_{a_{n}}\big(\R^p,\hat{q}_{n,r}^{(j)}\big)  &- L_{a_{n}}\big(\R^p,\hat{q}_{n,r_1}^{(j_1)}\big) \\
=& \frac{1}{a_{n}}\sum_{i=1}^{a_n}\big(Y_{i} - \mu_{L,r_1}^{(j_1)}\mathds{1}_{X_{i}^{(j_1)}<q_{r_1}^{\star(j_1)}}
-\mu_{R,r_1}^{(j_1)}\mathds{1}_{X_{i}^{(j_1)}\geq q_{r_1}^{\star(j_1)}}\big)^{2} \\
&- \big(Y_{i} - \mu_{L,r}^{(j)}\mathds{1}_{X_{i}^{(j)}<q_{r}^{\star(j)}}
-\mu_{R,r}^{(j)}\mathds{1}_{X_{i}^{(j)}\geq q_{r}^{\star(j)}}\big)^{2} \\ &+ R_{n,r}^{(j)} - R_{n,r_1}^{(j_1)}
 \end{align*}
We can apply the multivariate Central limit theorem and Slutsky's theorem to obtain, 
\begin{align*}
\sqrt{a_{n}}~ \textbf{L}_{n,\path_1}^{(\bestcut_1)} \stackrel[n\rightarrow\infty]{\mathscr{D}}{\longrightarrow}\mathcal{N}\big(0, \Sigma\big)
\end{align*}
where for all $(j,r), (j',r') \in \bestcut_1 \setminus (j_1, r_1)$, each element of the covariance matrix $\Sigma$ is defined by $
\Sigma_{(j,r), (j',r')} = \textrm{Cov}[Z_{j,r}, Z_{j',r'}],
$ with
\begin{align*}
Z_{j,r}= & \big(Y - \mu_{L,r_1}^{(j_1)}\mathds{1}_{X^{(j_1)}<q_{r_1}^{\star(j_1)}}
-\mu_{R,r_1}^{(j_1)}\mathds{1}_{X^{(j_1)}\geq q_{r_1}^{\star(j_1)}}\big)^{2}\\
&- \big(Y - \mu_{L,r}^{(j)}\mathds{1}_{X^{(j)}<q_{r}^{\star(j)}}
-\mu_{R,r}^{(j)}\mathds{1}_{X^{(j)}\geq q_{r}^{\star(j)}}\big)^{2}.
\end{align*}
Since $L^{\star}_{\bestcut_1} > 0$, we have for all $(j,r) \in \bestcut_1$, $\mu_{L,r}^{(j)} \neq \mu_{R,r}^{(j)}$. Besides, according to assumption (A3), $\bX$ has a strictly positive density. Consequently, the variance of $Z_{j,r}$ is strictly positive. This concludes the first case.

\paragraph{Case (b): $L^{\star}_{\bestcut_1} = 0$}
Fix $(j,r) \in \bestcut_1$. Since $L^{\star}\big(\R^p, q_{r}^{\star(j)}\big) = 0$, we have 
\begin{align*}
\E[Y] = \E\big[Y|X^{(j)} < q^{\star (j)}_{r}\big] = \E\big[Y|X^{(j)} \geq q^{\star (j)}_{r}\big] \stackrel{\tiny{\rm def}}{=} \mu. 
\end{align*}
Then, simple calculations show that $L_{a_{n}}\big(\mathbb{R}^{p},\hat{q}_{n,r}^{(j)}\big)$ writes
\begin{align*}
L_{a_{n}}\big(\mathbb{R}^{p},\hat{q}_{n,r}^{(j)}\big) = - (\overline{Y} - \mu)^2 + \underbrace{\frac{N_{n,L}}{a_n}(\overline{Y}_{L} - \mu)^2}_{\delta_{L}} 
 + \underbrace{\frac{N_{n,R}}{a_n}(\overline{Y}_{R} - \mu)^2}_{\delta_{R}},
\end{align*}
where 
\begin{align*}
N_{n,L} = \sum_{i=1}^{a_n} \mathds{1}_{X_i^{(j)} < \hat{q}_{n,r}^{(j)}}  
\quad \textrm{and}
\quad 
\overline{Y}_{L}=\frac{1}{N_{n,L}}\sum_{i=1}^{a_{n}}Y_{i}\mathds{1}_{X_{i}^{(j)}<\hat{q}_{n,r}^{(j)}}
\end{align*}
($N_{n,R}$, $\overline{Y}_{R}$ are defined similarly for the other cell).
Letting $p_{L,r}^{(j)} = \P \big(X^{(j)}<q_{r}^{\star (j)}\big)$ and $p_{R,r}^{(j)} = \P \big(X^{(j)} \geq q_{r}^{\star (j)}\big)$ with $p_{L,r}^{(j)}, p_{R,r}^{(j)} >0$, we have 
\begin{align*}
\delta_{L} &= \frac{N_{n,L}}{a_n}(\overline{Y}_{L} - \mu)^2 \\
&= \frac{N_{n,L}}{a_n}(\overline{Y}_{L}^{\star} - \mu)^2 - 2\frac{N_{n,L}}{a_n} (\overline{Y}_{L}^{\star} - \overline{Y}_{L})(\overline{Y}_{L}^{\star} - \mu)
+ \frac{N_{n,L}}{a_n} (\overline{Y}_{L}^{\star} - \overline{Y}_{L})^2 \\
&= \frac{1}{p_{L,r}^{(j)}} \big( \frac{1}{a_n} \sum_{i=1}^{a_n} (Y_i - \mu) \mathds{1}_{X_i^{(j)} < q^{\star(j)}_{r}} \big)^2 + R_{L,r}^{(j)},
\end{align*}
where
\begin{align*}
R_{L,r}^{(j)} = & \big(\frac{a_n N_{n,L}}{N_{n,L}^{\star 2}} - \frac{1}{p_{n,L}}\big)
 \big( \frac{1}{a_n} \sum_{i=1}^{a_n} (Y_i - \mu) \mathds{1}_{X_i^{(j)} < q^{\star(j)}_{r}} \big)^2 \\
& - 2\frac{N_{n,L}}{a_n} (\overline{Y}_{L}^{\star} - \overline{Y}_{L})(\overline{Y}_{L}^{\star} - \mu)
+ \frac{N_{n,L}}{a_n} (\overline{Y}_{L}^{\star} - \overline{Y}_{L})^2
\end{align*}
By the law of large numbers, $\lim \limits_{n \to \infty} \frac{N_{n,L}^{\star}}{a_n} = p_{L,r}^{(j)}$ in probability.
Using Equation (\ref{proof_lemma2_eq:3}) in the proof of Lemma \ref{criterion_residual_1}, 
it comes that, in probability, $\lim \limits_{n \to \infty} \frac{N_{n,L}}{a_n} = p_{L,r}^{(j)} $, 
and consequently $\lim \limits_{n \to \infty} \frac{a_n N_{n,L}}{N_{n,L}^{\star 2}}   = \frac{1}{p_{L,r}^{(j)}}$.
Since $\sqrt{a_n} \frac{1}{a_n} \sum_{i=1}^{a_n} (Y_i - \mu) \mathds{1}_{X_i^{(j)} < q^{\star(j)}_{r}}$ converges in distribution to a normal distribution by the Central limit theorem, 
\begin{align*}
\lim \limits_{n \to \infty} a_n \big(\frac{a_n N_{n,L}}{N_{n,L}^{\star 2}} - \frac{1}{p_{L,r}^{(j)}}\big)
 \big( \frac{1}{a_n} \sum_{i=1}^{a_n} (Y_i - \mu) \mathds{1}_{X_i^{(j)} < q^{\star(j)}_{r}} \big)^2 = 0, \quad  \textrm{in probability.}    
\end{align*}
Furthermore, as for Equation (\ref{proof_lemma2_eq2_W}) in the proof of Lemma \ref{criterion_residual_1}, 
\begin{align*}
\sqrt{a_n} & |\overline{Y}_{L}^{\star} - \overline{Y}_{L}|
\\ &\leq \frac{a_n^2}{N_{n,L} N_{n,L}^{\star}} \underbrace{\frac{\sqrt{a_n}}{a_n^2} \sum_{i=1,l=1}^{a_n} Y_i 
\big|\mathds{1}_{X_i^{(j)} < q^{\star(j)}_{r}}\mathds{1}_{X_l^{(j)} < \hat{q}^{(j)}_{r}}
 - \mathds{1}_{X_i^{(j)} < \hat{q}^{(j)}_{r}}\mathds{1}_{X_l^{(j)} < q^{\star(j)}_{r}}\big|}_{\varepsilon_{L}},
\end{align*}
and 
\begin{align*}
\E[\varepsilon_{L}] \leq 2 \sqrt{a_n} \P\big(\hat{q}^{(j)}_{r} \leq X^{(j)} < q^{\star(j)}_{r}\big)
+ 2 \sqrt{a_n} \P\big(q^{\star(j)}_{r} \leq X^{(j)} < \hat{q}^{(j)}_{r}\big).
\end{align*}
According to Lemma \ref{convergence_quantile}, the right hand side term converges to $0$. Then, in probability,  $\lim \limits_{n \to \infty} \varepsilon_L = 0$.
Additionally, $\lim \limits_{n \to \infty} \frac{a_n^2}{N_{n,L} N_{n,L}^{\star}} = \frac{1}{p_{L,r}^{(j) 2}}$, and then, in probability,
\begin{align} \label{eq_R_term_2}
\lim \limits_{n \to \infty} \sqrt{a_n} (\overline{Y}_{L}^{\star} - \overline{Y}_{L}) = 0.
\end{align}
The second term of $a_n R_{L,r}^{(j)}$ writes
\begin{align*}
- a_n \times 2\frac{N_{n,L}}{a_n} (\overline{Y}_{L}^{\star} &- \overline{Y}_{L})(\overline{Y}_{L}^{\star} - \mu)
\\ &= - 2\frac{N_{n,L}}{a_n} \times \sqrt{a_n} (\overline{Y}_{L}^{\star} - \overline{Y}_{L}) \times \sqrt{a_n} (\overline{Y}_{L}^{\star} - \mu),
\end{align*}
where in probability, $\lim \limits_{n \to \infty} 2\frac{N_{n,L}}{a_n} = p_{L,r}^{(j)}$, $\lim \limits_{n \to \infty} \sqrt{a_n} (\overline{Y}_{L}^{\star} - \overline{Y}_{L}) = 0$ according to equation \ref{eq_R_term_2}, and $\sqrt{a_n} (\overline{Y}_{L}^{\star} - \mu)$ converges to a normal random variable from the central limit theorem.
 By Slutsky theorem, in probability,
$\lim \limits_{n \to \infty} - a_n \times 2\frac{N_{n,L}}{a_n} (\overline{Y}_{L}^{\star} - \overline{Y}_{L})(\overline{Y}_{L}^{\star} - \mu) = 0$.
Finally for the third term of $a_n R_{L,r}^{(j)}$ we also use equation \ref{eq_R_term_2} to conclude that in probability
\begin{align*}
\lim \limits_{n \to \infty} a_n \times \frac{N_{n,L}}{a_n} (\overline{Y}_{L}^{\star} - \overline{Y}_{L})^2 
= \lim \limits_{n \to \infty} \frac{N_{n,L}}{a_n} [\sqrt{a_n} (\overline{Y}_{L}^{\star} - \overline{Y}_{L})]^2 = 0
\end{align*}
Consequently,
\begin{align*}
\lim \limits_{n \to \infty} a_n R_{L,r}^{(j)} = 0.
\end{align*}
Symmetrically, we also have
\begin{align*}
\delta_{R} = \frac{1}{p_{R}} \big( \frac{1}{a_n} \sum_{i=1}^{a_n} (Y_i - \mu) \mathds{1}_{X_i^{(j)} \geq q^{\star(j)}_{r}} \big)^2 + R_{R,r}^{(j)},
\end{align*}
with $\lim \limits_{n \to \infty} a_n R_{R,r}^{(j)} = 0$, in probability.

Each component of $\textbf{L}_{n,\path_1}^{(\bestcut_1)}$ writes, with $(j,r) \in \bestcut_1 \setminus (j_1,r_1)$,
\begin{align*}
L&_{a_{n}}\big(\R^p,\hat{q}_{n,r}^{(j)}\big)  - L_{a_{n}}\big(\R^p,\hat{q}_{n,r_1}^{(j_1)}\big)
= \frac{1}{p_{L,r}^{(j)}} \big( \frac{1}{a_n} \sum_{i=1}^{a_n} (Y_i - \mu) \mathds{1}_{X_i^{(j)} < q^{\star(j)}_{r}} \big)^2 \\
&+ \frac{1}{p_{R,r}^{(j)}} \big( \frac{1}{a_n} \sum_{i=1}^{a_n} (Y_i - \mu) \mathds{1}_{X_i^{(j)} \geq q^{\star(j)}_{r}} \big)^2  
- \frac{1}{p_{L,r_1}^{(j_1)}} \big( \frac{1}{a_n} \sum_{i=1}^{a_n} (Y_i - \mu) \mathds{1}_{X_i^{(j_{1})} < q^{\star(j_{1})}_{r_{1}}} \big)^2 \\
&- \frac{1}{p_{R,r_1}^{(j_1)}} \big( \frac{1}{a_n} \sum_{i=1}^{a_n} (Y_i - \mu) \mathds{1}_{X_i^{(j_{1})} \geq q^{\star(j_{1})}_{r_{1}}} \big)^2 + R_{L,r}^{(j)} + R_{R,r}^{(j)} - R_{L,r_1}^{(j_1)} - R_{R,r_1}^{(j_1)}.
 \end{align*}

We explicitly write $\bestcut_1 = \{(j_k, r_k)\}_{k=1,...,c_1}$. Then $\textbf{L}_{n,\path_1}^{(\bestcut_1)}$ can be decomposed as
\begin{align*}
a_n \textbf{L}_{n,\path_1}^{(\bestcut_1)} = h_{\path_1}(\mathbf{V}_n) + \mathbf{R}_{n, \path_1},
\end{align*}
where for $k \in \{1,...,c_1\}$,
\begin{align*}
V_{n,2k-1} = \sqrt{\frac{a_n}{p_{L,r_k}^{(j_k)}}} \frac{1}{a_n} \sum_{i=1}^{a_n} (Y_i - \mu) \mathds{1}_{X_i^{(j_k)} < q^{\star(j_k)}_{r_k}}, \\
V_{n,2k} = \sqrt{\frac{a_n}{p_{R,r_k}^{(j_k)}}} \frac{1}{a_n} \sum_{i=1}^{a_n} (Y_i - \mu) \mathds{1}_{X_i^{(j_k)} \geq q^{\star(j_k)}_{r_k}}.
\end{align*}
$h_{\path_1}$ is a multivariate quadratic form defined as
\begin{align*}
h_{\path_1}: \left( \begin{array}{c} x_1 \\ \vdots \\ x_{2c_1} \end{array} \right) \rightarrow \left( \begin{array}{c}
x_{3}^2 + x_{4}^2 - x_1^2 - x_2^2\\ \vdots \\
x_{2k-1}^2 + x_{2k}^2 - x_{1}^2 - x_{2}^2 \\ \vdots \\
x_{2c_1-1}^2 + x_{2c_1}^2 - x_{1}^2 - x_{2}^2  \end{array}\right).
\end{align*}
and $R_{n, \path_1, k} = R_{L,r_k}^{(j_k)} + R_{R,r_k}^{(j_k)} - R_{L,r_1}^{(j_1)} - R_{R,r_1}^{(j_1)}$.

From the multivariate central limit theorem, $\mathbf{V}_n \stackrel[n\rightarrow\infty]{\mathscr{D}}{\longrightarrow} \mathbf{V}$,
where $\mathbf{V}$ is a gaussian vector of covariance matrix $\textrm{Cov}[\mathbf{Z}]$, and $\mathbf{Z}$ is defined as, for $k \in \{1,...,c_1\}$,
\begin{align*}
Z_{2k-1} = \frac{1}{\sqrt{p_{L,k}}} (Y - \E[Y]) \mathds{1}_{X^{(j_k)} < q_{r_k}^{\star (j_k)}}, 
Z_{2k} = \frac{1}{\sqrt{p_{R,k}}} (Y - \E[Y]) \mathds{1}_{X^{(j_k)} \geq q_{r_k}^{\star (j_k)}},
\end{align*}
with the simplified notations $p_{L,k} = p_{L,r_k}^{(j_k)}$ and $p_{R,k} = p_{R,r_k}^{(j_k)}$.

Finally, since $\lim \limits_{n \to \infty} \mathbf{R}_{n, \path_1} = \mathbf{0}$ in probability, from Slutsky's theorem and the continuous mapping theorem,
$a_{n} \textbf{L}_{n,\path_1}^{(\bestcut_1)} \stackrel[n\rightarrow\infty]{\mathscr{D}}{\longrightarrow} h_{\path_1}(\mathbf{V})$.
Note that, since $\bX$ has a strictly positive density, each component of $h_{\path_1}(\mathbf{V})$ has a strictly positive variance.
\end{proof}

\begin{proof}[Proof of Lemma  \ref{convergence_pn_theta_1}]
Consider a path $\path = (j_1, r_1, \cdot)$. Set $\theta^{(V)} = (\theta_1^{(V)},\cdot,\cdot) \in \Omega^{(V)}$, a realization of the randomization of the split direction. Recalling that the best split in a random tree is the one maximizing the CART-splitting criterion, condition on $\Theta^{(V)}=\theta^{(V)}$,
\begin{align}
\label{pn_theta_P1}
\{ \path_{1} \in T(\Theta,\mathscr{D}_{n}) \} 
=\underset{\substack{(j,r) \in \theta_{1}^{(V)} \times \{1, ..., q - 1\} \\ \qquad \quad \setminus{(j_1, r_1)}}}
{\bigcap}\big\{ L_{a_{n}}\big(\mathds{R}^{p},\hat{q}_{n,r_{1}}^{(j_{1})}\big)>L_{a_{n}}\big(\mathds{R}^{p},\hat{q}_{n,r}^{(j)}\big)\big\}.
\end{align}
We recall that, given $\theta^{(V)}$, we define the set of best theoretical cuts along the variables in $\theta^{(V)}_1$ as  
$$
\bestcut_{1}^{\star}\big(\theta_1^{(V)}\big) = \underset{(j,r) \in \theta_{1}^{(V)} \times \{1, ..., q - 1\}}{\textrm{argmax}} L^{\star}\big(\mathbb{R}^{p},q_{r}^{\star(j)}\big).
$$
Obviously if $(j_1,r_1) \notin \theta_{1}^{(V)} \times \{1, ..., q - 1\}$, the probability to select $\path_1$ in the empirical and theoretical tree is null.
In the sequel, we assume that $(j_1,r_1) \in \theta_{1}^{(V)} \times \{1, ..., q - 1\}$ and distinguish between four cases:
$(j_1,r_1)$ is not among the best theoretical cuts $\bestcut_{1}^{\star}\big(\theta_1^{(V)}\big)$, is the only element in $\bestcut_{1}^{\star}\big(\theta_1^{(V)}\big)$, is one element of $\bestcut_{1}^{\star}\big(\theta_1^{(V)}\big)$ with a positive value of the theoretical CART-splitting criterion, or finally, is one element of $\bestcut_{1}^{\star}\big(\theta_1^{(V)}\big)$ that all have a null value of the theoretical CART-splitting criterion.
\paragraph{Case 1} We assume that 
$(j_1, r_1) \notin \bestcut_{1}^{\star}\big(\theta_1^{(V)}\big)$. By definition of the theoretical random forest,
\begin{align}
\mathbb{P}\big(\path_{1} \in T^{\star}(\Theta)|\Theta^{(V)}=\theta^{(V)}\big) = 0    
\end{align}
Let $\big(j^{\star}, r^{\star} \big) \in \bestcut_{1}^{\star} \big(\theta_1^{(V)}\big)$, thus 
\begin{align*}
\varepsilon =L^{\star}\big(\mathds{R}^{p},q_{r^{\star}}^{\star(j^{\star})}\big) - L^{\star}\big(\mathds{R}^{p},q_{r_{1}}^{\star (j_{1})}\big) >0.
\end{align*}
Using equation (\ref{pn_theta_P1}), we have:
\begin{align*}
\mathbb{P}\big(\path_{1} \in T&\big(\Theta,\mathscr{D}_{n}\big)|\Theta^{(V)}=\theta^{(V)}\big)
\\
\leq \mathbb{P}\big(L_{a_{n}}&\big(\mathds{R}^{p},\hat{q}_{n,r_{1}}^{(j_{1})}\big)>L_{a_{n}}\big(\mathds{R}^{p},\hat{q}_{n,r^{\star}}^{(j^{\star})}\big)\big) \\
\leq \mathbb{P}\big(L_{a_{n}}&\big(\mathds{R}^{p},\hat{q}_{n,r_{1}}^{(j_{1})}\big)
- L^{\star}\big(\mathds{R}^{p},q_{r_{1}}^{\star (j_{1})}\big) - \epsilon
>L_{a_{n}}\big(\mathds{R}^{p},\hat{q}_{n,r^{\star}}^{(j^{\star})}\big)
- L^{\star}\big(\mathds{R}^{p},q_{r^{\star}}^{\star(j^{\star})}\big)\big) \\
 \leq \mathbb{P}\big(  L_{a_{n}}&\big(\mathds{R}^{p},\hat{q}_{n,r_{1}}^{(j_{1})}\big)
- L^{\star}\big(\mathds{R}^{p},q_{r_{1}}^{\star (j_{1})}\big)    
 \\ & - \big( L_{a_{n}}\big(\mathds{R}^{p},\hat{q}_{n,r^{\star}}^{(j^{\star})}\big)
- L^{\star}\big(\mathds{R}^{p},q_{r^{\star}}^{\star(j^{\star})}\big) \big) > \epsilon \big)
\end{align*}
Therefore, according to Lemma \ref{criterion_consistency_1},
\begin{align*}
\lim \limits_{n \to \infty} \mathbb{P}\big(\path_{1} \in T(\Theta,\mathscr{D}_{n})|\Theta^{(V)}=\theta^{(V)}\big) = 0
= \mathbb{P}\big(\path_{1} \in T^{\star}(\Theta)|\Theta^{(V)}=\theta^{(V)}\big)
\end{align*}
\paragraph{Case 2} We assume that 
$\bestcut_{1}^{\star}\big(\theta_1^{(V)}\big) = \big\{ (j_1, r_1) \big\}$. By definition of the theoretical random forest,  
\begin{align}
\mathbb{P}\big(\path_{1} \in T^{\star}(\Theta)|\Theta^{(V)}=\theta^{(V)}\big) = 1.    
\end{align}
Conditional on $\Theta^{(V)}=\theta^{(V)}$,
\begin{align*}
\left\lbrace\path_{1} \in T(\Theta,\mathscr{D}_{n})\right\rbrace^c =
\underset{\substack{(j,r) \in \theta_{1}^{(V)} \times \{1, ..., q - 1\} \\ \qquad \quad  \setminus{(j_1, r_1)}}}
{\bigcup}\big\{ L_{a_{n}}\big(\mathds{R}^{p},\hat{q}_{n,r_{1}}^{(j_{1})}\big)
\leq L_{a_{n}}\big(\mathds{R}^{p},\hat{q}_{n,r}^{(j)}\big)\big\},   
\end{align*}
which leads to 
\begin{align}
1 - & \mathbb{P}\big(\path_{1} \in   T(\Theta,\mathscr{D}_{n})|\Theta^{(V)}=\theta^{(V)}\big) \nonumber \\
& \leq \sum_{(j,r) \in \theta_{1}^{(V)} \times \{1, ..., q - 1\} \setminus{(j_1, r_1)}}
\mathbb{P}\big( L_{a_{n}}\big(\mathds{R}^{p},\hat{q}_{n,r_{1}}^{(j_{1})}\big)
\leq L_{a_{n}}\big(\mathds{R}^{p},\hat{q}_{n,r}^{(j)}\big) \big). \label{proof_eq_decomp_prob}
\end{align}
From Lemma \ref{criterion_consistency_1}, for all $j\in\theta^{(V)}_0$, $r\in\{1, ..., q - 1\}$
such that $(j, r) \neq (j_1, r_1)$, in probability,
\begin{align}
\lim \limits_{n \to \infty} L_{a_{n}}\big(\mathds{R}^{p},\hat{q}_{n,r_{1}}^{(j_{1})}\big)
- L_{a_{n}}\big(\mathds{R}^{p},\hat{q}_{n,r}^{(j)}\big) 
=  L^{\star}\big(\mathbb{R}^{p}, q_{r_{1}}^{\star(j_{1})}\big)
-  L^{\star}\big(\mathbb{R}^{p}, q_{r}^{\star(j)}\big) > 0. \label{proof_eq_comp_crit}
\end{align}
Using inequality (\ref{proof_eq_decomp_prob}) and equation (\ref{proof_eq_comp_crit}), we finally obtain,
\begin{align*}
\lim \limits_{n \to \infty}
\mathbb{P}\big(\path_{1} \in T(\Theta,\mathscr{D}_{n})|\Theta^{(V)}=\theta^{(V)}\big)
= 1 = \mathbb{P}\big(\path_{1} \in T^{\star}(\Theta)|\Theta^{(V)}=\theta^{(V)}\big).
\end{align*}
\paragraph{Case 3} We assume that 
$(j_1, r_1) \in \bestcut_{1}^{\star}\big(\theta_1^{(V)}\big)$, $\big| \bestcut_{1}^{\star}\big(\theta_1^{(V)}\big) \big| > 1$,
and $L^{\star}\big(\mathbb{R}^{p}, q_{r_{1}}^{\star(j_{1})}\big) > 0$.
On one hand, conditional on  $\Theta^{(V)}=\theta^{(V)}$,
\begin{align*}
\left\lbrace \path_{1} \in T(\Theta,\mathscr{D}_{n}) \right\rbrace
\subset \underset{(j, r) \in \bestcut_{1}^{\star}(\theta_1^{(V)}) \setminus{(j_1, r_1)}}
{\bigcap}\big\{ L_{a_{n}}\big(\mathds{R}^{p},\hat{q}_{n,r_{1}}^{(j_{1})}\big)
>L_{a_{n}}\big(\mathds{R}^{p},\hat{q}_{n,r}^{(j)}\big)\big\}.
\end{align*}
On the other hand, conditional on $\Theta^{(V)}=\theta^{(V)}$,
\begin{align*}
\left\lbrace \path_{1} \in T(\Theta,\mathscr{D}_{n})\right\rbrace^c 
= \underset{(j, r) \in \bestcut_{1}^{\star}(\theta_1^{(V)}) \setminus{(j_1, r_1)}}
{\bigcup}\big\{ L_{a_{n}}\big(\mathds{R}^{p},\hat{q}_{n,r_{1}}^{(j_{1})}\big)
\leq L_{a_{n}}\big(\mathds{R}^{p},\hat{q}_{n,r}^{(j)}\big)\big\}  \\
 \underset{(j, r) \in \theta_1^{(V)} \times \{1, ..., q - 1\} \setminus \bestcut_{1}^{\star}{(\theta_1^{(V)})}}
{\bigcup}\big\{ L_{a_{n}}\big(\mathds{R}^{p},\hat{q}_{n,r_{1}}^{(j_{1})}\big)
\leq L_{a_{n}}\big(\mathds{R}^{p},\hat{q}_{n,r}^{(j)}\big)\big\}.
\end{align*}
Combining the two previous inclusions, 
\begin{align*}
0 & \leq \mathbb{P}\big(\underset{(j, r) \in \bestcut_{1}^{\star}(\theta_1^{(V)}) \setminus{(j_1, r_1)}}
{\bigcap}\big\{ L_{a_{n}}\big(\mathds{R}^{p},\hat{q}_{n,r_{1}}^{(j_{1})}\big)
>L_{a_{n}}\big(\mathds{R}^{p},\hat{q}_{n,r}^{(j)}\big)\big\} \big)\\
& \quad - \mathbb{P}\big(\path_{1} \in T(\Theta,\mathscr{D}_{n})|\Theta^{(V)}=\theta^{(V)}\big)  \\ 
& \leq \sum_{(j, r) \in \theta_1^{(V)} \times \{1, ..., q - 1\} \setminus \bestcut_{1}^{\star}{(\theta_1^{(V)})}}
\mathbb{P}\big( L_{a_{n}}\big(\mathds{R}^{p},\hat{q}_{n,r_{1}}^{(j_{1})}\big)
\leq L_{a_{n}}\big(\mathds{R}^{p},\hat{q}_{n,r}^{(j)}\big) \big).
\end{align*}
Using the same reasoning as in \textbf{Case 2}, we get
\begin{align*}
\lim \limits_{n \to \infty}  & \mathbb{P}\big(\underset{(j, r) \in \bestcut_{1}^{\star}(\theta_1^{(V)}) \setminus{(j_1, r_1)}}
{\bigcap}\big\{ L_{a_{n}}\big(\mathds{R}^{p},\hat{q}_{n,r_{1}}^{(j_{1})}\big)
>L_{a_{n}}\big(\mathds{R}^{p},\hat{q}_{n,r}^{(j)}\big)\big\} \big) \\
& \quad - \mathbb{P}\big(\path_{1} \in T(\Theta,\mathscr{D}_{n})|\Theta^{(V)}
=\theta^{(V)}\big) = 0.
\end{align*}
We define the random vector $\textbf{L}_{n,\path_1}^{(\bestcut_1^{\star})}$ where each component is the difference between the empirical
 CART-splitting criterion for the splits $(j, r) \in \bestcut_1^{\star} \setminus (j_1, r_1)$ and $(j_1, r_1)$,
\begin{align*}
\textbf{L}_{n,\path_1}^{(\bestcut_1^{\star})} = \left(\begin{array}{c}
L_{a_{n}}\big(\mathbb{R}^{p},\hat{q}_{n,r}^{(j)}\big)-L_{a_{n}}\big(\mathbb{R}^{p},\hat{q}_{n,r_1}^{(j_1)}\big)
\end{array}\right)_{(j, r) \in \bestcut_1^{\star} \setminus (j_1, r_1)},
\end{align*}
then
\begin{align*}
\mathbb{P}\big(\underset{(j, r) \in \bestcut_{1}^{\star}(\theta_1^{(V)}) \setminus{(j_1, r_1)}}
{\bigcap}\big\{ L_{a_{n}}\big(\mathds{R}^{p},\hat{q}_{n,r_{1}}^{(j_{1})}\big)
>L_{a_{n}}\big(\mathds{R}^{p},\hat{q}_{n,r}^{(j)}\big)\big\} \big) 
= \mathbb{P} \big( \textbf{L}_{n,\path_1}^{(\bestcut_1^{\star})} < \textbf{0} \big)
\end{align*}
From Lemma \ref{criterion_distribution_1} (case (a)),
\begin{align*}
\sqrt{a_{n}} \textbf{L}_{n,\path_1}^{(\bestcut_1^{\star})} \stackrel[n\rightarrow\infty]{\mathscr{D}}{\longrightarrow}\mathcal{N}\big(0, \Sigma\big).
\end{align*}
where for all $(j,r), (j',r') \in \bestcut_1^{\star} \setminus (j_1, r_1)$, each element of the covariance matrix $\Sigma$ is defined by
\begin{align*}
\Sigma_{(j,r), (j',r')} = \textrm{Cov}[Z_{j,r}, Z_{j',r'}],
\end{align*}
with
\begin{align*}
Z_{j,r}= & \big(Y - \mu_{L,r_1}^{(j_1)}\mathds{1}_{X^{(j_1)}<q_{r_1}^{\star(j_1)}}
-\mu_{R,r_1}^{(j_1)}\mathds{1}_{X^{(j_1)}\geq q_{r_1}^{\star(j_1)}}\big)^{2}\\
&- \big(Y - \mu_{L,r}^{(j)}\mathds{1}_{X^{(j)}<q_{r}^{\star(j)}}
-\mu_{R,r}^{(j)}\mathds{1}_{X^{(j)}\geq q_{r}^{\star(j)}}\big)^{2},
\end{align*}
$\mu_{L,r}^{(j)} = \E\big[Y | X^{(j)}<q_{r}^{\star(j)}\big]$, 
$\mu_{R,r}^{(j)} = \E\big[Y | X^{(j)} \geq q_{r}^{\star(j)}\big]$, 
and the variance of $Z_{j,r}$ is strictly positive.
If $\Phi_{\theta_1^{(V)}, (j_1, r_1) }$ is the c.d.f. of the multivariate normal distribution of covariance matrix $\Sigma$, we can conclude
\begin{align*}
\lim \limits_{n \to \infty} \mathbb{P}\big(\path_{1} \in T(\Theta,\mathscr{D}_{n})|\Theta^{(V)}
=\theta^{(V)}\big) & = \lim \limits_{n \to \infty} \mathbb{P} \big( \sqrt{a_{n}} \textbf{L}_{n,\path_1}^{(\bestcut_1^{\star})} < \textbf{0} \big)\\
& = \Phi_{\theta_1^{(V)}, (j_1, r_1)} (\textbf{0}),
\end{align*}
where  
\begin{align*}
\sum_{(j,r) \in \bestcut_{1}^{\star}(\theta_1^{(V)})} 
 \Phi_{\theta_1^{(V)}, (j, r)} (\textbf{0}) = 1.
\end{align*}
According to Definition \ref{def_proba_equality_1}, in the theoretical random forest, if $\bestcut_{1}^{\star}\big(\theta_1^{(V)}\big)$ has multiple elements, $(j_1, r_1)$ is randomly drawn with probability 
\begin{align*}
\mathbb{P}\big(\path_{1} \in T^{\star}(\Theta)|\Theta^{(V)}=\theta^{(V)}\big) 
= \Phi_{\theta_1^{(V)}, (j_1, r_1)} (\textbf{0}),
\end{align*}
that is 
\begin{align*}
\lim \limits_{n \to \infty} \mathbb{P}\big(\path_{1} \in T(\Theta,\mathscr{D}_{n})|\Theta^{(V)} = \theta^{(V)}\big)
&= \mathbb{P}\big(\path_{1} \in T^{\star}(\Theta)|\Theta^{(V)}=\theta^{(V)}\big) 
\\ &= \Phi_{\theta_1^{(V)}, (j_1, r_1)} (\textbf{0}).
\end{align*}
We can notice that, in the specific case where $\bestcut_{1}^{\star}\big(\theta_1^{(V)}\big)$ has two elements, they are both selected with equal probability $\frac{1}{2}$. For more than two elements, the weights are not necessary equal, it depends on the covariance matrix $\Sigma$.
\paragraph{Case 4} We assume that all candidate splits have a null value for the theoretical CART-splitting criterion, 
i.e. for $(j, r) \in \theta_{1}^{(V)} \times \{1, ..., q - 1\}$, $L^{\star}\big(\mathbb{R}^{p}, q_{r}^{\star(j)}\big) = 0$.
Consequently $\bestcut_{1}^{\star}(\theta_1^{(V)}) = \theta_{1}^{(V)} \times \{1, ..., q - 1\}$.
By definition
\begin{align*}
\mathbb{P}\big(\path_{1} \in T(\Theta,\mathscr{D}_{n})|\Theta^{(V)}=\theta^{(V)}\big) 
= \P \big(\textbf{L}_{n,\path_1}^{(\bestcut_{1}^{\star})} < \mathbf{0}\big).
\end{align*}
According to Lemma \ref{criterion_distribution_1} (case (b)),
\begin{align*}
a_{n} \textbf{L}_{n,\path_1}^{(\bestcut_1)} \stackrel[n\rightarrow\infty]{\mathscr{D}}{\longrightarrow} h_{\path_1}(\mathbf{V}),
\end{align*}
where $\mathbf{V}$ is a gaussian vector of covariance matrix $\textrm{Cov}[\mathbf{Z}]$. If $\bestcut_{1}^{\star}\big(\theta_1^{(V)}\big)$ is explicitly written
$\bestcut_{1}^{\star}\big(\theta_1^{(V)}\big) = \{(j_k, r_k)\}_{k=1,...,c_1}$, $\mathbf{Z}$ is defined as, for $k \in \{1,...,c_1\}$,
\begin{align*}
Z_{2k-1} = \frac{1}{\sqrt{p_{L,k}}} (Y - \E[Y]) \mathds{1}_{X^{(j_k)} < q_{r_k}^{\star (j_k)}} \\
Z_{2k} = \frac{1}{\sqrt{p_{R,k}}} (Y - \E[Y]) \mathds{1}_{X^{(j_k)} \geq q_{r_k}^{\star (j_k)}},
\end{align*}
$p_{L,k} = \P \big(X^{(j_k)}<q_{r_k}^{\star (j_k)}\big)$, $p_{R,k} = \P \big(X^{(j_k)} \geq q_{r_k}^{\star (j_k)}\big)$, and $h_{\path_1}$ is a multivariate quadratic form defined as
\begin{align*}
h_{\path_1}: \left( \begin{array}{c} x_1 \\ \vdots \\ x_{2c_1} \end{array} \right) \rightarrow \left( \begin{array}{c}
x_{3}^2 + x_{4}^2 - x_1^2 - x_2^2\\ \vdots \\
x_{2k-1}^2 + x_{2k}^2 - x_{1}^2 - x_{2}^2 \\ \vdots \\
x_{2c_1-1}^2 + x_{2c_1}^2 - x_{1}^2 - x_{2}^2  \end{array}\right).
\end{align*}
and the variance of each component of $h_{\path_1}(\mathbf{V})$ is strictly positive.
If $\Phi_{\theta_1^{(V)}, (j_1, r_1)}$ is the cdf of $h_{\path_1}(\mathbf{V})$, then as in \textbf{Case 3},
\begin{align*}
\lim \limits_{n \to \infty} \mathbb{P}\big(\path_{1} \in T(\Theta,\mathscr{D}_{n})|\Theta^{(V)} =\theta^{(V)}\big)
 &= \Phi_{\theta_1^{(V)}, (j_1, r_1)}(\mathbf{0})
\\ &=  \mathbb{P}\big(\path_{1} \in T^{\star}(\Theta)|\Theta^{(V)} =\theta^{(V)}\big).
\end{align*}
\end{proof}

\subsubsection{Case 2: $\path_{2}$}

\begin{proof}[Proof of Lemma \ref{criterion_residual_2}] 

Let $(j,r) \in \bestcut_{\path_1}$.
\begin{align*}
& \sqrt{a_{n}}\big(L_{a_{n}}\big(\hat{H}_{n}(\path_{1}),\hat{q}_{n,r}^{(j)}\big) - L_{a_{n}}\big(H^{\star}(\path_{1}),q_{r}^{\star(j)}\big)\big)\\
& =\sqrt{a_{n}}\big[L_{a_{n}}\big(H^{\star}(\path_{1}),\hat{q}_{n,r}^{(j)}\big)-L_{a_{n}}\big(H^{\star}(\path_{1}),q_{r}^{\star(j)}\big)\big]\\
& \qquad +\sqrt{a_{n}}\big[L_{a_{n}}\big(\hat{H}_{n}(\path_{1}),\hat{q}_{n,r}^{(j)}\big)-L_{a_{n}}\big(H^{\star}(\path_{1}),\hat{q}_{n,r}^{(j)}\big)\big].
\end{align*}
Since $(j,r) \in \bestcut_{\path_1}$, $\mathbb{P}\big(\bX \in H^{\star}(\path_{1}) | X^{(j)} < q_{r}^{\star(j)}\big)>0$
and $\mathbb{P}\big(\bX \in H^{\star}(\path_{1}) | X^{(j)} \geq q_{r}^{\star(j)}\big)>0$. Then, we can directly apply Lemma \ref{criterion_residual_1} to the first term of this decomposition, which shows that, in probability
\begin{align*}
 \lim\limits_{n \to \infty} \sqrt{a_{n}}\big(L_{a_{n}}\big(H^{\star}(\path_{1}),\hat{q}_{n,r}^{(j)}\big)-L_{a_{n}}\big(H^{\star}(\path_{1}),q_{r}^{\star(j)}\big)\big) = 0.
\end{align*}

We expand the second term
\begin{align*}
\sqrt{a_{n}}\big(L_{a_{n}}&\big(\hat{H}_{n}(\path_{1}),\hat{q}_{n,r}^{(j)}\big) - L_{a_{n}}\big(H^{\star}(\path_{1}),\hat{q}_{n,r}^{(j)}\big)\big) \\
&=\frac{\sqrt{a_{n}}}{N_{n}(\hat{H}_{n}(\path_{1}))}\sum_{i=1}^{a_{n}}\big(Y_{i}-\overline{Y}_{\hat{H}_{n}(\path_{1})}\big)^{2}\mathds{1}_{\bX_{i}\in\hat{H}_{n}(\path_{1})}\\
&- \frac{\sqrt{a_{n}}}{N_{n}(H^{\star}(\path_{1}))}\sum_{i=1}^{a_{n}}\big(Y_{i}-\overline{Y}_{H^{\star}(\path_{1})}\big)^{2}\mathds{1}_{\bX_{i}\in H^{\star}(\path_{1})}\\
&-\frac{\sqrt{a_{n}}}{N_{n}(\hat{H}_{n}(\path_{1}))}\sum_{i=1}^{a_{n}}\big(Y_{i}-\overline{Y}_{\hat{H}_{L}}\mathds{1}_{X_{i}^{(j)}<\hat{q}_{n,r}^{(j)}}-\overline{Y}_{\hat{H}_{R}}\mathds{1}_{X_{i}^{(j)}\geq\hat{q}_{n,r}^{(j)}}\big)^{2}\mathds{1}_{\bX_{i}\in\hat{H}_{n}(\path_{1})}\\
&+ \frac{\sqrt{a_{n}}}{N_{n}(H^{\star}(\path_{1}))}\sum_{i=1}^{a_{n}}\big(Y_{i}-\overline{Y}_{H_{L}^{\star}}\mathds{1}_{X_{i}^{(j)}<\hat{q}_{n,r}^{(j)}}-\overline{Y}_{H_{R}^{\star}}\mathds{1}_{X_{i}^{(j)}\geq\hat{q}_{n,r}^{(j)}}\big)^{2}\mathds{1}_{\bX_{i}\in H^{\star}(\path_{1})}
\end{align*}
with $\hat{H}_{L}=\big\{ \bx\in\hat{H}_{n}(\path_{1}):x^{(j)}<\hat{q}_{n,r}^{(j)}\big\}$, $H_{L}^{\star}=\big\{ \bx\in H^{\star}(\path_{1}):x^{(j)}<\hat{q}_{n,r}^{(j)}\big\} $, and for all $H \subseteq \R^{p}$
\begin{align*}
N_{n}(H) = \frac{1}{a_{n}} \sum_{i=1}^{a_{n}}\mathds{1}_{\bX_{i} \in H}, \quad \overline{Y}_{H} = \frac{1}{N_{n}(H)} \sum_{i=1}^{a_{n}} Y_{i}\mathds{1}_{\bX_{i} \in H}.
\end{align*} 
We define symmetrically $\hat{H}_{R}$ and $H_{R}^{\star}$. We obtain
\begin{align*}
 \sqrt{a_{n}}&\big(L_{a_{n}}\big(\hat{H}_{n}(\path_{1}),\hat{q}_{n,r}^{(j)}\big) - L_{a_{n}}\big(H^{\star}(\path_{1}),\hat{q}_{n,r}^{(j)}\big)\big) = \Delta_{n,1} + \Delta_{n,2} + \Delta_{n,3},
\end{align*}
where 
\begin{align*}
\Delta_{n,1} = \sqrt{a_{n}}\big(\overline{Y}_{H^{\star}(\path_{1})}^{2}-\overline{Y}_{\hat{H}_{n}(\path_{1})}^{2}\big),
\end{align*}
\begin{align*}
\Delta_{n,2} = \sqrt{a_{n}}\frac{\overline{Y}_{\hat{H}_{L}}^{2}N_{n}(\hat{H}_{L})N_{n}(H^{\star}(\path_{1}))-\overline{Y}_{H_{L}^{\star}}^{2}N_{n}(H_{L}^{\star})N_{n}(\hat{H}_{n}(\path_{1}))}{N_{n}(\hat{H}_{n}(\path_{1}))N_{n}(H^{\star}(\path_{1}))},
\end{align*} and 
\begin{align*}
\Delta_{n,3} = \sqrt{a_{n}}\frac{\overline{Y}_{\hat{H}_{R}}^{2}N_{n}(\hat{H}_{R})N_{n}(H^{\star}(\path_{1}))-\overline{Y}_{H_{R}^{\star}}^{2}N_{n}(H_{R}^{\star})N_{n}(\hat{H}_{n}(\path_{1}))}{N_{n}(\hat{H}_{n}(\path_{1}))N_{n}(H^{\star}(\path_{1}))}.
\end{align*}
We first consider $\Delta_{n,1}$. Simple calculations show that 
\begin{align*}
\Delta_{n,1}  =& \frac{\sqrt{a_{n}}}{N_{n}(H^{\star}(\path_{1}))^{2}N_{n}(\hat{H}_{n}(\path_{1}))^{2}} \\
&\times \sum_{i,k,l,m} Y_{i}Y_{k}\big[\mathds{1}_{\bX_{i}\in H^{\star}(\path_{1}),\bX_{k}\in H^{\star}(\path_{1}),\bX_{l}\in\hat{H}_{n}(\path_{1}),\bX_{m}\in\hat{H}_{n}(\path_{1})}\\
& \qquad -\mathds{1}_{\bX_{i}\in\hat{H}_{n}(\path_{1}),\bX_{k}\in\hat{H}_{n}(\path_{1}),\bX_{l}\in H^{\star}(\path_{1}),\bX_{m}\in H^{\star}(\path_{1})}\big]
\end{align*}
We consider the case $s_{1}=L$, ($s_{1}=R$ is similar). Since  $Y_{i}\in\big\{ 0,1\big\}$,
\begin{align*}
|\Delta_{n,1}| \leq&\frac{\sqrt{a_{n}}}{N_{n}(H^{\star}(\path_{1}))^{2}N_{n}(\hat{H}_{n}(\path_{1}))^{2}} \\
& \times \sum_{i,k,l,m}\big|\mathds{1}_{X_{i}^{(j_{1})}<q_{r_{1}}^{\star(j_{1})},X^{(j_{1})}_{k}<q_{r_{1}}^{\star(j_{1})},X^{(j_{1})}_{l}<\hat{q}_{n,r_{1}}^{(j_{1})},X^{(j_{1})}_{m}<\hat{q}_{n,r_{1}}^{(j_{1})}}\\
& \qquad -\mathds{1}_{X^{(j_{1})}_{i}<\hat{q}_{n,r_{1}}^{(j_{1})},X^{(j_{1})}_{k}<\hat{q}_{n,r_{1}}^{(j_{1})},X^{(j_{1})}_{l}<q_{r_{1}}^{\star(j_{1})},X^{(j_{1})}_{m}<q_{r_{1}}^{\star(j_{1})}}\big|
\end{align*}

As in the proof of Lemma  \ref{criterion_residual_1}, according to Lemma \ref{convergence_quantile},  $\lim\limits_{n \to \infty} \Delta_{n,1} = 0$, in probability. Since $\Delta_{n,2}$ and $\Delta_{n,3}$ are the same quantities computed on each of the two daughter nodes, we study $\Delta_{n,2}$ only. 
\begin{align*}
\Delta_{n,2} & =\frac{\sqrt{a_{n}} (N_{n}(\hat{H}_{L})N_{n}(H_{L}^{\star}))^{-1}}{N_{n}(\hat{H}_{n}(\path_{1}))N_{n}(H^{\star}(\path_{1}))} \\
& \times \sum_{i,k,l,m}Y_{i}Y_{k}\big[\mathds{1}_{\bX_{i}\in\hat{H}_{L},\bX_{k}\in\hat{H}_{L},\bX_{l}\in H_{L}^{\star},\bX_{m}\in H^{\star}(\path_{1})}\\
& \qquad \quad -\mathds{1}_{\bX_{i}\in H_{L}^{\star},\bX_{k}\in H_{L}^{\star},\bX_{l}\in\hat{H}_{L},\bX_{m}\in\hat{H}_{n}(\path_{1})}\big] \\
& =\frac{\sqrt{a_{n}} (N_{n}(\hat{H}_{L})N_{n}(H_{L}^{\star}))^{-1}}{N_{n}(\hat{H}_{n}(\path_{1}))N_{n}(H^{\star}(\path_{1}))} 
\sum_{i,k,l,m}Y_{i}Y_{k}
\mathds{1}_{X^{(j)}_{i}<\hat{q}_{n,r}^{(j)},X^{(j)}_{k}<\hat{q}_{n,r}^{(j)},X^{(j)}_{l}<\hat{q}_{n,r}^{(j)}}\\
& \quad \times \big[\mathds{1}_{X^{(j_{1})}_{i}<\hat{q}_{n,r_{1}}^{(j_{1})},X^{(j_{1})}_{k}<\hat{q}_{n,r_{1}}^{(j_{1})},X^{(j_{1})}_{l}<q_{r_{1}}^{\star(j_{1})},X^{(j_{1})}_{m}<q_{r_{1}}^{\star(j_{1})}} \\
& \qquad -\mathds{1}_{X^{(j_{1})}_{i}<q_{r_{1}}^{\star(j_{1})},X^{(j_{1})}_{k}<q_{r_{1}}^{\star(j_{1})},X^{(j_{1})}_{l}<\hat{q}_{n,r_{1}}^{(j_{1})},X^{(j_{1})}_{m}<\hat{q}_{n,r_{1}}^{(j_{1})}}\big].
\end{align*}
Therefore
\begin{align*}
|\Delta_{n,2}| & \leq \frac{\sqrt{a_{n}} (N_{n}(\hat{H}_{L})N_{n}(H_{L}^{\star}))^{-1}}{N_{n}(\hat{H}_{n}(\path_{1}))N_{n}(H^{\star}(\path_{1}))} \\
& \times \sum_{i,k,l,m}\big|\mathds{1}_{X^{(j_{1})}_{i}<\hat{q}_{n,r_{1}}^{(j_{1})},X^{(j_{1})}_{k}<\hat{q}_{n,r_{1}}^{(j_{1})},X^{(j_{1})}_{l}<q_{r_{1}}^{\star(j_{1})},X^{(j_{1})}_{m}<q_{r_{1}}^{\star(j_{1})}}\\
& \qquad -\mathds{1}_{X^{(j_{1})}_{i}<q_{r_{1}}^{\star(j_{1})},X^{(j_{1})}_{k}<q_{r_{1}}^{\star(j_{1})},X^{(j_{1})}_{l}<\hat{q}_{n,r_{1}}^{(j_{1})},X^{(j_{1})}_{m}<\hat{q}_{n,r_{1}}^{(j_{1})}}\big|.
\end{align*}
As in the proof of Lemma \ref{criterion_residual_1}, according to Lemma \ref{convergence_quantile}, $\lim\limits_{n \to \infty} \Delta_{n,2} = 0$, in probability, which concludes the proof, since $\Delta_{n,3}$ can be studied in the same manner.  
\end{proof}

\begin{proof}[Proof of Lemma \ref{criterion_consistency_2}]
Let $(j,r) \in \bestcut_{\path_1}$.
\begin{align}
L_{a_{n}}\big(\hat{H}_{n}(\path_{1}),\hat{q}_{n,r}^{(j)}\big)
=&L_{a_{n}}\big(H^{\star}(\path_{1}),q_{r}^{\star(j)}\big)\nonumber \\
& +\big[L_{a_{n}}\big(\hat{H}_{n}(\path_{1}),\hat{q}_{n,r}^{(j)}\big)
-L_{a_{n}}\big(H^{\star}(\path_{1}),q_{r}^{\star(j)}\big)\big] \label{proof_eq_decomp_path2}
\end{align}
According to Lemma \ref{criterion_residual_2}, the second term in equation (\ref{proof_eq_decomp_path2}) converges to $0$ in probability. From the law of large numbers, in probability, 
\begin{align*}
\lim \limits_{n \to \infty} L_{a_{n}}\big(H^{\star}(\path_{1}),q_{r}^{\star(j)}\big)
=L^{\star}\big(H^{\star}(\path_{1}),q_{r}^{\star(j)}\big),
\end{align*}
which concludes the proof. 
\end{proof}

\begin{proof}[Proof of Lemma \ref{criterion_distribution_2}]
Similar to the case with $\path_1$ (Lemma \ref{criterion_consistency_1}), where Lemma \ref{criterion_residual_2} is used instead of Lemma \ref{criterion_residual_1}.
\end{proof}

\begin{proof}[Proof of Lemma \ref{convergence_pn_theta_2}]

Consider a path $\path_2 = \{(j_1, r_1, L), (j_2, r_2, \cdot)\}$. Set $\theta^{(V)} = \big(\theta_1^{(V)}, \theta_2^{(V)}\big)$, a realization of  the randomization of the split directions at the root node and its left child node. Then, $\theta_1^{(V)}$ and $\theta_2^{(V)}$  denote the set of eligible variables for 
respectively the first and second split. We also consider $\bestcut_{\path_1}\big(\theta_2^{(V)}\big) \subset \bestcut_{\path_1}$ the set of eligible second splits.

Recalling that the best split in a random tree is the one maximizing the CART-splitting criterion, conditional on $\Theta^{(V)}=\theta^{(V)}$,
\begin{align*}
\left\lbrace\path_{2} \in T(\Theta,\mathscr{D}_{n})\right\rbrace 
 = \underset{\substack{(j, r) \in \theta_{1}^{(V)}  \times \{1, ..., q - 1\} \\ \qquad \quad \setminus (j_1, r_1)}}
{\bigcap}\big\{ L_{a_{n}}\big(\mathds{R}^{p},\hat{q}_{n,r_{1}}^{(j_{1})}\big)>L_{a_{n}}\big(\mathds{R}^{p},\hat{q}_{n,r}^{(j)}\big)\big\} \\
 \qquad \underset{(j, r) \in \bestcut_{\path_1}(\theta_2^{(V)}) \setminus (j_2, r_2)}
{\bigcap}\big\{ L_{a_{n}}\big(\hat{H}_{n}(\path_{1}),\hat{q}_{n,r_{2}}^{(j_{2})}\big)
>L_{a_{n}}\big(\hat{H}_{n}(\path_{1}),\hat{q}_{n,r}^{(j)}\big)\big\}
\end{align*}
Recall that $\bestcut_{1}^{\star}\big(\theta_{1}^{(V)}\big) = \underset{ (j,r) \in\theta_{1}^{(V)} \times \{1, ..., q - 1\}}{\textrm{argmax}} L^{\star}\big(\mathbb{R}^{p},q_{r}^{\star(j)}\big) $, and similarly
\begin{align*}
\bestcut_{2}^{\star}\big(\theta_{2}^{(V)}\big) = \underset{ (j,r) \in \bestcut_{\path_1}(\theta_2^{(V)})}{\textrm{argmax}}
 L^{\star}\big(H^{\star}(\path_{1}),q_{r}^{\star(j)}\big).    
\end{align*}

Obviously if $(j_1,r_1) \notin \theta_{1}^{(V)} \times \{1, ..., q - 1\}$ or 
$(j_2,r_2) \notin \bestcut_{\path_1}(\theta_2^{(V)}) $, the probability to select $\path_2$ in the empirical and theoretical tree is null.
In the sequel, we assume that $(j_1,r_1) \in \theta_{0}^{(V)} \times \{1, ..., q - 1\}$ and 
$(j_2,r_2) \in \bestcut_{\path_1}\big(\theta_2^{(V)}\big)$ and distinguish between cases,
 depending on whether $(j_1, r_1) \in \bestcut_{1}^{\star}\big(\theta_{1}^{(V)}\big)$
 or not and $(j_2, r_2) \in \bestcut_2^{\star}\big(\theta_2^{(V)}\big)$ or not, as well as the cardinality of $\bestcut_1^{\star}\big(\theta_1^{(V)}\big)$ and
$\bestcut_2^{\star}\big(\theta_2^{(V)}\big)$, and whether the maximum of the theoretical CART-splitting criterion is null or not.

\paragraph{Case 1} We assume that $(j_1, r_1) \notin \bestcut_{1}^{\star}\big(\theta_{1}^{(V)}\big)$. Hence, the theoretical decision tree satisfies 
$$\mathbb{P}\big(\path_{2} \in T^{\star}(\Theta)|\Theta^{(V)}=\theta^{(V)}\big) = \mathbb{P}\big(\path_{1} \in T^{\star}(\Theta)|\Theta^{(V)}=\theta^{(V)}\big) = 0.$$
According to Lemma \ref{convergence_pn_theta_1}, we have
\begin{align*}
\lim \limits_{n \to \infty} \mathbb{P}&\big(\path_{2} \in T(\Theta,\mathscr{D}_{n})|\Theta^{(V)}=\theta^{(V)}\big)\\
& \leq \lim \limits_{n \to \infty} \mathbb{P}\big(\path_{1} \in T(\Theta,\mathscr{D}_{n})|\Theta^{(V)}=\theta^{(V)}\big)\\
& = 0\\
& = \mathbb{P}\big(\path_{2} \in T^{\star}(\Theta)|\Theta^{(V)}=\theta^{(V)}\big).    
\end{align*}

\paragraph{Case 2} We assume that $\big(j_{2}, r_{2}\big) \notin \bestcut_{2}^{\star}\big(\theta_{2}^{(V)}\big)$. Again, for the theoretical decision tree, $$\mathbb{P}\big(\path_{2} \in T^{\star}(\Theta)|\Theta^{(V)}=\theta^{(V)}\big) = 0.$$
Letting $\big(j^{\star}, r^{\star} \big) \in \bestcut_{2}^{\star}\big(\theta_{2}^{(V)}\big)$, 
\begin{align*}
\varepsilon
=L^{\star}\big(H^{\star}(\path_{1}),q_{r^{\star}}^{\star(j^{\star})}\big) - L^{\star}\big(H^{\star}(\path_{1}),q_{r_{2}}^{\star (j_{2})}\big).
\end{align*}
Therefore, 
\begin{align*}
 \mathbb{P}\big(\path_{2}& \in T(\Theta,\mathscr{D}_{n})|\Theta^{(V)}=\theta^{(V)}\big) \\
& \leq \mathbb{P}\big(L_{a_{n}}\big(\hat{H}_{n}(\path_{1}),\hat{q}_{n,r_{2}}^{(j_{2})}\big)
>L_{a_{n}}\big(H^{\star}(\path_{1}),\hat{q}_{n,r^{\star}}^{(j^{\star})}\big)\big) \\
& \leq \mathbb{P}\big(L_{a_{n}}\big(\hat{H}_{n}(\path_{1}),\hat{q}_{n,r_{2}}^{(j_{2})}\big)
- L^{\star}\big(H^{\star}(\path_{1}),q_{r_{2}}^{\star (j_{2})}\big) - \epsilon  \\ & \qquad 
>L_{a_{n}}\big(\hat{H}_{n}(\path_{1}),\hat{q}_{n,r^{\star}}^{(j^{\star})}\big)
- L^{\star}\big(H^{\star}(\path_{1}),q_{r^{\star}}^{\star(j^{\star})}\big)\big) \\
& \leq \mathbb{P}\big( L_{a_{n}}\big(\hat{H}_{n}(\path_{1}),\hat{q}_{n,r_{2}}^{(j_{2})}\big)
- L^{\star}\big(H^{\star}(\path_{1}),q_{r_{2}}^{\star (j_{2})}\big)  \\ 
& \qquad -  \big( L_{a_{n}}\big(\hat{H}_{n}(\path_{1}),\hat{q}_{n,r^{\star}}^{(j^{\star})}\big)
- L^{\star}\big(H^{\star}(\path_{1}),q_{r^{\star}}^{\star(j^{\star})}\big) \big)
> \epsilon \big).
\end{align*}
Consequently, according to Lemma \ref{criterion_consistency_2}, 
\begin{align*}
\lim \limits_{n \to \infty} \mathbb{P}\big(\path_{2} \in T(\Theta,\mathscr{D}_{n})|\Theta^{(V)}=\theta^{(V)}\big) = 0
= \mathbb{P}\big(\path_{2} \in T^{\star}(\Theta)|\Theta^{(V)}=\theta^{(V)}\big).
\end{align*}

\paragraph{Case 3} We assume that
$(j_1, r_1) \in \bestcut_{1}^{\star}\big(\theta_{1}^{(V)}\big)$
and $\bestcut_{2}^{\star}\big(\theta_{2}^{(V)}\big) = \{ (j_{2}, r_{2}) \}$,
i.e. $(j_{2}, r_{2})$ is the unique maximum of the theoretical CART-splitting criterion for the cell $H^{\star}(\path_{1})$. By definition of the theoretical decision tree,
\begin{align*}  
\mathbb{P}\big(\path_{2} \in T^{\star}(\Theta)|\Theta^{(V)}=\theta^{(V)}\big)
= \mathbb{P}\big(\path_{1} \in T^{\star}(\Theta)|\Theta^{(V)}=\theta^{(V)}\big)
\end{align*}
Conditional on $\{\Theta^{(V)}=\theta^{(V)}\}$, 
\begin{align}
\{ \path_{2}& \in T(\Theta,\mathscr{D}_{n}) \} 
=  \big\{ \path_{1} \in T(\Theta,\mathscr{D}_{n}) \big\} \nonumber \\
& \qquad 
\underset{(j, r) \in \bestcut_{\path_1}(\theta_2^{(V)})  \setminus (j_2, r_2)}
{\bigcap}\big\{ L_{a_{n}}\big(\hat{H}_{n}(\path_{1}),\hat{q}_{n,r_{2}}^{(j_{2})}\big)
>L_{a_{n}}\big(\hat{H}_{n}(\path_{1}),\hat{q}_{n,r}^{(j)}\big)\big\}. \label{proof_twosplits_prob_ineq3}
\end{align}
Consequently, 
\begin{align}
& \mathbb{P}\big(\path_{2} \in T(\Theta,\mathscr{D}_{n})|\Theta^{(V)}=\theta^{(V)}\big) \nonumber \\
& \geq \mathbb{P} \big( \path_{1} \in T(\Theta,\mathscr{D}_{n})\big |\Theta^{(V)}=\theta^{(V)} \big) \nonumber \\
& - \sum_{(j, r) \in \bestcut_{\path_1}(\theta_2^{(V)})  \setminus (j_2, r_2)}
\mathbb{P}\big( L_{a_{n}}\big(\hat{H}_{n}(\path_{1}),\hat{q}_{n,r_{2}}^{(j_{2})}\big)
\leq L_{a_{n}}\big(\hat{H}_{n}(\path_{1}),\hat{q}_{n,r}^{(j)}\big) \big). \label{proof_twosplits_prob_ineq}
\end{align}
For $(j, r) \in \bestcut_{\path_1}\big(\theta_2^{(V)}\big)  \setminus (j_2, r_2)$,
\begin{align}
L^{\star}\big(H^{\star}(\path_{1}),q_{r_{2}}^{\star(j_{2})}\big)
-  L^{\star}\big(H^{\star}(\path_{1}),q_{r}^{\star(j)}\big)> 0. \label{proof_twosplits_prob_ineq2}
\end{align}
Thus, using inequalities (\ref{proof_twosplits_prob_ineq}) and (\ref{proof_twosplits_prob_ineq2}), and according to 
Lemma \ref{criterion_consistency_2}, 
\begin{align*}
\lim \limits_{n \to \infty} \mathbb{P}\big( L_{a_{n}}\big(\hat{H}_{n}(\path_{1}),\hat{q}_{n,r_{2}}^{(j_{2})}\big) 
\leq L_{a_{n}}\big(\hat{H}_{n}(\path_{1}),\hat{q}_{n,r}^{(j)}\big) \big) = 0,
\end{align*}
and thus, using (\ref{proof_twosplits_prob_ineq3}) and (\ref{proof_twosplits_prob_ineq}), 
\begin{align*}
\lim \limits_{n \to \infty}& \mathbb{P}\big(\path_{2} \in T(\Theta,\mathscr{D}_{n})|\Theta^{(V)}=\theta^{(V)}\big) \\
& = \lim \limits_{n \to \infty} \mathbb{P} \big( \path_{1} \in T(\Theta,\mathscr{D}_{n}) |\Theta^{(V)}=\theta^{(V)} \big)
 = \mathbb{P} \big( \path_{1} \in T^{\star}(\Theta)|\Theta^{(V)}=\theta^{(V)} \big) \\
& = \mathbb{P}\big(\path_{2} \in T^{\star}(\Theta)|\Theta^{(V)}=\theta^{(V)}\big),
\end{align*}
where the second inequality is a direct consequence of Lemma \ref{convergence_pn_theta_1}.

\paragraph{Case 4} For the first split, we assume
$(j_1, r_1) \in \bestcut_{1}^{\star}\big(\theta_{1}^{(V)}\big)$ with $L^{\star}\big(\R^p, q^{\star(j_1)}_{r_1}\big) > 0$, and for the second split,
$\big(j_{2}, r_{2}\big) \in \bestcut_{2}^{\star}\big(\theta_{2}^{(V)}\big)$ with $| \bestcut_{2}^{\star}\big(\theta_{2}^{(V)}\big) \big| > 1$
and $L^{\star}\big(H^{\star}(\path_1), q^{\star(j_2)}_{r_2}\big) > 0$.

On one hand, conditional on the event $\{ \Theta^{(V)} = \theta^{(V)} \}$,
\begin{align}
\left\lbrace \path_{2} \in T(\Theta,\mathscr{D}_{n}) \right\rbrace =
 \underset{\substack{(j, r) \in \theta_1^{(V)} \times \{1, ..., q - 1\} \\ \qquad \quad\setminus (j_1, r_1)}}
 {\bigcap}\big\{ L_{a_{n}}\big(\mathds{R}^{p},\hat{q}_{n,r_{1}}^{(j_{1})}\big)
>L_{a_{n}}\big(\mathds{R}^{p},\hat{q}_{n,r}^{(j)}\big)\big\} \nonumber \\
 \qquad \underset{(j, r) \in \bestcut_{\path_1}(\theta_2^{(V)}) \setminus (j_2, r_2)}
{\bigcap}\big\{ L_{a_{n}}\big(\hat{H}_{n}(\path_{1}),\hat{q}_{n,r_{2}}^{(j_{2})}\big)
>L_{a_{n}}\big(\hat{H}_{n}(\path_{1}),\hat{q}_{n,r}^{(j)}\big)\big\}. \label{proof_thm_twosplits_case4_eq1}
\end{align}
Using equation (\ref{proof_thm_twosplits_case4_eq1}) to find a subset and a superset of $\left\lbrace \path_{2} \in T(\Theta,\mathscr{D}_{n}) \right\rbrace$, we obtain
\begin{align*}
0 & \geq  \mathbb{P}\big(\path_{2} \in T(\Theta,\mathscr{D}_{n})|\Theta^{(V)}=\theta^{(V)}\big) \\
& \enskip  - \mathbb{P}\Bigg(\underset{(j, r) \in \bestcut_{1}^{\star}(\theta_{1}^{(V)}) \setminus (j_1, r_1)}
{\bigcap}\big\{ L_{a_{n}}\big(\mathds{R}^{p},\hat{q}_{n,r_{1}}^{(j_{1})}\big)
>L_{a_{n}}\big(\mathds{R}^{p},\hat{q}_{n,r}^{(j)}\big)\big\}  \\ 
& \qquad \underset{(j, r) \in \bestcut_{2}^{\star}(\theta_{2}^{(V)}) \setminus (j_2, r_2)}
{\bigcap}\big\{ L_{a_{n}}\big(\hat{H}_{n}(\path_{1}),\hat{q}_{n,r_{2}}^{(j_{2})}\big)
>L_{a_{n}}\big(\hat{H}_{n}(\path_{1}),\hat{q}_{n,r}^{(j)}\big)\big\} \Bigg)   \\
& \geq \sum_{(j,r) \in\theta_{1}^{(V)} \times \{1, ..., q - 1\} \setminus \bestcut_{1}^{\star}{(\theta_{1}^{(V)})}}
\mathbb{P}\big( L_{a_{n}}\big(\mathds{R}^{p},\hat{q}_{n,r_{1}}^{(j_{1})}\big)
\leq L_{a_{n}}\big(\mathds{R}^{p},\hat{q}_{n,r}^{(j)}\big) \big) \\
& \enskip + \sum_{(j,r) \in\theta_{2}^{(V)} \times \{1, ..., q - 1\} \setminus \bestcut_{2}^{\star}{(\theta_{2}^{(V)})}}
\mathbb{P}\big( L_{a_{n}}\big(\hat{H}_{n}(\path_{1}),\hat{q}_{n,r_{2}}^{(j_{2})}\big)
\leq L_{a_{n}}\big(\hat{H}_{n}(\path_{1}),\hat{q}_{n,r}^{(j)}\big) \big)
\end{align*}
We  proved in \textbf{Case 3} that the limit of the last two terms of the previous inequality is zero, in probability. Therefore, 
\begin{align}
& \lim \limits_{n \to \infty} \mathbb{P}\big(\path_{2} \in T(\Theta,\mathscr{D}_{n})|\Theta^{(V)}=\theta^{(V)}\big) \nonumber \\
 & = \lim \limits_{n \to \infty} \mathbb{P}\Bigg(\underset{(j, r) \in \bestcut_{1}^{\star}(\theta_{1}^{(V)}) \setminus (j_1, r_1)}
 {\bigcap}\big\{ L_{a_{n}}\big(\mathds{R}^{p},\hat{q}_{n,r_{1}}^{(j_{1})}\big)
>L_{a_{n}}\big(\mathds{R}^{p},\hat{q}_{n,r}^{(j)}\big)\big\}  \nonumber \\ 
& \qquad \underset{(j, r) \in \bestcut_{2}^{\star}(\theta_{2}^{(V)}) \setminus (j_2, r_2)}
{\bigcap}\big\{ L_{a_{n}}\big(\hat{H}_{n}(\path_{1}),\hat{q}_{n,r_{2}}^{(j_{2})}\big)
>L_{a_{n}}\big(\hat{H}_{n}(\path_{1}),\hat{q}_{n,r}^{(j)}\big)\big\} \Bigg). \label{proof_twosplits_eq2}
\end{align}

We define the random vector $\textbf{L}_{n,\path_2}^{(\bestcut_1^{\star}, \bestcut_2^{\star})}$ 
(we drop $\theta^{(V)}$ to lighten notations)
where each component is the difference between the empirical CART-splitting criterion for the splits $(j, r) \in \bestcut_1^{\star} \setminus (j_1, r_1)$ and $(j_1, r_1)$
 for the first $|\bestcut_1^{\star}| - 1$ components, and for the splits $(j, r) \in \bestcut_2^{\star} \setminus (j_2, r_2)$ and $(j_2, r_2)$ for the remaining $|\bestcut_2^{\star}| - 1$ components,  i.e.,
\begin{align*}
\textbf{L}_{n,\path_2}^{(\bestcut_1^{\star}, \bestcut_2^{\star})} = \left(\begin{array}{c}
\big[L_{a_{n}}\big(\mathbb{R}^{p},\hat{q}_{n,r}^{(j)}\big)-L_{a_{n}}\big(\mathbb{R}^{p},\hat{q}_{n,r_1}^{(j_1)}\big)\big]_{(j, r) \in \bestcut_1^{\star} \setminus (j_1, r_1)} \\
\big[L_{a_{n}}\big(\hat{H}_n(\path_{1}),\hat{q}_{n,r}^{(j)}\big)-L_{a_{n}}\big(\hat{H}_n(\path_{1}),\hat{q}_{n,r_2}^{(j_2)}\big)\big]_{(j, r) \in \bestcut_2^{\star} \setminus (j_2, r_2)} \\
\end{array}\right).
\end{align*}

Then, we can write
\begin{align}
& \mathbb{P}\Bigg(\underset{(j, r) \in \bestcut_{1}^{\star}(\theta_{1}^{(V)}) \setminus (j_1, r_1)}
{\bigcap}\big\{ L_{a_{n}}\big(\mathds{R}^{p},\hat{q}_{n,r_{1}}^{(j_{1})}\big)
>L_{a_{n}}\big(\mathds{R}^{p},\hat{q}_{n,r}^{(j)}\big)\big\} \nonumber \\ 
& \qquad 
\underset{(j, r) \in \bestcut_{2}^{\star}(\theta_{2}^{(V)}) \setminus (j_2, r_2)}{\bigcap}\big\{ L_{a_{n}}\big(\hat{H}_{n}(\path_{1}),\hat{q}_{n,r_{2}}^{(j_{2})}\big)
>L_{a_{n}}\big(\hat{H}_{n}(\path_{1}),\hat{q}_{n,r}^{(j)}\big)\big\} \Bigg) \nonumber\\
&= \mathbb{P} \big( \textbf{L}_{n,2}^{(\bestcut_{1}^{\star}, \bestcut_{2}^{\star})} < \textbf{0} \big) \label{proof_twosplits_eq3}
\end{align}

According to Lemma \ref{criterion_distribution_2},
\begin{align*}
\sqrt{a_{n}} \textbf{L}_{n,\path_2}^{(\bestcut_1^{\star}, \bestcut_2^{\star})} \stackrel[n\rightarrow\infty]{\mathscr{D}}{\longrightarrow}\mathcal{N}(0, \Sigma)
\end{align*}
where for $l, l' \in \{1,2\}$, for all $(j,r) \in \bestcut_l^{\star} \setminus (j_l,r_l)$,
$(j', r') \in \bestcut_{l'}^{\star} \setminus (j_{l'}, r_{l'})$, each element of the covariance matrix $\Sigma$ is defined by
$ \Sigma_{(j,r,l), (j',r',l')} = \textrm{Cov}[Z_{j,r,l}, Z_{j',r',l'}],$ with
\begin{align*}
Z_{j,r,l}= & \frac{1}{\P(\bX \in H_l)} \big(Y - \mu_{L,r_l}^{(j_l)}\mathds{1}_{X^{(j_{l})}<q_{r_{l}}^{\star(j_{l})}}
-\mu_{R,r_l}^{(j_l)}\mathds{1}_{X^{(j_{l})}\geq q_{r_{l}}^{\star(j_{l})}}\big)^{2} \mathds{1}_{\bX \in H_l}  \\
&- \frac{1}{\P(\bX \in H_l)} \big(Y - \mu_{L,r}^{(j)}\mathds{1}_{X^{(j)}<q_{r}^{\star(j)}}
-\mu_{R,r}^{(j)}\mathds{1}_{X^{(j)}\geq q_{r}^{\star(j)}}\big)^{2} \mathds{1}_{\bX \in H_l},
\end{align*}
$\mu_{L,r}^{(j)} = \E\big[Y | X^{(j)}<q_{r}^{\star(j)}, \bX \in H_l\big]$, 
$\mu_{R,r}^{(j)} = \E\big[Y | X^{(j)} \geq q_{r}^{\star(j)}, \bX \in H_l\big]$, and the variance of $Z_{j,r,l}$ is strictly positive.

Letting $\Phi_{\path_{1}, \theta^{(V)}, (j_{2}, r_{2})}$ be the c.d.f. of the multivariate normal distribution with covariance matrix $\Sigma$,
and using equalities (\ref{proof_twosplits_eq2}) and (\ref{proof_twosplits_eq3}),
\begin{align*}
\lim \limits_{n \to \infty} \mathbb{P}\big(\path_{2} \in T(\Theta,\mathscr{D}_{n})|\Theta^{(V)}
=\theta^{(V)}\big) = \Phi_{\path_{1}, \theta^{(V)}, (j_{2}, r_{2})} (\textbf{0}).
\end{align*}
We can check that 
\begin{align*}
\sum_{(j,r) \in \bestcut_2^{\star}(\theta^{(V)})} 
 \Phi_{\path_{1}, \theta^{(V)}, (j, r)} (\textbf{0})
=\mathbb{P}\big(\path_{1} \in T^{\star}(\Theta)|\Theta^{(V)}=\theta^{(V)}\big). 
\end{align*}
In the theoretical random forest, the first cut $(j_1, r_1)$ is randomly selected with probability
$\mathbb{P}\big(\path_{1} \in T^{\star}(\Theta)|\Theta^{(V)}=\theta^{(V)}\big)$ (see the proof of Lemma \ref{convergence_pn_theta_1}).
For the second cut, according to Definition \ref{def_proba_equality_2}, if $\bestcut_{2}^{\star}\big(\theta_{2}^{(V)}\big)$ has multiple elements,
$(j_{2}, r_{2})$ is randomly drawn with probability 
\begin{align*}
\frac{\Phi_{\path_{1}, \theta^{(V)}, (j_{2}, r_{2})} (\textbf{0})}
{\mathbb{P}\big(\path_{1} \in T^{\star}(\Theta)|\Theta^{(V)}=\theta^{(V)}\big) }
\end{align*}
Since the random selection at the root node of the tree and its children nodes are independent in the theoretical algorithm,
$\path_{2}$ is selected with probability
\begin{align*}
\mathbb{P}\big(\path_{1} \in T^{\star}(\Theta)|\Theta^{(V)}=\theta^{(V)}\big)
\times \frac{\Phi_{\path_{1}, \theta^{(V)}, (j_{2}, r_{2})} (\textbf{0})}
{\mathbb{P}\big(\path_{1} \in T^{\star}(\Theta)|\Theta^{(V)}=\theta^{(V)}\big) } \\
= \Phi_{\path_{1}, \theta^{(V)}, (j_{2}, r_{2})} (\textbf{0}).
\end{align*}
Ultimately, 
\begin{align*}
\lim \limits_{n \to \infty} \mathbb{P}\big(\path_{2} \in T(\Theta,\mathscr{D}_{n})|\Theta^{(V)} = \theta^{(V)}\big)
& = \mathbb{P}\big(\path_{2} \in T^{\star}(\Theta)|\Theta^{(V)}=\theta^{(V)}\big) \\
& = \Phi_{\path_{1}, \theta^{(V)}, (j_{2}, r_{2})} \big(\textbf{0}\big).
\end{align*}

\paragraph{Case 5} We assume that $(j_1, r_1) \in \bestcut_{1}^{\star}\big(\theta_{1}^{(V)}\big)$ and
$(j_{2}, r_{2}) \in \bestcut_{2}^{\star}\big(\theta_{2}^{(V)}\big)$, and that the theoretical CART-splitting criterion is null for both splits:
$L^{\star}\big(\R^p, q^{\star(j_1)}_{r_1}\big) = 0$ and  $L^{\star}\big(H^{\star}(\path_1), q^{\star(j_2)}_{r_2}\big) = 0$. 

Consequently $\bestcut_{1}^{\star}\big(\theta_1^{(V)}\big) = \theta_{1}^{(V)} \times \{1, ..., q - 1\}$,
and $\bestcut_{2}^{\star}\big(\theta_2^{(V)}\big) = \bestcut_{\path_1}\big(\theta_2^{(V)}\big)$.
Using the same notations defined in \textbf{Case 4}, we have by definition
\begin{align*}
\mathbb{P}\big(\path_{1} \in T(\Theta,\mathscr{D}_{n})|\Theta^{(V)}=\theta^{(V)}\big) 
= \P \big(\textbf{L}_{n,\path_2}^{(\bestcut_{1}^{\star}, \bestcut_{2}^{\star})} < \mathbf{0}\big).
\end{align*}
According to Lemma \ref{criterion_distribution_2} (case (b)),
\begin{align*}
a_{n} \textbf{L}_{n,\path_2}^{(\bestcut_1^{\star}, \bestcut_2^{\star})} \stackrel[n\rightarrow\infty]{\mathscr{D}}{\longrightarrow} h_{\path_2}(\mathbf{V}),
\end{align*}
where $\mathbf{V}$ is a gaussian vector of covariance matrix $\textrm{Cov}[\mathbf{Z}]$. If $\bestcut_1^{\star}$ and $\bestcut_2^{\star}$ are explicitly written
$\bestcut_1^{\star} = \{(j_k, r_k)\}_{k \in J_1}$, and $\bestcut_2^{\star} = \{(j_k, r_k)\}_{k \in J_2}$, with $J_1 = \{1,...,c_1+1\} \setminus 2$
and $J_2 = \{2\} \cup \{c_1+2,...,c_1+c_2\}$, $\mathbf{Z}$ is defined as, for $l \in \{1,2\}$ and $k \in J_l$,
\begin{align*}
Z_{2k-1} = \frac{1}{\sqrt{p_{L,k}\P(\bX \in H_l)}} (Y - \E[Y|\bX \in H_l]) \mathds{1}_{X^{(j_k)} < q_{r_k}^{\star (j_k)}}\mathds{1}_{\bX \in H_l}, \\
Z_{2k} = \frac{1}{\sqrt{p_{R,k}\P(\bX \in H_l)}} (Y - \E[Y|\bX \in H_l]) \mathds{1}_{X^{(j_k)} \geq q_{r_k}^{\star (j_k)}}\mathds{1}_{\bX \in H_l},
\end{align*}
$p_{L,k} = \P \big(X^{(j_k)}<q_{r_k}^{\star (j_k)}, \bX \in H_l\big)$, $p_{R,k} = \P\big (X^{(j_k)} \geq q_{r_k}^{\star (j_k)}, \bX \in H_l\big)$.
$h_{\path_2}$ is a multivariate quadratic form defined as
\begin{align*}
h_{\path_2}: \left( \begin{array}{c} x_1 \\ \vdots \\ x_{2(c_1 + c_2)}  \end{array} \right) \rightarrow \left( \begin{array}{c}
x_{5}^2 + x_{6}^2 - x_1^2 - x_2^2\\ \vdots \\
x_{2c_1+1}^2 + x_{2c_1+2}^2 - x_{1}^2 - x_{2}^2\\ 
x_{2c_1+3}^2 + x_{2c_1+4}^2 - x_{3}^2 - x_{4}^2\\ \vdots \\
x_{2(c_1+c_2)-1}^2 + x_{2(c_1+c_2)}^2 - x_{3}^2 - x_{4}^2  \end{array}\right),
\end{align*}
and the variance of each component of $h_{\path_2}(\mathbf{V})$ is strictly positive.

$\Phi_{\path_{1}, \theta^{(V)}, (j_{2}, r_{2})}$ is now defined as the cdf of $h_{\path_2}(\mathbf{V})$,
and the end of the proof is identical to \textbf{Case 4}. We conclude
\begin{align*}
\lim \limits_{n \to \infty} \mathbb{P}\big(\path_{2} \in T(\Theta,\mathscr{D}_{n})|\Theta^{(V)} = \theta^{(V)}\big)
& = \mathbb{P}\big(\path_{2} \in T^{\star}(\Theta)|\Theta^{(V)}=\theta^{(V)}\big) \\
& = \Phi_{\path_{1}, \theta^{(V)}, (j_{2}, r_{2})} (\textbf{0}).
\end{align*}

\paragraph{Case 6} We assume $(j_1, r_1) \in \bestcut_{1}^{\star}\big(\theta_{1}^{(V)}\big)$, $(j_2, r_2) \in \bestcut_{2}^{\star}\big(\theta_{2}^{(V)}\big)$ and $\big|\bestcut_{2}^{\star}\big(\theta_{2}^{(V)}\big)\big| > 1$ as in \textbf{Case 4}, but either 
 $L^{\star}\big(\R^p, q^{\star(j_1)}_{r_1}\big) = 0$ and $L^{\star}\big(H^{\star}(\path_1), q^{\star(j_2)}_{r_2}\big) > 0$, or
 $L^{\star}\big(\R^p, q^{\star(j_1)}_{r_1}\big) > 0$ and $L^{\star}\big(H^{\star}(\path_1), q^{\star(j_2)}_{r_2}\big) = 0$.

The same reasoning than for \textbf{Cases 4} and \textbf{5} applies where the limit law of
$\textbf{L}_{n,\path_2}^{(\bestcut_1^{\star}, \bestcut_{2}^{\star})}$ has both gaussian and $\chi$-square components and is given by 
case (c) or case (d) of Lemma \ref{criterion_distribution_2}.

\end{proof}

\section{Proof of Theorem \ref{stability_equivalent}}
\label{sec_stability_equivalent}

We recall Theorem \ref{stability_equivalent} for the sake of clarity.
\begin{theorem}
If $p_{0}\in[0, 1] \setminus \setpemp$ and $\mathscr{D}'_{n} = \mathscr{D}_{n}$, then, conditional on $\mathscr{D}_{n}$, we have
\begin{align} \label{eq_stab_M}
\lim \limits_{M \to \infty} \hat{S}_{M,n,p_{0}} = 1 \quad \textrm{in probability.}
\end{align}
In addition for $p_{0} < \max \setpemp$,
\begin{align*}
1 - & \mathbb{E} [ \hat{S}_{M,n,p_{0}} | \mathscr{D}_{n}] \\ & \underset{M \to \infty}{\sim} \sum_{\path \in \Pi}
\frac{\Phi( M p_{0}, M, p_n(\path)) ( 1 - \Phi\big( M p_{0}, M,
p_n(\path)))}
{\frac{1}{2} \sum_{\path' \in \Pi} \mathds{1}_{p_n(\path') > p_{0}} +
\mathds{1}_{p_n(\path') > p_{0} - \rho_{n}(\path, \path')
\frac{\sigma_{n}( \path')}{\sigma_{n}( \path) }(p_{0} - p_n(\path)) }},
\end{align*}
where $\Phi ( M p_{0}, M, p_n(\path))$ is the cdf of a binomial distribution with parameter 
$p_n(\path)$, $M$ trials, evaluated at $M p_{0}$, and, for all $\path, \path' \in \Pi$, 
\[
\sigma_{n}( \path) = \sqrt{p_n(\path) (1 -  p_n(\path))},
\]
and
\[
 \rho_{n}(\path, \path') = \frac{\emph{Cov}(\mathds{1}_{\path \in T(\Theta, \mathscr{D}_{n})}, 
\mathds{1}_{\path' \in T(\Theta, \mathscr{D}_{n})} | \mathscr{D}_{n} )}
{\sigma_{n}( \path ) \sigma_{n}( \path')}.
\]
\end{theorem}

Let  $p_{0}\in[0, \max ~\setpemp)
\setminus \setpemp$ and $\Dn' = \Dn$. 
Before proving Theorem \ref{stability_equivalent}, we need the following two lemmas. 

\begin{lemme} \label{F_limit}
Let $F$ be the hypergeometric function. Then, for $(a, c) \in \mathbb{Z}^2$ and $\path \in \Pi$ such that $p_n(\path) > p_0$, we have
\begin{align*}
\lim \limits_{M \to \infty} \frac{F(M + a, 1, M(1-p_0) + c, 1 - p_n(\path))}
{F(M + 1, 1, M(1-p_0) + 1, 1 - p_n(\path))} = 1.
\end{align*}
\end{lemme}
\begin{lemme}
\label{p_conditional}
Let $\path' \in \Pi$. 
For all $\path \in \Pi$ such that $p_n(\path) > p_{0}$, we have
\[
\lim  \limits_{M \to \infty} \P \big( \hat{p}_{M,n}( \path') > p_{0} 
\big | \hat{p}_{M,n}(\path) > p_{0}, \mathscr{D}_{n} \big) = \mathds{1}_{p_n ( \path' ) > p_{0}}
\]
and
\[
\lim  \limits_{M \to \infty} \P \big( \hat{p}_{M,n}( \path') > p_{0} 
\big| \hat{p}_{M,n}(\path) \leq p_{0}, \mathscr{D}_{n} \big)= \mathds{1}_{\substack{p_n(\path') > p_{0} -  \rho_{n}(\path, \path')
\frac{\sigma_{n}( \path')}{\sigma_{n}(\path) } \\ \quad \times (p_{0}
 - p_n(\path))}}.
\]
Symmetrically, for all $\path \in \Pi$ such that $p_n(\path) \leq p_{0}$, we have
\begin{align*}
\lim  \limits_{M \to \infty} \P \big( \hat{p}_{M,n}( \path') > p_{0}
\big | \hat{p}_{M,n}(\path) \leq p_{0}, \mathscr{D}_{n}  \big)
& = \mathds{1}_{p_n ( \path' ) > p_{0} },\\
\lim  \limits_{M \to \infty} \P \big( \hat{p}_{M,n}( \path') > p_{0}
\big | \hat{p}_{M,n}(\path) > p_{0}, \mathscr{D}_{n}  \big)
& = \mathds{1}_{\substack{p_n(\path') > p_{0} - \rho_{n}(\path, \path')
\frac{\sigma_{n}( \path' )}{\sigma_{n}(\path) } \\ \quad \times (p_{0} -p_n(\path)) }}.
 \end{align*}
\end{lemme}
We are now in a position to prove Theorem \ref{stability_equivalent}.
\begin{proof}[Proof of Theorem \ref{stability_equivalent}]
The first statement, identity (\ref{eq_stab_M}), is proved similarly to Corollary 2, using the law of large numbers instead of Theorem \ref{theorem_consistency}. For the second statement, we first recall that, by definition, 
\begin{align*}
\hat{S}_{M_{n},n,p_{0}} & =\frac{2\underset{\path \in \Pi}{\sum}\mathds{1}_{\hat{p}_{M_{n},n}(\path)>p_{0}
\cap\hat{p}_{M_{n},n}'(\path)>p_{0}}}{\underset{\path\in\Pi}{\sum}
\mathds{1}_{\hat{p}_{M_{n},n}(\path)>p_{0}}+\mathds{1}_{\hat{p}_{M_{n},n}'(\path)>p_{0}}} \\
& =1-\frac{\protect\underset{\path\in\Pi}{\sum}\mathds{1}_{\hat{p}_{M,n}(\path)>p_{0}\cap\hat{p}_{M,n}'(\path)\leq p_{0}}+\mathds{1}_{\hat{p}_{M,n}(\path)\leq p_{0}\cap\hat{p}_{M,n}'(\path)>p_{0}}}{\protect\underset{\path\in\Pi}{\sum}\mathds{1}_{\hat{p}_{M,n}(\path)>p_{0}}+\mathds{1}_{\hat{p}_{M,n}'(\path)>p_{0}}}.
\end{align*}
Taking the expectation conditional on $\mathscr{D}_{n}$ gives
\begin{align*}
\mathbb{E}\big[\hat{S}_{M,n,p_{0}} \big| \mathscr{D}_{n}  \big] & =1-2 ~\mathbb{E}\Bigg[\frac{\underset{\path\in\Pi}{\sum}\mathds{1}_{\hat{p}_{M,n}(\path)>p_{0}\cap\hat{p}_{M,n}'(\path)\leq p_{0}}}{\underset{\path\in\Pi}{\sum}\mathds{1}_{\hat{p}_{M,n}(\path)>p_{0}}+\mathds{1}_{\hat{p}_{M,n}'(\path)>p_{0}}} \Bigg| \mathscr{D}_{n}  \Bigg]  \\
& =1-2~\mathbb{E}\Big[\frac{U_{M}}{V_{M}+V_{M}'} \big| \mathscr{D}_{n}  \Big],
\end{align*}
where
$U_{M}=\underset{\path\in\Pi}{\sum}\mathds{1}_{\hat{p}_{M,n}(\path)>p_{0}\cap\hat{p}_{M,n}'(\path)\leq p_{0}}$,
$V_{M}=\underset{\path\in\Pi}{\sum}\mathds{1}_{\hat{p}_{M,n}(\path)>p_{0}}$, and $V_{M}'=\underset{\path\in\Pi}{\sum}\mathds{1}_{\hat{p}_{M,n}'(\path)>p_{0}}$.
Note that 
\begin{align*}
\mathbb{E}[V_{M} | \mathscr{D}_{n} ] & =\underset{\path\in\Pi}{\sum}\mathbb{P}(\hat{p}_{M,n}(\path)>p_{0}
 | \mathscr{D}_{n})\underset{M\rightarrow\infty}{\longrightarrow}\underset{\path\in\Pi}{\sum}\mathds{1}_{p_n(\path)>p_{0}}, \\
 \mathbb{E}[U_{M} | \mathscr{D}_{n}] & =\underset{\path\in\Pi}{\sum}\mathbb{P}(\hat{p}_{M,n}(\path)>p_{0}
 | \mathscr{D}_{n} )\mathbb{P}(\hat{p}_{M,n}(\path)\leq p_{0} | \mathscr{D}_{n} )\underset{M\rightarrow\infty}{\longrightarrow}0.
\end{align*}
Also, 
\begin{align*}
\mathbb{E}\Big[ \frac{U_{M}}{V_{M}+V_{M}'} & \big| \mathscr{D}_{n}  \Big] \\
 = \sum_{m,m'} & \frac{1}{m+m'}\mathbb{E}[U_{M}|V_{M}=m,V_{M}'=m' , \mathscr{D}_{n}] \\ & \times \mathbb{P}(V_{M}=m | \mathscr{D}_{n} )\mathbb{P}(V_{M}'=m' | \mathscr{D}_{n} )\\
 =\sum_{m,m'} & \frac{1}{m+m'}\mathbb{E}\Big[\underset{\path\in\Pi}{\sum}\mathds{1}_{\hat{p}_{M,n}(\path)>p_{0}\cap\hat{p}_{M,n}'(\path)\leq p_{0}}\big|V_{M}=m,V_{M}'=m' , \mathscr{D}_{n} \Big] \\ & \times \mathbb{P}(V_{M}=m | \mathscr{D}_{n})\mathbb{P}(V_{M}'=m' | \mathscr{D}_{n} ) \\
 =\sum_{m,m'}&\frac{1}{m+m'}\underset{\path\in\Pi}{\sum}\mathbb{P}(\hat{p}_{M,n}(\path)>p_{0}|V_{M}=m , \mathscr{D}_{n} ) \\ \times \mathbb{P}&(\hat{p}_{M,n}'(\path)\leq p_{0}|V_{M}'=m' , \mathscr{D}_{n} ) \mathbb{P}(V_{M}=m | \mathscr{D}_{n})\mathbb{P}(V_{M}'=m' | \mathscr{D}_{n} )\\
=\sum_{m,m'}&\frac{1}{m+m'}\underset{\path\in\Pi}{\sum}\mathbb{P}(\hat{p}_{M,n}(\path)>p_{0},V_{M}=m | \mathscr{D}_{n} ) \\ & \times \mathbb{P}(\hat{p}_{M,n}(\path)\leq p_{0},V_{M}'=m' | \mathscr{D}_{n} )\\
 =\underset{\path\in\Pi}{\sum}&\mathbb{P}(\hat{p}_{M,n}(\path)>p_{0} | \mathscr{D}_{n})
\mathbb{P}(\hat{p}_{M,n}(\path)\leq p_{0} | \mathscr{D}_{n} )\\
&\times \Big[\sum_{m,m'}\frac{1}{m+m'}\mathbb{P}(V_{M}=m|\hat{p}_{M,n}(\path)>p_{0} , \mathscr{D}_{n} ) \\ & \quad \times \mathbb{P}(V_{M}'=m'|\hat{p}_{M,n}(\path)\leq p_{0} , \mathscr{D}_{n})\Big].
\end{align*}
For all $\path\in\Pi$,
\begin{align*}
& \P (\hat{p}_{M,n}(\path)>p_{0} | \mathscr{D}_{n} )
 \P (\hat{p}_{M,n}(\path)\leq p_{0}| \mathscr{D}_{n} \big)\\
& \quad = \Phi( M p_{0}, M, p_n(\path)) ( 1 - \Phi( M p_{0}, M, p_n(\path) ) ),
\end{align*}
where $\Phi$ is the cdf of the binomial distribution. As a direct consequence of Lemma \ref{p_conditional}, 
\begin{align*}
&\lim \limits_{M \to \infty} \sum_{m, m'}  \frac{1}{m+m'}\mathbb{P}(V_{M}=m
|\hat{p}_{M,n}(\path)>p_{0}, \mathscr{D}_{n}) \\ 
& \qquad \times
\mathbb{P}(V_{M}=m'|\hat{p}_{M,n}(\path)\leq p_{0}, \mathscr{D}_{n} ) \\
& \quad = \frac{1}{\sum\limits_{\path' \in \Pi} \mathds{1}_{p_n ( \path' ) > p_{0}}
 + \mathds{1}_{p_n(\path') + \rho_{n}(\path, \path')
\frac{\sigma_{n}( \path' )}{\sigma_{n}(\path) } (p_{0} - p_n(\path)) > p_{0}}},
\end{align*}
which yields
\begin{align*}
& 1 - \mathbb{E} [ \hat{S}_{M,n,p_{0}} | \mathscr{D}_{n} ]\\ & \underset{M \to \infty}{\sim} \sum\limits_{\path \in \Pi}
\frac{2 \Phi( M p_{0}, M, p_n(\path)) ( 1 - \Phi( M p_{0}, M, p_n(\path) ) )}
{\sum\limits_{\path' \in \Pi} \mathds{1}_{\hat{p}_{n}(\path') > p_{0}} +
\mathds{1}_{p_n(\path') + \rho_{n}(\path, \path')
\frac{\sigma_{n}( \path' )}{\sigma_{n}(\path) } (p_{0} - p_n(\path)) > p_{0} }}.
\end{align*}
This is the desired result.
\end{proof}

\subsection{Proof of intermediate lemmas}

\begin{proof}[Proof of lemma \ref{F_limit}]
\cite{cvitkovic2017asymptotic} provides an asymptotic expansion of the hypergeometric function $F$ in the case where the first and third parameters
goes to infinity with a constant ratio. For $a, c, z, \varepsilon \in \R$, $b \notin \mathbb{Z} \setminus \mathbb{N}$, 
such that $\varepsilon > 1$, and $z \varepsilon < 1$,  \cite{cvitkovic2017asymptotic} gives in the section 2.2.2 (end of page 10)
\begin{align} \label{eq_F_expansion}
F(a + \varepsilon \lambda, b, c + \lambda, z)  \underset{|\lambda| \to \infty}{\sim} \frac{1}{(1 - \varepsilon z)^b}.
\end{align}
We can then derive the limit of the following ratio
\begin{align} \label{eq_F_ratio}
\lim \limits_{|\lambda| \to \infty} \frac{F(a + \varepsilon \lambda, b, c + \lambda, z)}{F(1 + \varepsilon \lambda, b, 1 + \lambda, z)} = 1
\end{align}
We use \ref{eq_F_ratio} in the specific case where $b = 1$, $a, c \in \mathbb{Z}$, $\varepsilon = \frac{1}{1 - p_0} > 1$, 
 $z = 1 - p_n(\path)$  for $\path \in \Pi$ such that $p_n(\path) > p_0$ (and then $z \varepsilon < 1$),
and $\lambda = M (1 - p_0)$, if follows that
\begin{align} \label{eq_FM_ratio}
\lim \limits_{M \to \infty} \frac{F(M + a , 1, M (1 - p_0) + c ,  1 - p_n(\path) )}
{F(M + 1, 1, M (1 - p_0) + 1,  1 - p_n(\path) )} = 1
\end{align}
\end{proof}

\begin{proof}[Proof of lemma \ref{p_conditional}]
Fix $\mathscr{D}_{n}$.  Let $\path', \path \in \Pi$. 
In what follows, when there is no ambiguity, we will replace $T(\Theta, \mathscr{D}_{n})$ by $T_n(\Theta)$ to lighten notations.

\paragraph{Case 1: $p_n(\path) > p_{0}$} 
\begin{align}
& \E\big[ \hat{p}_{M,n}\big( \path^{'} \big) | \hat{p}_{M,n}(\path) \leq p_{0}, \mathscr{D}_{n}  \big] \nonumber \\
 = & \E\big[ \frac{1}{M}\sum_{l=1}^{M}\mathds{1}_{\path^{'}\in T_n(\Theta_{l})}
 ~\big|~ \hat{p}_{M,n}(\path) \leq p_{0}, \mathscr{D}_{n}  \big] \nonumber \\
 = & \P \big( \path^{'} \in T_n(\Theta_{1})
 | \path \in T_n(\Theta_{1}), \hat{p}_{M,n}(\path) \leq p_{0}, \mathscr{D}_{n}   \big) \nonumber \\
&  \times \P \big( \path \in T_n(\Theta_{1})  | \hat{p}_{M,n}(\path) \leq p_{0}, \mathscr{D}_{n}  \big) \nonumber \\
 & +  \P \big( \path^{'} \in T_n(\Theta_{1})
 | \path \notin T_n(\Theta_{1}), \hat{p}_{M,n}(\path) \leq p_{0}, \mathscr{D}_{n}  \big) \nonumber \\
 & \quad \times \big( 1 - \P \big( \path \in T_n(\Theta_{1})
 | \hat{p}_{M,n}(\path) \leq p_{0}, \mathscr{D}_{n}  \big) \big) \nonumber \\
  = & \P \big( \path^{'} \in T_n(\Theta_{1})
 | \path \in T_n(\Theta_{1}), \mathscr{D}_{n}  \big)
\P \big( \path \in T_n(\Theta_{1})
 | \hat{p}_{M,n}(\path) \leq p_{0}, \mathscr{D}_{n}  \big) \nonumber \\
 & +   \P \big( \path^{'} \in T_n(\Theta_{1})
 | \path \notin T_n(\Theta_{1}), \mathscr{D}_{n}  \big) \nonumber \\
& \quad \times \big( 1 - \P \big( \path \in T_n(\Theta_{1})
 | \hat{p}_{M,n}(\path) \leq p_{0}, \mathscr{D}_{n}  \big) \big). \label{proof_thm2_eq1}
\end{align}
since
\begin{align}
 \P \big( \path^{'} \in T_n(\Theta_{1}) | \path \in T_n(\Theta_{1}), \hat{p}_{M,n}(\path) \leq p_{0}, \mathscr{D}_{n}   \big) \nonumber \\
= \P \big( \path^{'} \in T_n(\Theta_{1}) | \path \in T_n(\Theta_{1}), \mathscr{D}_{n}  \big). \label{proof_thm2_eq2}
\end{align}
because, conditional on $\mathscr{D}_{n}$, the events  $\path^{'}\in T_n(\Theta_{1}), \hdots, \path^{'}\in T_n(\Theta_{M})$ are independent. 
We can rewrite, 
\begin{align}
& \P \big( \path^{'} \in T_n(\Theta_{1}) | \path \notin T_n(\Theta_{1}), \mathscr{D}_{n}  \big) \nonumber \\
& = \frac{\P \big( \path^{'} \in T_n(\Theta_{1}), \path \notin T_n(\Theta_{1})
| \mathscr{D}_{n}  \big)} {1 - p_n(\path)} \nonumber \\
& = \frac{\big( 1 - \P \big( \path \in T_n(\Theta_{1}) | \path^{'} \in T_n(\Theta_{1}), \mathscr{D}_{n}  \big) \big)
p_n\big( \path^{'} \big)}{1 - p_n(\path)} \nonumber \\
& = \frac{p_n\big( \path^{'} \big)}{1 - p_n(\path)}
- \frac{p_n(\path)}{1 - p_n(\path)}
\P \big( \path^{'}  \in T_n(\Theta_{1}) | \path \in T_n(\Theta_{1}), \mathscr{D}_{n}  \big), \label{proof_thm2_eq4}
\end{align}
yielding, using equation (\ref{proof_thm2_eq1}), 
\begin{align} \label{eq_mean_temp_1_0}
 \E \big[& \hat{p}_{M,n}\big( \path^{'} \big) | \hat{p}_{M,n}(\path) \leq p_{0}, \mathscr{D}_{n} \big] \\
 = &\P \big( \path^{'} \in T_n(\Theta_{1})
 | \path \in T_n(\Theta_{1}), \mathscr{D}_{n}  \big)  \Big( \frac{\P \big( \path \in T_n(\Theta_{1})
 | \hat{p}_{M,n}(\path) \leq p_{0}, \mathscr{D}_{n}  \big)}{1 -  p_n(\path)}  \nonumber \\
&-  \frac{p_n(\path)}{1 - p_n(\path)}\Big) + \frac{p_n\big( \path^{'} \big)}{1 - p_n(\path)}
\big( 1 - \P \big( \path \in T_n(\Theta_{1})  | \hat{p}_{M,n}(\path) \leq p_{0}, \mathscr{D}_{n}  \big)  \big). \nonumber
\end{align}
Besides, by definition of the correlation
\begin{align*}
\rho_{n}\big(\path, \path^{'}\big) & = \frac{\textrm{Cov}\big(\mathds{1}_{\path \in T_n(\Theta)}, 
\mathds{1}_{\path^{'} \in T_n(\Theta)} | \mathscr{D}_{n}  \big)}
{\sigma_{n}(\path)\sigma_{n}\big( \path^{'} \big)},
\end{align*}
simple calculations show that 
\begin{align}
\label{eq_mean_temp_2}
 \P & \big( \path^{'} \in T_n(\Theta_{1}) | \path \in T_n(\Theta_{1}),\mathscr{D}_{n}  \big) \nonumber \\
& =p_n\big( \path^{'} \big) +  \rho_{n}\big(\path, \path^{'}\big)
\sqrt{\frac{p_n\big( \path^{'} \big)}{p_n(\path)}
\big(1 - p_n(\path)\big)\big(1 -p_n\big( \path^{'} \big) \big)},
\end{align}
which, together with equation (\ref{eq_mean_temp_1_0}) leads to, 
\begin{align} \label{eq_mean_temp_1}
 \E &\big[ \hat{p}_{M,n}\big( \path^{'} \big) | \hat{p}_{M,n}(\path) \leq p_{0}, \mathscr{D}_{n} \big] \\
&= p_n(\path^{'}) + \rho_{n}\big(\path, \path^{'}\big)
\frac{\sigma_{n}\big( \path^{'} \big)}{\sigma_{n}(\path) }
 \big(\P \big( \path \in T_n(\Theta_{1})
 | \hat{p}_{M,n}(\path) \leq p_{0},\mathscr{D}_{n}  \big) \nonumber \\ & \hspace*{5.5cm} - p_n(\path)\big). \nonumber  
\end{align}
Regarding the probability in the right-hand side of equation (\ref{eq_mean_temp_1}), we have 
\begin{align*}
\P& \big( \path \in T_n(\Theta_{1})
 | \hat{p}_{M,n}(\path) \leq p_{0},\mathscr{D}_{n}  \big) \\
& = p_n\big(\path \big) \frac{\P \big( \hat{p}_{M,n}(\path) \leq p_{0}
  | \path \in T_n(\Theta_{1}),\mathscr{D}_{n}  \big)}
{\P \big( \hat{p}_{M,n}(\path) \leq p_{0} | \mathscr{D}_{n}  \big)} \\
& =  p_n\big(\path \big) \frac{\P \big( (M-1) \hat{p}_{M-1,n}(\path) \leq M p_{0} - 1 | \mathscr{D}_{n}   \big)}
{\P \big( M \hat{p}_{M,n}(\path) \leq M p_{0} | \mathscr{D}_{n}   \big)} \\
& = p_n(\path) \frac{\Phi \big(Mp_0 - 1, M - 1, p_n(\path)\big)}
{\Phi \big(Mp_0, M, p_n(\path)\big)}.
\end{align*}
Using standard formulas, $\Phi$ can be expressed with the incomplete beta function, 
\begin{align*}
\Phi (k, M, p) = I_{1 - p}(M - k, k + 1) = \frac{B_{1-p}(M - k, k + 1)}{B(M - k, k + 1)},
\end{align*}
and the regularized beta function is related to the hypergeometric function $F$, for $a > 0$, $b > 0$, and $p \in [0,1]$
\citep{olver2010nist},
\begin{align*}
B_{1-p}(a,b) = \frac{(1-p)^ap^b}{a} F(a + b, 1, a + 1, 1 - p).
\end{align*}
Then, we can express the cdf of the binomial distribution using the hypergeometric function, and it follows
\begin{align} \label{eq_binom_ratio_0}
\P \big( \path& \in T_n(\Theta_{1})
 | \hat{p}_{M,n}(\path) \leq p_{0},\mathscr{D}_{n}  \big) \\
&= p_{0} \frac{F(M, 1, M (1-p_{0}) + 1, 1 - \hat{p}_{n}\big(\path \big))}{F(M + 1, 1, M (1-p_{0}) + 1, 1 - \hat{p}_{n}\big(\path \big))} \nonumber
\end{align}
According to Lemma \ref{F_limit},
\begin{align} \label{eq_binom_ratio_5}
\lim \limits_{M \to \infty} \frac{F(M, 1, M (1-p_{0}) + 1, 1 - p_n(\path))}
{F(M + 1, 1, M (1-p_{0}) + 1, 1 - p_n(\path))} = 1.
\end{align}
Consequently,
\begin{align} \label{eq_binom_ratio_1}
\lim \limits_{M \to \infty} \P \big( \path \in T(\Theta_{1},\mathscr{D}_{n})
 | \hat{p}_{M,n}(\path) \leq p_{0},\mathscr{D}_{n}  \big) = p_0,
\end{align}
and using this limiting result with equation (\ref{eq_mean_temp_1}) yields,  
\begin{align}
\label{eq_conditional_mean}
\lim \limits_{M \to \infty}  \E\big[ \hat{p}_{M,n}\big( \path^{'} \big)
 | \hat{p}_{M,n}(\path) \leq p_{0},\mathscr{D}_{n}  \big]
= p_n(\path^{'}) + &\rho_{n}\big(\path, \path^{'}\big)
\frac{\sigma_{n}\big( \path^{'} \big)}{\sigma_{n}(\path) } \\
 & \times \big(p_0 - p_n(\path)\big). \nonumber
\end{align}

Regarding the conditional variance, 
\begin{align*}
\mathbb{V} \big[ \hat{p}_{M,n}\big( \path^{'} \big) |& \hat{p}_{M,n}(\path) \leq p_{0},\mathscr{D}_{n}  \big]\\
& = \mathbb{V} \big[ \frac{1}{M}\sum_{l=1}^{M}\mathds{1}_{\path^{'}\in T_n(\Theta_{l})}
 \big| \hat{p}_{M,n}(\path) \leq p_{0},\mathscr{D}_{n}  \big] \\
\end{align*}
\begin{align*}
\mathbb{V} \big[ \hat{p}_{M,n}\big( \path^{'} \big) |& \hat{p}_{M,n}(\path) \leq p_{0},\mathscr{D}_{n}  \big] \\
& = \frac{1}{M} \mathbb{V} \big[ \mathds{1}_{\path^{'} \in T(\Theta_{1},\mathscr{D}_{n})}
 | \hat{p}_{M,n}(\path) \leq p_{0},\mathscr{D}_{n}  \big]\\
& \quad + (1 - \frac{1}{M})\textrm{Cov}(\mathds{1}_{\path^{'} \in T_n(\Theta_{1})},
\mathds{1}_{\path^{'} \in T_n(\Theta_{2})} | \hat{p}_{M,n}(\path) \leq p_{0},\mathscr{D}_{n})\\
& \leq \frac{1}{M} + C_M
 \end{align*}
where
\begin{align*}
C_M & = \textrm{Cov}(\mathds{1}_{\path^{'} \in T_n(\Theta_{1})},
\mathds{1}_{\path^{'} \in T_n(\Theta_{2})} | \hat{p}_{M,n}(\path) \leq p_{0},\mathscr{D}_{n}) \\
& = \P(\path^{'} \in T_n(\Theta_{1}), \path^{'} \in T_n(\Theta_{2})
 | \hat{p}_{M,n}(\path) \leq p_{0},\mathscr{D}_{n}) \\
& \quad - \P(\path^{'} \in T_n(\Theta_{1}) | \hat{p}_{M,n}(\path) \leq p_{0},\mathscr{D}_{n}) \\
& \qquad  \times \P(\path^{'} \in T_n(\Theta_{2}) | \hat{p}_{M,n}(\path) \leq p_{0},\mathscr{D}_{n}) \\
\end{align*}

Then, we follow the same reasoning that leads to equation (\ref{eq_binom_ratio_1}). We can fully expand $C_M$ using Bayes formula, depending whether
$\path \in T_n(\Theta_{1})$ or $\path \in T_n(\Theta_{2})$. 
Note that, since all the trees are independent conditional on $\mathscr{D}_{n}$, we can reduce the conditioning event $\big\{\path \in T_n(\Theta_{1}),  \path \in T_n(\Theta_{2}), \hat{p}_{M,n}(\path) \leq p_{0}, \mathscr{D}_{n}\big\}$
 to $\big\{ \path \in T_n(\Theta_{1}),  \path \in T_n(\Theta_{2}), \mathscr{D}_{n}\big\}$, then
\begin{align*}
C_M  = \P( &\path^{'} \in T_n(\Theta_{1}), \path^{'} \in T_n(\Theta_{2})
 | \path \in T_n(\Theta_{1}), \path \in T_n(\Theta_{2}),\mathscr{D}_{n}) \\
& \times \P(\path \in T_n(\Theta_{1}), \path \in T_n(\Theta_{2})
 | \hat{p}_{M,n}(\path) \leq p_{0},\mathscr{D}_{n}) \\
& - (\P(\path^{'} \in T_n(\Theta_{1}) | \path \in T_n(\Theta_{1}) ,\mathscr{D}_{n}) \\
& \qquad \times \P(\path \in T_n(\Theta_{1}) | \hat{p}_{M,n}(\path) \leq p_{0},\mathscr{D}_{n}))^2 \\[5pt]
 + &2[ \P( \path^{'} \in T_n(\Theta_{1}), \path^{'} \in T_n(\Theta_{2})
 | \path \in T_n(\Theta_{1}), \path \notin T_n(\Theta_{2}),\mathscr{D}_{n}) \\
& \times \P(\path \in T_n(\Theta_{1}), \path \notin T_n(\Theta_{2})
 | \hat{p}_{M,n}(\path) \leq p_{0},\mathscr{D}_{n}) \\
& - \P(\path^{'} \in T_n(\Theta_{1}) | \path \in T_n(\Theta_{1}) ,\mathscr{D}_{n}) \\
& \qquad \times \P(\path \in T_n(\Theta_{1}) | \hat{p}_{M,n}(\path) \leq p_{0},\mathscr{D}_{n}) \\
& \qquad \times \P(\path^{'} \in T_n(\Theta_{1}) | \path \notin T_n(\Theta_{1}) ,\mathscr{D}_{n}) \\
& \qquad \times \P(\path \notin T_n(\Theta_{1}) | \hat{p}_{M,n}(\path) \leq p_{0},\mathscr{D}_{n}) ] \\[5pt]
+ &\P( \path^{'} \in T_n(\Theta_{1}), \path^{'} \in T_n(\Theta_{2})
 | \path \notin T_n(\Theta_{1}), \path \notin T_n(\Theta_{2}),\mathscr{D}_{n}) \\
& \times \P(\path \notin T_n(\Theta_{1}), \path \notin T_n(\Theta_{2})
 | \hat{p}_{M,n}(\path) \leq p_{0},\mathscr{D}_{n}) \\
& - (\P(\path^{'} \in T_n(\Theta_{1}) | \path \notin T_n(\Theta_{1}) ,\mathscr{D}_{n}) \\
& \qquad \times \P(\path \notin T_n(\Theta_{1}) | \hat{p}_{M,n}(\path) \leq p_{0},\mathscr{D}_{n}))^2 \\
\end{align*}
Conditional on $\mathscr{D}_{n}$,  $T_n(\Theta_{1})$ and $ T_n(\Theta_{2})$ are independent, then
\begin{align*}
\P(& \path^{'} \in T_n(\Theta_{1}), \path^{'} \in T_n(\Theta_{2})
 | \path \in T_n(\Theta_{1}), \path \in T_n(\Theta_{2}),\mathscr{D}_{n}) \\
&= \frac{\P( \path^{'} \in T_n(\Theta_{1}), \path^{'} \in T_n(\Theta_{2}), \path \in T_n(\Theta_{1}), \path \in T_n(\Theta_{2})|\mathscr{D}_{n})}{\P(\path \in T_n(\Theta_{1}), \path \in T_n(\Theta_{2}) | \mathscr{D}_{n})}\\
&= \frac{\P( \path^{'} \in T_n(\Theta_{1}), \path \in T_n(\Theta_{1})|\mathscr{D}_{n}) \P(\path^{'} \in T_n(\Theta_{2}), \path \in T_n(\Theta_{2})|\mathscr{D}_{n})}{\P(\path \in T_n(\Theta_{1})|\mathscr{D}_{n}) \P( \path \in T_n(\Theta_{2})|\mathscr{D}_{n})} \\
&= \P( \path^{'} \in T_n(\Theta_{1}) | \path \in T_n(\Theta_{1}),\mathscr{D}_{n}) \P(\path^{'} \in T_n(\Theta_{2}) | \path \in T_n(\Theta_{2}),\mathscr{D}_{n})\\
&= \P( \path^{'} \in T_n(\Theta_{1}) | \path \in T_n(\Theta_{1}),\mathscr{D}_{n})^2
\end{align*}
we can rewrite $C_M$
\begin{align*}
 C_M  & = \P(\path^{'} \in T_n(\Theta_{1}) | \path \in T_n(\Theta_{1}) ,\mathscr{D}_{n})^2 \times \Delta_{M,1} \\
 & \quad + 2 \P(\path^{'} \in T_n(\Theta_{1}) | \path \in T_n(\Theta_{1}) ,\mathscr{D}_{n}) \\
& \qquad \times \P(\path^{'} \in T_n(\Theta_{1}) | \path \notin T_n(\Theta_{1}) ,\mathscr{D}_{n})  \times \Delta_{M,2} \\
 &  \quad + \P(\path^{'} \in T_n(\Theta_{1}) | \path \notin T_n(\Theta_{1}) ,\mathscr{D}_{n})^2  \times \Delta_{M,3},
\end{align*}
where
\begin{align*}
\Delta_{M,1} & = \P(\path \in T_n(\Theta_{1}), \path \in T_n(\Theta_{2})
 | \hat{p}_{M,n}(\path) \leq p_{0},\mathscr{D}_{n}) \\
 & \quad - \P(\path \in T_n(\Theta_{1}) | \hat{p}_{M,n}(\path) \leq p_{0},\mathscr{D}_{n})^2, \\
 \Delta_{M,2} & = \P(\path \in T_n(\Theta_{1}), \path \notin T_n(\Theta_{2})
 | \hat{p}_{M,n}(\path) \leq p_{0},\mathscr{D}_{n}) \\
& \quad - \P(\path \in T_n(\Theta_{1}) | \hat{p}_{M,n}(\path) \leq p_{0},\mathscr{D}_{n}) \\
& \qquad (1 -  \P(\path \in T_n(\Theta_{1}) | \hat{p}_{M,n}(\path) \leq p_{0},\mathscr{D}_{n})),\\
\Delta_{M,3} & = \P(\path \notin T_n(\Theta_{1}), \path \notin T_n(\Theta_{2})
 | \hat{p}_{M,n}(\path) \leq p_{0},\mathscr{D}_{n})\\
 & \quad - \P(\path \notin T_n(\Theta_{1}) | \hat{p}_{M,n}(\path) \leq p_{0},\mathscr{D}_{n})^2.
\end{align*}
We first consider the term 
\begin{align*}
\Delta_{M,1} & = \P(\path \in T_n(\Theta_{1}), \path \in T_n(\Theta_{2})
 | \hat{p}_{M,n}(\path) \leq p_{0},\mathscr{D}_{n}) \\ 
 & \quad - \P(\path \in T_n(\Theta_{1}) | \hat{p}_{M,n}(\path) \leq p_{0},\mathscr{D}_{n})^2
\end{align*}
Equation (\ref{eq_binom_ratio_1}) directly gives,
\begin{equation} \label{eq_binom_ratio_2}
\lim \limits_{M \to \infty} \P(\path \in T_n(\Theta_{1}) | \hat{p}_{M,n}(\path) \leq p_{0},\mathscr{D}_{n})^2
 = p_0^2.
\end{equation}
On the other hand
\begin{align*}
\P&(\path \in T_n(\Theta_{1}), \path \in T_n(\Theta_{2})
 | \hat{p}_{M,n}(\path) \leq p_{0},\mathscr{D}_{n}) \\
&= p_n(\path)^2 \frac{\P( \hat{p}_{M,n}(\path) \leq p_{0} | \path \in T_n(\Theta_{1}), \path \in T_n(\Theta_{2}),\mathscr{D}_{n})}
{\P( \hat{p}_{M,n}(\path) \leq p_{0} | \mathscr{D}_{n})} \\
& = p_n(\path)^2 \frac{\Phi \big(Mp_0 - 2, M - 2, p_n(\path)\big)}
{\Phi \big(Mp_0, M, p_n(\path)\big)}.
\end{align*}
Again, as for equation (\ref{eq_binom_ratio_0}), we can express the cdf of the binomial distribution using the hypergeometric function $F$
\begin{align} \label{eq_binom_ratio_4} 
\P&(\path \in T_n(\Theta_{1}), \path \in T_n(\Theta_{2})
 | \hat{p}_{M,n}(\path) \leq p_{0},\mathscr{D}_{n}) \nonumber \\
& = p_0^2 \Big(1 + \frac{p_0 - 1}{p_0 (M-1)}\Big)  \frac{F(M-1, 1, M(1-p_0) + 1, 1 - p_n(\path))}{F(M+1, 1, M(1-p_0) + 1, 1 - p_n(\path))},
\end{align}
and from Lemma \ref{F_limit},
\begin{align*}
\lim \limits_{M \to \infty} \frac{F(M-1, 1, M(1-p_0) + 1, 1 - p_n(\path))}
{F(M+1, 1, M(1-p_0) + 1, 1 - p_n(\path))} = 1,
\end{align*}
that is 
\begin{align} \label{eq_binom_ratio_3}
\lim \limits_{M \to \infty} \P(\path \in T_n(\Theta_{1}), \path \in T_n(\Theta_{2})
 | \hat{p}_{M,n}(\path) \leq p_{0},\mathscr{D}_{n}) = p_0^2.
\end{align}
Using equations (\ref{eq_binom_ratio_2}) and (\ref{eq_binom_ratio_3}), we conclude
\begin{align*}
\lim \limits_{M \to \infty} \Delta_{M,1} = 0.
\end{align*}
We follow the same reasoning for $\Delta_{M,3}$, equation (\ref{eq_binom_ratio_1}) gives
\begin{equation}
\lim \limits_{M \to \infty} \P(\path \notin T_n(\Theta_{1}) | \hat{p}_{M,n}(\path) \leq p_{0},\mathscr{D}_{n})^2
 = (1 - p_0)^2.
\end{equation}
On the other hand,
\begin{align*}
\P&(\path \notin T_n(\Theta_{1}), \path \notin T_n(\Theta_{2})
 | \hat{p}_{M,n}(\path) \leq p_{0},\mathscr{D}_{n}) \\ 
 &= (1 - p_0)^2 \Big( 1 - \frac{p_0}{M-1} \Big) \frac{F(M-1, 1, M(1-p_0) -1 1, 1 - p_n(\path))}
{F(M+1, 1, M(1-p_0) + 1, 1 - p_n(\path))}
\end{align*}
From Lemma \ref{F_limit},
\begin{align}  \label{eq_binom_ratio_5}
\lim \limits_{M \to \infty} \P(\path \notin T_n(\Theta_{1}), \path \notin T_n(\Theta_{2})
 | \hat{p}_{M,n}(\path) \leq p_{0},\mathscr{D}_{n}) = (1 - p_0)^2 
\end{align}
And finally $\lim \limits_{M \to \infty} \Delta_{M,3} = 0$. The term $\Delta_{M,2}$ can be treated in a similar way, since
equation (\ref{eq_binom_ratio_1}) gives
\begin{align*}
\lim \limits_{M \to \infty}& \P(\path \in T_n(\Theta_{1}) | \hat{p}_{M,n}(\path) \leq p_{0},\mathscr{D}_{n})
\P(\path \notin T_n(\Theta_{1}) | \hat{p}_{M,n}(\path) \leq p_{0},\mathscr{D}_{n})\\
&  = p_0(1 - p_0).    
\end{align*}
Simple identity shows 
\begin{align*}
& \P(\path \in T_n(\Theta_{1}), \path \notin T_n(\Theta_{2})
 | \hat{p}_{M,n}(\path) \leq p_{0},\mathscr{D}_{n}) \\
 & = \frac{1}{2} \Big( 1 -  \P(\path \notin T_n(\Theta_{1}), \path \notin T_n(\Theta_{2})
 | \hat{p}_{M,n}(\path) \leq p_{0},\mathscr{D}_{n}) \\
& \quad - \P(\path \in T_n(\Theta_{1}), \path \in T_n(\Theta_{2})
 | \hat{p}_{M,n}(\path) \leq p_{0},\mathscr{D}_{n}) \Big).
\end{align*}
Taking the limit of the previous equation and using equations (\ref{eq_binom_ratio_3}) and (\ref{eq_binom_ratio_5}), we get
\begin{align}
\lim \limits_{M \to \infty}& \P(\path \in T_n(\Theta_{1}), \path \notin T_n(\Theta_{2})
 | \hat{p}_{M,n}(\path) \leq p_{0},\mathscr{D}_{n}) \nonumber\\  
 & = p_0 ( 1- p_0). \label{equation_proof_bis_bis}
\end{align}
Using (\ref{eq_binom_ratio_1}) and  (\ref{equation_proof_bis_bis}), $\lim \limits_{M \to \infty} \Delta_{M,2}= 0$. Since $\Delta_{M,1}, \Delta_{M,2}, \Delta_{M,3} \to 0$, we obtain $\lim \limits_{M \to \infty} C_M = 0$, that is, 
\begin{align}
\label{eq_conditional_variance}
\lim \limits_{M \to \infty}
\mathbb{V} \big[ \hat{p}_{M,n}\big( \path^{'} \big) | \hat{p}_{M,n}(\path) \leq p_{0},\mathscr{D}_{n}  \big]
= 0.
\end{align}

Finally combining equations (\ref{eq_conditional_mean}) and (\ref{eq_conditional_variance}),
\begin{align*}
\lim  \limits_{M \to \infty} \P \big( \hat{p}_{M,n}\big( \path^{'} \big) > p_{0}
 | \hat{p}_{M,n}(\path) \leq p_{0},\mathscr{D}_{n} \big) \\
= \mathds{1}_{p_n(\path^{'}) + \rho_{n}(\path, \path^{'})
\frac{\sigma_{n}( \path^{'} )}{\sigma_{n}(\path) }
 (p_{0} - p_n(\path)) > p_{0}} 
\end{align*}

\paragraph{Case 2: $p_n(\path) \leq p_{0}$}
By the law of large numbers, $\lim \limits_{M \to \infty}  \hat{p}_{M,n}\big( \path \big) = p_n(\path) $ in probability, 
and consequently  $\lim \limits_{M \to \infty}  \P \big(\hat{p}_{M,n}\big( \path \big) \leq p_0 \big) = 1$.
Additionally, we can simply write
\begin{align*}
 \P \big( &\hat{p}_{M,n}\big( \path^{'} \big) > p_{0}
 | \hat{p}_{M,n}(\path) \leq p_{0},\mathscr{D}_{n} \big) \\
&=  \frac{\P \big( \hat{p}_{M,n}\big( \path^{'} \big) > p_{0}, \hat{p}_{M,n}(\path) \leq p_{0} | \mathscr{D}_{n} \big)}
{\P \big(\hat{p}_{M,n}(\path) \leq p_{0},\mathscr{D}_{n} \big)}
\end{align*} 
Again, by the law of large numbers, $\lim \limits_{M \to \infty}  \hat{p}_{M,n}\big(\path'\big) = p_n(\path') $ in probability.
Then, if $ p_n(\path') > p_0$, $\lim \limits_{M \to \infty}  \P \big(\hat{p}_{M,n}\big( \path' \big) > p_0 \big) = 1$,
and it follows that $\lim \limits_{M \to \infty} \P \big( \hat{p}_{M,n}( \path' ) > p_{0}, \hat{p}_{M,n}(\path) \leq p_{0} | \mathscr{D}_{n} \big) = 1$.
 If $ p_n(\path') \leq p_0$, $\lim \limits_{M \to \infty}  \P \big(\hat{p}_{M,n}\big( \path' \big) > p_0 \big) = 0$,
and consequently $\lim \limits_{M \to \infty} \P \big( \hat{p}_{M,n}\big( \path^{'} \big) > p_{0}, \hat{p}_{M,n}(\path) \leq p_{0} | \mathscr{D}_{n} \big) = 0$.
This can be compacted under the form
\begin{align*}
 \lim \limits_{M \to \infty} \P \big( \hat{p}_{M,n}\big( \path^{'} \big) > p_{0} | \hat{p}_{M,n}(\path) \leq p_{0},\mathscr{D}_{n} \big) = \mathds{1}_{p_n(\path') > p_0}.
\end{align*}

The proof for the case $\P \big[ \hat{p}_{M,n}\big( \path^{'} \big) > p_0 | \hat{p}_{M,n}(\path) > p_{0},\mathscr{D}_{n}  \big]$
is similar.
\end{proof}

\end{document}